\tikzstyle{player1}=[state,draw,rectangle,align=center]
\tikzstyle{player2}=[state,draw,rounded rectangle,align=center]
\tikzstyle{widget}=[draw=red,rectangle, rounded rectangle=10pt,dashed,minimum size=6mm,fill=yellow]
\tikzset{every loop/.style={looseness=7}}
\let\epsilon\varepsilon
\newcommand*{\fun}[3]{#1 \colon #2 \to{} #3 }
\newcommand{\calM}{\mathcal{M}}
\newcommand{\pmin}{\pi_{\mathrm{min}}}
\newcommand{\sth}{\mathrel{\mid}}
\newcommand{\dist}{\ensuremath{p} }
\newcommand{\prob}{\ensuremath{\delta} }
\newcommand{\dists}{\ensuremath{\mathcal{D}} }
\newcommand{\supp}{\ensuremath{\mathsf{Supp}} }
\newcommand{\dom}{\ensuremath{\mathsf{Dom}} }
\newcommand{\struct}{\ensuremath{\mathsf{struct}} }
\newcommand{\fn}{\ensuremath{\mathsf{f}} }
\newcommand{\rvprob}{\mathbb{P}}
\newcommand{\rvexpect}[1]{\mathbb{E}\left[#1\right]}
\newcommand{\eclose}[2]{\mathrel{\sim^{#1}_{#2}}}
\newcommand{\nat}{\ensuremath{\mathbb{N}} }
\newcommand{\seq}{\ensuremath{\mathbb{S}} }
\newcommand{\N}{\mbox{I$\!$N}}
\newcommand{\realpos}{\mathbb{R}_{\geq 0}}
\newcommand{\zug}[1]{\langle #1  \rangle}
\newcommand{\stam}[1]{}
\newcommand{\rat}{\ensuremath{\mathbb{Q}} }
\newcommand{\C}{\mathcal{C}}
\newcommand{\A}{\mathcal{A}}
\newcommand{\D}{\mathcal{D}}
\newcommand{\playerOne}{\Box} 
\newcommand{\playerTwo}{\ocircle} 
\newcommand{\prefixes}[1]{\ensuremath{\mathsf{Prefs}(#1)} }
\newcommand{\prefixesPlayer}[3]{\ensuremath{\mathsf{Prefs}_{#2}(#3)} }
\newcommand{\prefix}{\ensuremath{\rho} }
\newcommand{\graph}{\ensuremath{\mathcal{G}} }
\newcommand{\last}[1]{\ensuremath{\mathsf{Last}(#1)} }
\newcommand{\outcomesMC}[2]{\ensuremath{\mathsf{Outs}_{#1}(#2)} }
\newcommand{\initV}{\ensuremath{v_{{\sf init}}} }
\newcommand{\badV}{\ensuremath{V_{{\sf bad}}} }
\newcommand{\mpay}{\ensuremath{\mathsf{MC}} }
\newcommand{\play}{\ensuremath{\pi} }
\newcommand{\Cmax}{\ensuremath{C_{\textrm{max}}}}
\newcommand{\Dmax}{\ensuremath{D_{\textrm{max}}}}
\newcommand{\Amax}{\ensuremath{A_{\textrm{max}}}}
\newcommand{\upd}{\ensuremath{\mathsf{norm}}}
\newcommand{\cst}{\ensuremath{\mathsf{cost}}}
\newcommand{\safe}{\ensuremath{\mathsf{safe}}}
\newcommand{\expect}{\ensuremath{\mathbb{E}} }
\newcommand{\decr}{\ensuremath{\mathsf{dec}}}
\newcommand{\src}{\ensuremath{\mathsf{src}}}
\newcommand{\trg}{\ensuremath{\mathsf{trg}}}
\newcommand{\post}{\ensuremath{\mathsf{Post}} }
\renewcommand{\L}{\mathcal{L}}
\newcommand{\plays}[1]{\ensuremath{\mathsf{Plays}(#1)} }
\newcommand{\markovChain}{\ensuremath{M} }
\newcommand{\act}{\ensuremath{\mathsf{active}}}
\newcommand{\mis}{\ensuremath{\mathsf{dlmiss}}}
\renewcommand{\to}[1]{\xrightarrow{#1}}
\renewcommand{\paragraph}[1]{\noindent\textbf{#1}}
\DeclareMathOperator*{\argmin}{arg\,min}
\begin{document}

\title{Safe Learning for Near-Optimal Scheduling\thanks{This work was supported by the ARC ``Non-Zero Sum Game Graphs'' project (F\'ed\'eration Wallonie-Bruxelles), the EOS ``Verilearn'' project (F.R.S.-FNRS \& FWO), and the FWO ``SAILor'' project (G030020N).}
}

\titlerunning{Safe Learning for Near-Optimal Scheduling}

\author{Damien Busatto-Gaston\inst{1} \and
Debraj Chakraborty\inst{1} \and
Shibashis Guha\inst{2} \and
Guillermo A. P\'erez\inst{3} \and
Jean-François Raskin\inst{1}
}

\authorrunning{D. Busatto-Gaston et al.}

\institute{Universit\'{e} libre de Bruxelles, Belgium \and
Tata Institute of Fundamental Research, India \and
University of Antwerp -- Flanders Make, Belgium
}

\maketitle

\begin{abstract}
  In this paper, we investigate the combination of synthesis, model-based learning, and online sampling techniques to obtain safe and near-optimal schedulers for a preemptible task scheduling problem. Our algorithms can handle Markov decision processes (MDPs) that have $10^{20}$ states and beyond which cannot be handled with state-of-the art probabilistic model-checkers. We provide probably approximately correct (PAC) guarantees for learning the model. Additionally, we extend Monte-Carlo tree search with advice, computed using safety games or obtained using the earliest-deadline-first scheduler, to safely explore the learned model online. Finally, we implemented and compared our algorithms empirically against shielded deep $Q$-learning on large task systems.
\end{abstract}

\keywords{Model-based learning \and Monte-Carlo tree search \and Task scheduling}

\section{Introduction}

In this paper, we show how to combine synthesis, model-based learning, and online sampling 
techniques 
to solve a scheduling problem
featuring both hard and soft constraints. We investigate solutions to
this problem both from a theoretical and from a more pragmatic point
of view. On the theoretical side, we show how safety guarantees (as
understood in formal verification) can be combined with guarantees
offered by the probably approximately correct (PAC) learning
framework~\cite{Valiant84}. On the pragmatic side,
we show how safety guarantees obtained from automatic synthesis can be
combined with Monte-Carlo tree search (MCTS)~\cite{DBLP:journals/nature/SilverHMGSDSAPL16}
to offer a scalable and practical solution to solve the scheduling problem at hand. 

The scheduling problem that we consider 
is defined as follows. A task system is composed of a set of $n$ preemptible tasks $(\tau_i)_{i \in [n]}$ 
partitioned into a set $F$ of soft tasks and a set  $H$ of hard tasks. Time is assumed to be discrete and measured e.g. in CPU ticks.  Each task $\tau_i$ generates an infinite number of instances $\tau_{i,j}$, called \emph{jobs}, with $j=1,2, \dots$ Jobs generated by both hard and soft tasks are equipped with deadlines, which are relative to the respective arrival times of the jobs in the system. The computation time requirements of the jobs follow a discrete probability distribution, and are unknown to the scheduler but upper bounded by their relative deadline. Jobs generated by hard tasks must complete before their respective deadlines. For jobs generated by soft tasks, deadline misses result in a penalty/cost. The tasks are assumed to be independent and generated stochastically: the occurrence of a new job of one task does not depend on the occurrences of jobs of other tasks, and both the inter-arrival and computation times of jobs are independent random variables.
The scheduling problem consists in finding a \emph{scheduler}, i.e.~a function that associates, to all CPU ticks, 
a task that must run at that moment; in order to:
\begin{inparaenum}[(i)]
\item avoid deadline misses by hard tasks; and
\item minimise the mean cost of deadline misses by soft tasks.  
\end{inparaenum}

In~\cite{ggr18}, we modelled the
semantics of the task system
using a Markov decision process (MDP) 
and posed the problem of computing
an optimal and safe scheduler. However,
that work
assumes that the distribution of all tasks is known a priori which may be unrealistic.
Here, we investigate learning techniques to build algorithms that can schedule safely and optimally a set of hard and soft tasks if only the 
deadlines and the domains of the distributions describing the tasks
of the system are known a priori and not the exact distributions. This is a more realistic assumption. Our motivation was also to investigate the joint application of both synthesis techniques coming from the field of formal verification and learning techniques on 
an understandable yet challenging setting. 


\paragraph{{\bf Contributions.}}
First, we show the
distributions underlying a task system with only soft tasks are
\emph{efficiently} PAC learnable: by executing the task system for a
polynomial number of steps, enough samples can be collected to infer
$\epsilon$-accurate approximations of the distributions with high
probability (Thm.~\ref{thm:onlysoft}). 

Then, we consider the
general case of systems with both hard and soft tasks. Here, safe PAC learning is {\em not} always possible, and we
identify 
two algorithmically-checkable sufficient conditions for task
systems to be safely learnable (Thms.~
\ref{thm:good_sampling} and~\ref{thm:good_efficient_sampling}).
These crucially depend on the underlying MDP being a single maximal end-component, as is the case in our setting (Lem. \ref{lem:singleMEC}).
Subsequently, we can use {\em
  robustness} results on MDPs to compute or learn
  near-optimal safe
strategies 
from the learnt models
(Thm.~\ref{thm:robust-opt}).
%
%

Third, in order to evaluate the relevance of our algorithms, we present experiments of a prototype implementation. These empirically validate the efficient PAC guarantees. Unfortunately, the learnt models are often too large for the probabilistic model-checking tools. In contrast, the MCTS-based algorithm scales to larger examples: e.g.
we learn safe scheduling strategies for systems with more than $10^{20}$ states.
Our experiments also show that a strategy obtained using deep $Q$-learning~\cite{deeprl,Mnih2015} by assigning high costs to missing deadlines of hard tasks does not respect safety, even if one learns for a long period of time and the deadline-miss costs of hard tasks are very high (cf.~\cite{shields}).

\paragraph{{\bf Related works}}
In~\cite{ggr18}, we introduced the scheduling problem considered here but made the assumption that the underlying distributions of the tasks are known. We drop this assumption here and provide learning algorithms.
In~\cite{shields}, the framework to combine safety via shielding and model-free reinforcement learning is introduced and applied to several examples using table-based Q-learning as well as deep RL.
In~\cite{ABCHKP19}, shield synthesis is studied for long-run objective guarantees instead of safety requirements.
Unlike our work, the transition probabilities on MDPs in both \cite{shields} and \cite{ABCHKP19} are assumed to be known.
We observe that \cite{shields} and \cite{ABCHKP19} do not provide model-based learning and PAC guarantees.
While some pre-shielding literature does consider unknown MDPs (see, e.g.\cite{ft14}), we are not aware of PAC-learning works that focus on scheduling problems.

In~\cite{KPR18}, we studied a framework to mix reactive synthesis 
and model-based reinforcement learning for mean-payoff with PAC guarantees.
There, the learning algorithm estimates the probabilities on the transitions of the MDP. In our approach, we do not estimate these probabilities directly from the MDP, but learn probabilities for the individual tasks in the task system.
The efficient PAC guarantees that we have obtained for the model-based part cannot be obtained from that framework.
%
Finally, in~\cite{CNPRZ17} we introduced a first combination of shielding with model-predictive control using MCTS, but did not consider learning.

\section{Preliminaries}\label{sec:preliminaries}
We denote by $\mathbb{N}$ the set of
natural numbers; by $\mathbb{Q}$, the set of rational numbers; and by
$\mathbb{Q}_{\geq 0}$ the set $\{q \in \mathbb{Q} \sth q \geq 0\}$ of
all non-negative rational numbers.
Given $n \in \mathbb{N}$, we denote by $[n]$ the set $\{1, \dots, n\}$.
Given a finite set $A$, a
(rational) \textit{probability distribution} over $A$ is a function
$\dist \colon A \rightarrow [0, 1] \cap \rat$ such that
$\sum_{a\in A} \dist(a) = 1$. 
We call $A$ the \emph{domain of $p$}, and denote it by $\dom(p)$.
We denote the set of probability distributions on $A$ by $\dists(A)$. The \textit{support} of the
probability distribution $\dist$ on $A$ is
$\supp(\dist) = \left\lbrace a \in A \;\vert\; \dist(a) >
  0\right\rbrace$. A distribution is called \emph{Dirac} if
$|\supp(\dist)| = 1$.
For a probability distribution $p$, the minimum probability assigned by $p$ to the elements in $\supp(p)$ is $\pi^p_{\min} = \displaystyle{\min_{a \in \supp(p)}(p(a))}$.
We say two distributions $\dist$ and $\dist'$ are \emph{structurally identical} if $\supp(\dist) = \supp(\dist')$.
Given two structurally identical distributions $\dist$ and $\dist'$, for $0 < \epsilon < 1$, we say that $\dist$ is $\epsilon$-close to $\dist'$, denoted $\dist \eclose{\epsilon}{} \dist'$, if $\supp(\dist)=\supp(\dist')$, and for all $a \in \supp(\dist)$, we have that $|\dist(a) - \dist'(a)| \le \epsilon$.

\paragraph{Scheduling problem} 
An instance of the scheduling problem studied in \cite{ggr18} consists
of a task system $\Upsilon=((\tau_i)_{i \in [n]}, F, H)$, where
$(\tau_i)_{i \in [n]}$ are $n$ preemptible tasks partitioned into hard
and soft tasks $H$ and $F$ respectively. The latter need to be
scheduled on a \emph{single processor}.  \stam{ All tasks $\tau_i$ are
  identified with their unique respective index $i$.  Time is assumed
  to be discrete and measured in CPU ticks.  Each task $\tau_i$
  generates an infinite number of instances $\tau_{i,j}$, called
  \emph{jobs}, with $j=1,2, \dots$
All tasks are either \emph{hard} or \emph{soft}, and $F$ and $H$ denote the set of indices of soft and hard tasks respectively (i.e. $i\in F$ iff $\tau_i$ is a soft task). 
Formally, a task system of $n$ tasks is denoted by the tuple $((\tau_i)_{i \in [n]}, F, H)$.
Jobs generated by both hard and soft tasks are equipped with deadlines,
which are relative to the respective arrival times of the jobs in the
system.
Jobs generated by hard tasks must complete before their
respective deadlines, but this is not mandatory for jobs generated by
soft tasks (although a deadline miss incurs a penalty in this case).
The tasks are also assumed to be independent, i.e. scheduling a job
of one task does not depend on another job belonging to some other
task. The scheduling problem consists in finding a
\emph{scheduler}, i.e.~a function that associates, to all CPU ticks,
the identity of the task that must run at that moment; in order to:
\begin{inparaenum}[(i)]
\item avoid deadline misses by hard tasks; and
\item minimise the mean-cost of deadline misses by soft tasks.  
\end{inparaenum}
}
Formally, the work of \cite{ggr18} relies on a probabilistic model for
the computation times of the jobs and for the delay between the
arrival of two successive jobs of the same task. For all $i \in [n]$,
task $\tau_i$ is defined as a tuple 
$\zug{\C_i, D_i, \A_i}$,
where:
\begin{inparaenum}[(i)]
\item $\C_i$ is a discrete probability distribution on the (finitely
  many) possible computation times of the jobs generated by $\tau_i$;
\item $D_i\in \N$ is the deadline of all jobs generated by $\tau_i$
  which is relative to their arrival time; and 
\item $\A_i$ is a discrete probability distribution on the (finitely
  many) possible inter-arrival times of the jobs generated by $\tau_i$.
\end{inparaenum}
We denote by $\pi_{\max}^\Upsilon$ the maximum probability appearing in the definition of $\Upsilon$, that is, across all the distributions $\C_i$ and $\A_i$, for all $i \in [n]$.
It is assumed that
$\max(\dom(\C_i)) \leq D_i \le \min(\dom(\A_i))$ for all
$i \in [n]$; hence, at any point in time, there is at
most one job per task in the system.
Also note that when a new job of some task arrives at the system, the deadline for the previous job of this task is already over.
Finally, we assume that the task system is \emph{schedulable for the hard tasks}, meaning that it is possible to guarantee that jobs associated to hard tasks never miss their deadlines. 
On the other hand, the full set of tasks may not be schedulable, so that jobs associated with soft tasks may be allowed to miss their deadlines.
The potential degradation in the quality when a soft task misses its deadline is modelled by a cost function
$cost: F \rightarrow \mathbb{Q}_{\ge 0}$ that associates to each
soft task $\tau_j$ a cost $c(j)$ that is incurred every time a job
of $\tau_j$ misses its deadline. \label{def:EDF} As a final observation, we recall the \emph{earliest deadline first} (EDF) algorithm that always gives execution time to the job closest to its deadline. EDF is an optimal scheduling algorithm in the following sense: if a task system is schedulable (without any misses at all) then EDF will yield such a feasible schedule~\cite{buttazzo11}. 
In general, applying EDF on both the hard and soft tasks may cause hard tasks to miss deadlines, as the entire task system may not be schedulable.
However, one may apply EDF on hard tasks only, and allow for soft tasks whenever no hard task is available.
This version of EDF ensures that all jobs of hard tasks are scheduled in time, but
does not guarantee optimality with respect to cost.

\stam{
\todo[inline]{This part must moved elsewhere, where we clarify the
  hypothesis of the algorithms. G.}
Finally, assume that we know a lower
bound on the inter-arrival and execution-time probabilities as well as
a global maximal inter-arrival time. That is, we are given
$\pmin \in \mathbb{Q} \cap [0,1]$ and $\Omega \in \mathbb{N}$ such
that, for all $1 \leq i \leq n$ and all $k \in \mathbb{N}$:
$\A_i(k) > 0 \implies \A_i(k) \geq \pmin$;
$\C_i(k) > 0 \implies \C_i(k) \geq \pmin$; and
$\max (\A_i) \leq \Omega$.
\todo[inline]{until here}
}


Given a task system
$\Upsilon=((\tau_i)_{i \in [n]}, F, H)$ with $n$ tasks, the structure
of $\Upsilon$ is $((\struct(\tau_i))_{i \in [n]}, F, H)$ where
$\struct(\zug{\mathcal{C}, D, \mathcal{A}}) = (\zug{\dom(\mathcal{C}),
  D, \dom(\mathcal{A})})$.
We denote by
$\mathcal{C}_{\max}$ and $\mathcal{A}_{\max}$ resp. the maximum
computation time, and the maximum inter-arrival time of a task in
$\Upsilon$.  Formally,
$\mathcal{C}_{\max}=\max(\bigcup_{i \in [n]}\dom(\mathcal{C}_i))$, and
$\mathcal{A}_{\max} = \max(\bigcup_{i \in [n]}
\dom(\mathcal{A}_i))$.
Note that $\mathcal{A}_{\max} \ge \mathcal{C}_{\max}$.
We also let $\mathbb{D} = {\max_{i \in [n]}(|\dom(\mathcal{A}_i)|)}$.
We denote by $|\Upsilon|$ the number of tasks in the task system $\Upsilon$.
Consider two task systems $\Upsilon_1=((\tau_i^1)_{i \in [n]}, F, H)$, and $\Upsilon_2=((\tau_i^2)_{i \in [n]}, F, H)$, with $|\Upsilon_1| = |\Upsilon_2|$, $\tau_i^j = \zug{\C^j_i, D^j_i, \A^j_i}$ for all $i \in [n]$ and $j \in [2]$.
The two task systems $\Upsilon_1$ and $\Upsilon_2$ are said to be \emph{$\epsilon$-close}, denoted $\Upsilon_1 \approx^\epsilon \Upsilon_2$, if
\begin{inparaenum}[(i)]
  \item $\struct(\Upsilon^1) = \struct(\Upsilon^2)$,
  \item for all $i \in [n]$, we have $\mathcal{A}^1_i \sim^\epsilon \mathcal{A}^2_i$, and
  \item for all $i \in [n]$, we have $\mathcal{C}^1_i \sim^\epsilon \mathcal{C}^2_i$.
\end{inparaenum}

\stam{
\paragraph{Labelled Directed Graphs} A {\em labelled directed graph}
(or graph for short) is a tuple $\graph = \zug{V,E,L}$ where:
\begin{inparaenum}[(i)]
\item $V$ is the finite set of vertices;
\item $E \subseteq V \times V$ is the set of directed edges (sometimes
  called \emph{transitions}); and
\item $L:E\rightarrow A$ is the function labelling the edges by elements
  from some set $A$. 
\end{inparaenum} 
For a transition $e=(v,v')$, $v$ is its \emph{source}, denoted
$\src(e)$, and $v'$ is its \emph{destination} denoted $\trg(e)$. 

We assume that for all vertices $v \in V$, the set of outgoing edges from $v$ is non-empty.
A \textit{play} in a graph $\graph$ from an initial vertex
$\initV \in V$ is an infinite sequence of transitions
$\play = e_{0}e_{1}e_{2}\ldots{}$ such that $\src(e_0) = \initV$ and
$\trg(e_i)=\src(e_{i+1})$ for all $i \ge 0$. The \textit{prefix} up to
the $n$-th vertex of $\play$ is the finite sequence
$\play(n) = e_0e_1\ldots e_n$. We denote its last vertex by
$\last{\play(n)} = \trg(e_n)$. The set of plays of $\graph$ is denoted
by $\plays{\graph}$ and the corresponding set of prefixes is denoted
by $\prefixes{\graph}$.
}

\stam{
\paragraph{Weighted Markov Chains.}
\sgcomment{Are we using this definition?}
A finite weighted \emph{Markov chain} (MC, for short) is a tuple
$M = \zug{\graph, \prob}$, 
where $\graph=\zug{V,E,L}$ is a graph with $L:
E\rightarrow \rat$
(i.e. edges are labelled by rational numbers that we call the
\emph{costs} of the edges), and $\prob: V \rightarrow \dists(E)$ is a
function that assigns a probability distribution on the set $E(v)$ of
outgoing edges to all vertices $v \in V$.
Given an initial vertex $\initV \in V$, we define the set of possible
\textit{outcomes} in $M$ as
$\outcomesMC{M}{\initV} = \{ \play = e_0 e_1 e_2 \ldots{} \in
\plays{\graph} \mid \src(e_0) = \initV \wedge (\forall\, n \in \nat,
\, e_{n+1}\in\supp(\trg(e_n))\}$.  Let
$V_{\outcomesMC{M}{\initV}} \subseteq V$ denote the set of vertices
visited in the set of possible outcomes $\outcomesMC{M}{\initV}$.
Finally, let us assume some measurable function
$f: \plays{\graph} \rightarrow \realpos$ associating a rational value
to each play of the MC.
Since the set of plays of $M$ forms a probability space, $f$ is a
random variable, and we denote by $\expect^{\markovChain}_{\initV}(f)$
the \textit{expected value} of $f$ over the set of plays starting from
$\initV$.
\stam{
For a play $\pi$, we denote by $\inf(\pi)$ the set of vertices that are visited infinitely often in $\pi$.
The B\"{u}chi acceptance condition is given by a set of vertices $B \subseteq V$.
A play is said to be accepting for a B\"{u}chi acceptance condition $\alpha$ iff $\inf(\pi) \cap B \neq \emptyset$.
An MC satisfies a B\"{u}chi condition surely iff all its plays are accepting, and it satisfies a B\"{u}chi condition almost surely iff the set of accepting plays have measure $1$.
}
}

\paragraph{Markov decision processes} Let us now introduce
\emph{Markov Decision Process} (MDP) as they form the basis of the
formal model of~\cite{ggr18}, which we recall later.  A finite
\emph{Markov decision process} is a tuple
$\Gamma=\zug{V,E,L,(V_\playerOne, V_\playerTwo), A, \prob, \cst}$,
where:
\begin{inparaenum}[(i)]
\item $A$ is a finite set of actions;
\item $\zug{V,E}$ is a finite directed graph
  and $L$ is an edge-labelling function 
  (we denote by
  $E(v)$ the set of outgoing edges from vertex $v$);
\item the set of vertices $V$ is partitioned into $V_\playerOne$ and
  $V_\playerTwo$; 
\item the graph is bipartite i.e.
  $E\subseteq (V_\playerOne\times V_\playerTwo)\cup
  (V_\playerTwo\times V_\playerOne)$, and the labelling
  function is
  s.t. $L(v,v')\in A$ if $v\in V_\playerOne$, and $L(v,v')\in \rat$ if
  $v\in V_\playerTwo$; and
\item $\prob$ assigns to each vertex $v \in V_\ocircle$ a rational
  probability distribution on $E(v)$. 
\end{inparaenum}
For all edges $e$, we let $\cst(e)=L(e)$ if $L(e)\in\rat$, and
$\cst(e)=0$ otherwise.  We further assume that, for all
$v\in V_\playerOne$, for all $e$, $e'$ in $E(v)$: $L(e)=L(e')$ implies
$e=e'$, i.e. an action identifies uniquely an outgoing edge.
Given $v \in V_\Box$, and $a \in A$, we define $\post(v,a) = \lbrace
v'\in V_\playerTwo \mid (v,v')\in E \textrm{ and } L(v,v')=a \rbrace
\cup \lbrace v'' \in V_\playerOne \mid \exists v':(v,v')\in E, L(v,v')=a\text{ and } \delta(v',v'') > 0 \rbrace$. 
For all vertices $v \in V_\Box$, we denote by $A(v)$, the set of actions $\lbrace a \in A \mid \post(v,a) \cap V_\playerOne \neq \emptyset \rbrace$.
The size of an MDP $\Gamma$, denoted $|\Gamma|$, is the sum of the number of vertices and the number of edges, that is, $|V|+|E|$.
An MDP $\Gamma=\zug{V,E,L,(V_\playerOne, V_\playerTwo), A, \prob, \cst}$ is said to \emph{structurally identical} to another MDP $\Gamma'=\zug{V,E,L',(V_\playerOne, V_\playerTwo), A, \prob', \cst}$ if for all $v \in V_\ocircle$, we have that $\supp(\prob(v)) = \supp(\prob'(v))$.
For two structurally identical MDPs $\Gamma$ and $\Gamma'$ with distribution assignment functions $\delta$ and $\delta'$ respectively, we say that $\Gamma$ is $\epsilon$-approximate to $\Gamma'$, denoted $\Gamma \approx^\epsilon \Gamma'$, if for all $v \in V_\ocircle$:
$\delta(v) \sim^\epsilon \delta'(v)$.

An MDP $\Gamma$ can be interpreted as a game $\mathcal{G}_\Gamma$
between two players: $\playerOne$ and $\playerTwo$, who own the vertices in $V_\Box$ and
$V_\ocircle$ respectively. A play in an MDP is a path in its
underlying graph $\zug{V,E, A\cup\rat}$. We say that a prefix
$\play(n)$ of a play $\play$ belongs to player
$i\in \{\playerOne, \playerTwo\}$, iff its last vertex
$\last{\play(n)}$ is in $V_{i}$. The set of prefixes that belong to
player $i$ is denoted by $\prefixesPlayer_i{\mathcal{G}_\Gamma}$. A
play is obtained by the interaction of the players: if the current play prefix $\play(n)$ belongs to
$\playerOne$, she plays by picking an edge $e\in E(\last{\play(n)})$
(or, equivalently, an action that labels a necessarily unique edge
from $\last{\play(n)}$). Otherwise, when $\play(n)$ belongs to
$\playerTwo$, the next edge $e\in E(\last{\play(n)})$ is chosen
randomly according to $\prob(\last{\play(n)})$. In both cases, the
plays prefix is extended by $e$ and the game goes \textit{ad infinitum}.

\stam{
A \emph{strategy} in an MDP $\Gamma$ is a function $\fun{\sigma}{V^{+}}{\dists(A)}$
such that for all $v_0 \ldots v_n \in V^{+}$, we have $\supp(\sigma(v_0\ldots v_n))\subseteq A(v_n)$.
A strategy $\sigma$ can be encoded by a transition system $\mathtt{T}=\zug{Q,V_\Box, act,\delta,\iota}$ where $Q$ is a (possibly infinite) set of states, called modes, $\fun{act}{Q \times V_\Box}{\dists(A)}$ selects a distribution on actions such that, for all $q \in Q$ and $v \in V_\Box$, 
we have, $act(q,v) \in \dists(A(v))$.
The function $\fun{\delta}{Q \times V_\Box}{Q}$ is a mode update function and $\fun{\iota}{V_\Box}{Q}$ selects an initial mode for each vertex $v \in V_\Box$.
If the current vertex is $v \in V_\Box$, and the current mode is $q \in Q$, then the strategy chooses the distribution $act(q,v)$, and the next vertex $v'$ is chosen according to the distribution $act(q,v)$.
Formally, $\zug{Q,V_\Box, act,\delta,\iota}$ defines the strategy $\sigma$ such that $\sigma(\rho \cdot v) = act(\delta^*(\iota(\rho(0)), \rho),v)$ for all $\rho \in V_\Box^{*}$, and $v \in V_\Box$, where $\delta^*$ extends $\delta$ to sequence of states starting from $\iota$ as expected, i.e., $\delta^*(\iota(\rho(0)), \rho \cdot v) = \delta(\delta^*(\iota(\rho(0)),\rho),v)$, and $\delta^*(\iota(\rho(0)), \varepsilon) = \iota(\rho(0))$.
We denote by $\mathtt{T}_\sigma$ a transition system with minimal number of modes that corresponds to a strategy $\sigma$.
A strategy is said to be \emph{memoryless} if there exists a transition system encoding the strategy with $|Q| = 1$, that is, the choice of action only depends on the current state.
A memoryless strategy can be seen as a function $\fun{\sigma}{V_\Box}{\dists(Act)}$.
Formally, a strategy $\sigma$ is memoryless if for all  finite sequences of states $\rho_1$ and $\rho_2$ in $V_\Box^{+}$ such that $\last{\rho_1} = \last{\rho_2}$, we have $\sigma(\rho_1) = \sigma(\rho_2)$.
A strategy is called \emph{finite memory} if there exists a transition system encoding the strategy in which $Q$ is finite.
}

A (deterministic) \emph{strategy} of $\playerOne$ is a function
$\sigma_\Box: \prefixesPlayer_\Box{\mathcal{G}} \rightarrow E$, such that
$\sigma_\Box(\prefix) \in E(\last{\prefix})$ for all prefixes.  A
strategy $\sigma_\Box$ is \emph{memoryless} if for all finite prefixes
$\prefix_1$ and $\prefix_2 \in \prefixes{\graph}$:
$\last{\prefix_1}=\last{\prefix_2}$ implies
$\sigma_\Box(\rho_1) = \sigma_\Box(\rho_2)$. For memoryless
strategies, we will abuse notations and assume that such strategies
$\sigma$ are of the form $\sigma: V_\playerOne\rightarrow E$
(i.e., the strategy associates the edge to play to the current vertex and not to
the full prefix played so far). From now on, we will consider
memoryless deterministic strategies unless otherwise stated.
Let $\Gamma=\zug{V,E,L,(V_\Box, V_\ocircle),A,\prob,\cst}$ be an MDP, and
let $\sigma_\Box$ be a \emph{memoryless} strategy. Then, assuming that
$\playerOne$ plays according to $\sigma_\playerOne$, we can express
the behaviour of $\Gamma$ as a Markov chain $\Gamma[\sigma_\Box]$, where the
probability distributions reflect the stochastic choices of
$\playerTwo$ (see \cite{ggr18} for the details).
\stam{Formally,
$\Gamma[\sigma_\playerOne]=\zug{V_\playerTwo, E', L', \prob'}$, where
$(v,v')\in E'$ iff there is $\hat{v}$ s.t.:
\begin{inparaenum}[(i)]
\item $(v,\hat{v})\in E$;
\item $\sigma_\playerOne(\hat{v})=v'$; and
\item $\prob(\hat{v},v')=\prob'(v,v')$.
\end{inparaenum}
Further, for all $e \in E'$, we have $L'(e) = L(e)$.
}

\paragraph{End components} 
An \emph{end-component} (EC) $M = (T,A')$, with
$T \subseteq V$ and $A':T \cap V_\Box \rightarrow 2^A$, is a
\emph{sub-MDP} of $\Gamma$ such that: for all $v \in T \cap V_\Box$, $A'(v)$ is a
subset of the actions available to $\playerOne$ from $v$; for all
$a \in A'(v)$, $\post(v,a) \subseteq T$; and, it's underlying graph is strongly connected. A
\emph{maximal end-component} (MEC) is an EC that is not included in
any other EC.

\paragraph{MDP for the scheduling problem}
Given a system
$\Upsilon=\{\tau_1,\tau_2,\ldots,\tau_n\}$ of tasks, we describe below
the modelling of the scheduling problem by an MDP
$\Gamma_\Upsilon=\zug{V,E,L,(V_\Box, V_\ocircle),A, \delta, \cst}$ as
it appears in \cite{ggr18}. The two players $\playerOne$ and
$\playerTwo$ correspond respectively to the \emph{Scheduler} and the
task generator (\emph{TaskGen}) respectively. 
Since 
there is at most one
job per task that is active at all times, 
vertices encode
the following information about each task $\tau_i$:
\begin{inparaenum}[(i)]
\item a \emph{distribution} $c_i$ over the job's possible remaining
  computation times (rct);
\item the time $d_i$ up to its deadline; and
\item a distribution $a_i$ over the possible times up to the next
  arrival of a new job.
\end{inparaenum}
We also tag vertices with either $\playerOne$ or $\playerTwo$ to
remember their respective owners and we have a vertex $\bot$
that is reached when a hard task misses a deadline.
\stam{Formally:
  $V_\playerOne=\big(\D([\Cmax]_0)\times
  [\Dmax]_0\times\D([\Amax]_0)\big)^n\times\{\playerOne\}\cup
  \{\bot\}$ and
  $V_\playerTwo=\big(\D([\Cmax]_0)\times
  [\Dmax]_0\times\D([\Amax]_0)\big)^n\times\{\playerTwo\}$; where
$\Cmax=\max_i(\max(\supp(\C_i)))$, 
$\Dmax=\max_i(\{D_i\})$ and
$\Amax=\max_i(\max(\supp(\A_i)))$.
%
%
We denote by $\vec{\mathcal{C}}$ the set of computation time distributions, by $\vec{D}$ the set of deadlines, and by $\vec{\mathcal{A}}$ the set of inter-arrival time distributions for all tasks.
We also use $\vec{\mathcal{C}}_H, \vec{D}_H, \vec{\mathcal{A}}_H$ and $\vec{\mathcal{C}}_F, \vec{D}_F, \vec{\mathcal{A}}_F$ to denote these sets for the hard and the soft tasks respectively.
We denote by $\vec{\mathcal{C}}$ the set of computation time distributions, by $\vec{D}$ the set of deadlines, by $\vec{\mathcal{A}}$ the set of inter-arrival distributions for all tasks.
Similarly, we also use the notations $\vec{\mathcal{C}}_i$, $\vec{D}_i$, and $\vec{\mathcal{A}}_i$ for $i \in \{F, H\}$ corresponding to the set of soft and hard tasks respectively.
}
For a vertex $v=\big((c_1, d_1, a_1)\ldots (c_n, d_n, a_n), \Delta \big)$, for $\Delta \in \{\Box, \ocircle\}$, let $\act(v)=\{i\mid c_i(0)\neq 1\text{ and }d_i>0\}$ be the tasks that have an active job in $v$; $\mis(v)=\{i\mid c_i(0)=0\text{ and } d_i=0\}$, those that have missed a deadline in $v$.

\stam{
\textbf{Distribution updates.} We now introduce the $\decr$ and
$\upd$ functions that will be useful when we will need to update the
knowledge of the Scheduler. For example, consider a state where
$c_i(1)=0.5$, $c_i(4)=0.1$ and $c_i(5)=0.4$ for some $i$, and where
$\tau_i$ is granted one CPU time unit. Then, all elements in the
support of $c_i$ should be decremented, yielding $c'_i$ with
$c_i'(0)=0.5$, $c_i'(3)=0.1$ and $c_i'(4)=0.4$. Since
$0\in\supp(c_i')$, the current job of $\tau_i$ \emph{could} now
terminate with probability $c_i'(0)=0.5$, or continue running, which
will be observed by the Scheduler player. In the case where the job does not
terminate, the probability mass must be redistributed to update
Scheduler's knowledge, yielding the distribution $c_i''$ with
$c_i''(3)=0.2$ and $c_i''(4)=0.8$.
}

\paragraph{Possible moves} The possible actions of
Scheduler are to schedule an active task or to idle the
CPU. We model this by having, from all vertices $v\in V_\playerOne$ one
transition labelled by some element from $\act(v)$, or by
$\varepsilon$.
The moves of TaskGen consist in
selecting, for each task one possible \emph{action} out of four:
either
\begin{inparaenum}[(i)]
\item  nothing ($\varepsilon$); or
\item  to finish the current job without
  submitting a new one ($fin$); or 
\item to submit a new job while the previous one is already finished
  ($sub$); or
\item to submit a new job and kill the previous one, in the case of a
  soft task ($killANDsub$), which will incur a cost.
\end{inparaenum}
\stam{
Formally, let $Actions=\{fin,sub,killANDsub,\varepsilon\}$.  
To define $\Gamma_\Upsilon$, we introduce a function
$\L: (V_{\ocircle} \times V_{\Box}) \rightarrow Actions^n$,
i.e. $\L(e,i)$ is the action of $\playerTwo$ corresponding to $\tau_i$
on edge $e$.  Fix a state
$v=\big((c_1,d_1,a_1),\ldots, (c_n,d_n,a_n),\playerTwo\big)\in
V_\playerTwo$.  Let
$\hat{v} = \big((\hat{c_1}, \hat{d_1}, \hat{a_1}), \dots, (\hat{c_n},
\hat{d_n}, \hat{a_n}), \Box \big) \in V_{\Box}$ be such that that
$\hat{v} \to{i} v$.  Note that there is a unique such $\hat{v}$ from
which action $i$ can be done to reach $v$.
Either $\mis(v)\cap H\neq 0$,
i.e., a hard task has just missed a deadline. In this case, the only
transition from $v$ is $e=(v,\bot)$ with
$\L(e,i)=\varepsilon$ for all $i \in [n]$. Otherwise, there is an edge $e=(v,v')$ with
$v'=\big((c_1',d_1',a_1'),\ldots, (c_n',d_n',a_n'),\playerOne\big)$.

The cost of an edge $e$ is:
$L(e)=c = \sum_{i:L(e,i)=killANDsub}cost(i)$.  As stated earlier, the
cost is incurred when the $killANDsub$ action is performed by some
task $\tau_i$, although the deadline miss might have occurred earlier.
Finally, the probability of an edge $e$ is
$Prob(e)=\prod_{i\in [n]}p_i$, where, for all $i\in [n]$: \stam{
\begin{align*}
  p_i &= 
        \begin{cases}
          c_i(0) \cdot (1-a_i(0))&\text{if }L(e,i)=fin\\
          c_i(0) \cdot a_i(0)&\text{if }L(e,i)=sub\\
          (1-c_i(0)) \cdot a_i(0)&\text{if }L(e,i)=killANDsub\\
          (1-c_i(0)) \cdot (1-a_i(0))&\text{if }L(e,i)=\varepsilon.
        \end{cases}
\end{align*}
}
\begin{align*}
  p_i &= 
        \begin{cases}
          c_i(0) \cdot (1-a_i(0))&\text{if }\L(e,i)=fin\\
          c_i(0) \cdot a_i(0)&\text{if }\L(e,i)=sub\\
          (1-c_i(0)) \cdot a_i(0)&\text{if }\L(e,i)=killANDsub\\
          (1-c_i(0)) \cdot (1-a_i(0))&\text{if }\L(e,i)=\varepsilon\text{ and }c_i(0) \neq 1 \\
          1-a_i(0)&\text{if }\L(e,i)=\varepsilon\text{ and }c_i(0) = 1.
        \end{cases}
\end{align*}

Then, the initial vertex is
$\initV=\big((c_0,d_0,a_0),\ldots,(c_n,d_n,a_n),\playerOne\big)\in
V_\playerOne$ s.t. for all $i\in [n]$: $(c_i,d_i,a_i)=(\C_i,D_i,\A_i)$
if $I_i=0$; and $(c_i,d_i,a_i)=(c, 0, a)$ with $c(0)=1$ and $a(I_i)=1$
otherwise. The final MDP $\Gamma_\Upsilon$ modelling our scheduling
problem is obtained by removing, from the MDP we have just described,
all the states that are not reachable from $\initV$. Finally, we let
the set of \emph{bad states} of this MDP be the set of states where at
least one hard task has missed its deadline. Formally, a state
$v=\big((c_1,d_1,a_1),\ldots,(c_n,d_n,a_n)\big)$ is in $\badV$ iff
there is $i\in H$ s.t. $\min(\supp{(c_i)})>d_i$.
}

We consider the following example from \cite{ggr18}.
\begin{figure}[h!]
\vspace*{-0.2cm}
  \centering
  \scalebox{.8}{
  \begin{tikzpicture}
    \node[player1,initial,initial text={}] (vinit) at (0,0)
    {$\textbf{(1,2,3)}$\\$\textbf{([1:.4,2:.6],2,3)}$} ;

    \node[player2] (v1) at (-4, -1.7) {$\textbf{(1,1,2)}$\\$\textbf{([0:.4,1:.6],1,2)}$} ;
    \node[player2] (v2) at (0, -1.7) {${(0,1,2)}$\\$\textbf{([1:.4,2:.6],1,2)}$} ;
    \node[player2] (v3) at (4, -1.7)
    {$\textbf{(1,1,2)}$\\$\textbf{([1:.4,2:.6],1,2)}$} ;

    \path[-latex]
    (vinit) edge node[left=.3cm] {$s$} (v1)
            edge node[left] {$h$} (v2)
            edge node[right=.3cm] {$\varepsilon$} (v3)
            ;
    \node[player1] (v4) at (-7, -3.4) {$\textbf{(1,1,2)}$\\$(0,1,2)$} ;
    \node[player1] (v5) at (-4, -3.4) {$\textbf{(1,1,2)}$\\$\textbf{(1,1,2)}$} ;
    \node[player1] (v6) at (0, -3.4)
    {$(0,1,2)$\\$\textbf{([1:.4,2:.6],1,2)}$} ;
    \node[player1] (v7) at (4, -3.4)
    {$\textbf{(1,1,2)}$\\$\textbf{([1:.4,2:.6],1,2)}$} ;
    
    \path[-latex]
    (v1) edge node[left=.3cm]{$(\varepsilon, fin)$} node[right]{$.4$} (v4)
         edge node[left]{$(\varepsilon,\varepsilon)$} node[right]{$.6$} (v5)
    (v2) edge node[left]  {$(fin,\varepsilon)$} (v6)
    (v3) edge node[right]  {$(\varepsilon,\varepsilon)$} (v7)
    ;

    \node[player2] (v8) at (-7, -5.1) {$(0,0,1)$\\$(0,0,1)$} ;
    \node[player1] (v9) at (-7, -6.8) {$(0,0,1)$\\$(0,0,1)$} ;
    \node[player2] (v10) at (-6.5, -1) {$(0,0,0)$\\$(0,0,0)$} ;

    \path[-latex]
    (v4) edge node[left] {$h$} (v8)
   (v8) edge node[right] {$(fin,\varepsilon)$} (v9)
    (v9) edge[bend left=45] node[left] {$\varepsilon$} (v10)
    (v10) edge node[above] {$(sub,sub)$} (vinit)
    ;
    
    \node[player2] (v11) at (-4, -5.1) {$(0,0,1)$\\$\textit{(1,0,1)}$} ;
    \node[player1] (v12) at (-4, -6.8) {$(0,0,1)$\\$\textit{(1,0,1)}$} ;
    \node[player2] (v13) at (-2, -6.8) {$(0,0,0)$\\$\textit{(1,0,0)}$} ;
    
    \path[-latex]
    (v5) edge node[right] {$h$} (v11)
         edge[dotted] node[left=.2cm] {$s$} (-5, -4.4)
         edge[dotted] node[right=.2cm] {$\varepsilon$} (-3, -4.4)
    (v11) edge node[right] {$(fin,\varepsilon)$} (v12)
    (v12) edge node[above] {$\varepsilon$} (v13)
    (v13) edge[out=90, in=-150] node[pos=.1,right,align=left] {$(sub,killANDsub)$\\\textbf{cost=10}} (vinit)
    ;

   \path[-latex]
   (v6) edge[dotted] node[left=.2cm] {$s$} (-1, -4.4)
        edge[dotted] node[right=.2cm] {$\varepsilon$} (1, -4.4)
   ;

   \node[player2] (v14) at (4,-5.1)
   {$\textit{(1,0,1)}$\\$\textit{([1:.4,2:.6],0,1)}$} ;
   \node[player1] (bot) at (4, -6.8) {$\bot$} ;
   
   \path[-latex]
   (v7) edge node[right] {$\varepsilon$} (v14)
        edge[dotted] node[left=.2cm] {$s$} (3, -4.4)
        edge[dotted] node[right=.2cm] {$h$} (5, -4.4)
   (v14) edge node[right] {$\varepsilon$} (bot)
   (bot) edge[loop right] node[right] {$\varepsilon$} (bot)
;
   
  \end{tikzpicture}
}
  \caption{\label{fig-example} MDP excerpt for Ex.~\ref{exm-prelims}. \textbf{Bold} tasks are active,
    those in \textit{italics} have missed a deadline.}
\end{figure}

\begin{example} \label{exm-prelims}
  Consider a system with one hard task
  $\tau_h=\zug{\C_h,2,\A_h}$ s.t. $\C_h(1)=1$ and $\A_h(3)=1$; one
  soft task $\tau_s=\zug{\C_s, 2, \A_s}$ s.t. $\C_s(1)=0.4$,
  $\C_s(2)=0.6$, and $\A_s(3)=1$; and the cost function $c$
  s.t. $c(\tau_s)=10$. 
  Fig.~\ref{fig-example} presents an excerpt of the MDP
  $\Gamma_\Upsilon$ built from the set of tasks $\tau=\{\tau_h,\tau_s\}$
  of Example~\ref{exm-prelims}. A distribution $p$ with
  support $\{x_1,x_2,\ldots,x_n\}$ is denoted by
  $[x_1:p(x_1),x_2:p(x_2),\ldots;x_n:p(x_n)]$. When $p$ is
  s.t. $p(x)=1$ for some $x$, we simply denote $p$ by $x$. Vertices from
  $V_\playerOne$ and $V_\playerTwo$ are depicted by rectangles and
  rounded rectangles respectively. Each vertex is labelled by
  $(c_h,d_h,a_h)$ on the top, and $(c_s,d_s,a_s)$ below.

  A strategy to avoid missing a deadline of $\tau_h$ consists in first
  scheduling $\tau_s$, then $\tau_h$. One then reaches the left-hand
  part of the graph from which $\playerOne$ can avoid $\bot$ whatever
  $\playerTwo$ does. Other safe strategies are possible: the
  first step of the algorithm in \cite{ggr18} is to compute all the
  \emph{safe} nodes (i.e. those from which $\playerOne$ can ensure to
  avoid $\bot$), and then find an optimal one w.r.t to missed-deadline costs.
  
  There are two optimal
  memoryless strategies, one in which Scheduler
  first chooses to execute $\tau_h$, then $\tau_s$; and another where
  $\tau_s$ is scheduled for $1$ time unit, and then preempted to let
  $\tau_h$ execute.  Since the time difference between the arrival of two consecutive jobs of the soft task
$\tau_s$ is $3$ and the cost
  of missing a deadline is $10$, for both of these optimal strategies,
  the soft task's deadline is missed with probability $0.6$ over this time duration of $3$,
  and hence the mean-cost is~$2$. There is
  another safe schedule that is not optimal which only grants $\tau_h$ is
  CPU access, and never schedules $\tau_s$, thus giving a
  mean-cost of $\frac{10}{3}$.  
  \qed
\end{example}

\paragraph{Expected mean-cost}  Let us first
associate a value, called the \emph{mean-cost} $\mpay(\play)$ to all
plays $\play$ in an MDP
$\Gamma=\zug{V,E,L,(V_\Box, V_\ocircle),A,\prob,\cst}$. 
First, for a prefix $\prefix=e_0e_1\ldots e_{n-1}$, we define
$\mpay(\prefix) = \frac{1}{n} \sum_{i = 0}^{i = n-1} \cst(e_{i})$
(recall that $\cst(e)=0$ when $L(e)$ is an action). Then, for a play
$\play=e_0e_1\ldots$, we have
$\mpay(\play) = \limsup_{n \rightarrow \infty} \mpay (\play(n))$.
Observe that $\mpay$ is a measurable function.  
A strategy $\sigma_{\Box}$ is {\em optimal} for the mean-cost from some initial vertex $\initV \in V_\Box$ if $\expect_{\initV}^{\Gamma[\sigma_\Box]}(\mpay)=\inf_{\sigma_{\Box}'}\expect_{\initV}^{\Gamma[\sigma_\Box']}(\mpay)$. Such {\em optimal} strategy always exists, and it is well-known that there is always one which is \emph{memoryless}. Moreover, this problem can be solved in polynomial time through linear programming~\cite{FV97} or in practice using value iteration (as implemented, for example, in the tool {\sc Storm} \cite{DJKV17}). We denote by $\expect_{\initV}^{\Gamma}(\mpay)$ the optimal value $\inf_{\sigma_{\Box}}\expect_{\initV}^{\Gamma[\sigma_\Box]}(\mpay)$.

\paragraph{Safety synthesis} 
Given an MDP $\Gamma = \zug{V,E,L,(V_\Box, V_\ocircle),A,\prob,\cst}$,
an initial vertex $\initV \in V$, and a strategy $\sigma_\Box$, we
define the set of possible \textit{outcomes} in the Markov chain
$\Gamma[\sigma_\Box]$ as the set of paths
$\initV=v_0 v_1 v_2 \ldots{}$ in $\Gamma[\sigma_\Box]$ s.t., for all
$i\geq 0$, there is non-null probability to go from $v_i$ to $v_{i+1}$ in
$\Gamma[\sigma_\Box]$.
%
Let
$V_{\outcomesMC{\Gamma[\sigma_\Box]}{\initV}} \subseteq V$ denote the set of vertices
visited in the set of possible outcomes $\outcomesMC{\Gamma[\sigma_\Box]}{\initV}$.

Given $\Gamma$ with vertices $V$,
initial vertex $\initV \in V$, and a set $\badV \subseteq V$ of
\emph{bad vertices}, the \emph{safety synthesis problem} is to decide
whether $\playerOne$ has a strategy $\sigma_\Box$ ensuring to visit the
safe vertices only, i.e.:
$V_{\outcomesMC{\Gamma[\sigma_\Box]}{\initV}} \cap \badV=\emptyset$.
If this is the case, we call such a
strategy \emph{safe}. The safety synthesis problem is decidable in
polynomial time for MDPs
(see, e.g., safety games in~\cite{Thomas95}).
Moreover, if a safe
strategy exists, then there is a \emph{memoryless} safe
strategy. Henceforth, we will consider safe strategies that are
memoryless only. We say that a vertex $v$ is safe iff $\playerOne$
has a safe strategy from $v$, and that an edge
$e=(v,v')\in E\cap (V_\playerOne \times V_\playerTwo)$ is safe iff there is a
safe strategy $\sigma_\playerOne$ s.t. $\sigma_\playerOne(v)=v'$. So,
the \emph{safe edges} $\safe(v)$ from some node $v$ correspond to the choices that
$\playerOne$ can safely make from $v$.
The set of safe edges exactly correspond to the set of safe actions that $\playerOne$ can make from $v$.
Then, we let the \emph{safe
  region} of $\Gamma$ be the MDP $\Gamma^{\text{safe}}$ obtained from
$\Gamma$ by applying the following transformations:
\begin{inparaenum}[(i)]
\item remove from $\Gamma$ all \emph{unsafe edges};
\item remove from $\Gamma$ all vertices and edges that are not reachable
  from $\initV$.
\end{inparaenum}

\paragraph{Most general safe scheduler}\label{sec:mgs-def}
Consider a task system $\Upsilon$ that is schedulable for the hard
  tasks.
Then, Scheduler has a winning strategy to avoid $\bot$ in
$\Gamma_\Upsilon$.
We say a
non-deterministic strategy in $\Gamma_\Upsilon$ is the \emph{most general safe scheduler} (MGS) for the hard tasks
if from any vertex of Scheduler it allows all
 safe edges\footnote{The existence of a most general safe scheduler follows from the existence of a unique most general (a.k.a. maximally permissive) strategy for safety objectives \cite{RW87}.}.

\stam{
\paragraph{Learning a schedule}
Now under the assumption that we do not know the probability distribution of the tasks entirely, we describe a method to learn a strategy with strong guarantees whenever possible (model-free learning), or to learn the probability distributions of the tasks individually, and then to obtain an optimal strategy (model-based learning) for the learnt system using {\sf Storm}.

We start with model-free learning.
A diagram illustrating the process appears in Figure \ref{fig:MF}.
The training process works as follows. 
It consists of a pre-computation module, and the result of the pre-computation is passed to the training module, that learns and suggests a schedule to the scheduler.
At each step, only an \emph{observation} is revealed to the scheduler.
The scheduler is also agnostic of the set of active tasks, and as well as the set of safe actions that are allowed from a state of the MDP.
Based on the observation, the scheduler returns the id of the task that must be scheduled.
For model-free learning, the scheduler is agnostic of the safe actions at each step, and if it intends to schedule a task that is not safe or a task that is not active, then a shield is introduced that schedules no task if it is safe to do so, otherwise it chooses randomly one of the allowed tasks that does not lead to safety violation.
The system executes the action, and moves to the subsequent state, and then reveals the cost corresponding to this action and the observation corresponding to this new state to the scheduler.
}


\section{Model-Based Learning}
\label{sec:model_based}

We now investigate the case of \emph{model-based
  learning} of task systems. First, we consider the simpler case of
task systems with only soft tasks. We show that those systems are
always efficiently PAC learnable. Second, we consider learning task
systems with both hard and soft tasks. In that case, we study two
conditions for learnability. The first condition allows us to identify
task systems that are safely PAC learnable, i.e. learnable while
enforcing safety for the hard tasks. The second condition is stronger
and allows us to identify task systems that are safely and {\em
  efficiently} PAC learnable. 
  
\paragraph{{\bf Learning setting}}
We consider a setting in which we are given the structure of a task
system $\Upsilon=((\tau_i)_{i \in I},F,H)$ to schedule. While the
structure is known, the actual distributions that describe the
behaviour of the tasks are unknown and need to be learnt to behave
optimally or near optimally. The learning must be done only by
observing the jobs that arrive along time. When the task system
contains some hard tasks ($H\not=\emptyset$), all deadlines of such
tasks must be enforced.

For learning the inter-arrival time distribution of a task, a
\emph{sample} corresponds to observing the time difference between the
arrivals of two consecutive jobs of that task. For learning the
computation time distribution, a sample corresponds to observing the CPU
time  a job of the task has been assigned up to completion. Thus
if a job does not finish execution before its deadline, we do not
obtain a valid sample for the computation time.  Given a class
of task systems, we say:
\begin{itemize}
\item the class is \emph{probably approximately correct (PAC)
    learnable} if there is an algorithm $\mathbb{L}$ such that for
  all task systems $\Upsilon$ in this class, for all
  $\epsilon, \gamma \in (0,1)$: given $\struct(\Upsilon)$, the
  algorithm $\mathbb{L}$ can execute the task system $\Upsilon$, and
  can compute $\Upsilon^M$ such that
  $\Upsilon \approx^\epsilon \Upsilon^M$, with probability at
  least $1 - \gamma$.
\item the class is \emph{safely PAC learnable} if it is PAC
  learnable, and $\mathbb{L}$ can ensure safety for the hard tasks
  while computing $\Upsilon^M$.
\item the class is (safely) \emph{efficiently PAC
    learnable} if it is (safely) PAC learnable, and there is
  a polynomial $q$ in the size of the task system, in
  $\nicefrac{1}{\epsilon}$, and in $\nicefrac{1}{\gamma}$, s.t.
  $\mathbb{L}$ obtains enough samples to compute
  $\Upsilon^M$ in a time bounded by $q$.
\end{itemize}

Note that our notion of efficient PAC learning is stronger
than the definition used in classical PAC learning terminology
\cite{Valiant84} since we take into account the time that is needed to
get samples and not only the number of samples needed.

\paragraph{{\bf Learning discrete finite distributions}}
\label{sec:mb-dist}
%
To learn an unknown discrete distribution $p$ defined on a finite
domain ${\sf Dom}(p)$, we collect i.i.d. samples from that
distribution and infer a model of it. Formally, given a sequence $\seq=(s_j)_{j \in J}$ of samples
drawn i.i.d. from the distribution $p$, we denote by
$p(\seq) : {\sf Dom}(p) \rightarrow [0,1]$, the function that maps
every element $a \in {\sf Dom}(p)$ to its relative frequency in
$\seq$.
The following lemma tells us that
if the size of $S$ is large enough then the model $p(\seq)$ is close to the actual $p$ with high probability.

\begin{lemma} \label{lem:numsample} For all finite discrete
  distributions $p$ with $|{\sf Dom}(p)| = r$, for all
  $\epsilon, \gamma \in (0,1)$ such that $\pi^p_{\min} > \epsilon$,
  if $\seq$ is a sequence of at least
  $r \cdot \lceil \frac{1}{2\epsilon^2}(\ln 2r-\ln \gamma) \rceil$
  i.i.d. samples drawn from $p$, then $p \sim^\epsilon p(\seq)$ with
  probability at least $1 - \gamma$. 
\end{lemma}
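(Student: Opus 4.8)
The plan is to unfold the definition of $\sim^\epsilon$ into its two requirements and establish them separately. Recall that $p \sim^\epsilon p(\seq)$ asks for (i) $\supp(p)=\supp(p(\seq))$ and (ii) $|p(a)-p(\seq)(a)|\le\epsilon$ for every $a\in\supp(p)$. I would first prove the quantitative bound (ii) by a concentration argument, and then derive the structural identity (i) for free from (ii) together with the hypothesis $\pi^p_{\min}>\epsilon$.

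For (ii), fix $a\in\supp(p)$ and write the empirical frequency $p(\seq)(a)=\frac1N\sum_{j=1}^N X_j$, where $N=|\seq|$ and $X_j\in\{0,1\}$ indicates whether the $j$-th sample equals $a$. The $X_j$ are i.i.d. Bernoulli variables with mean $p(a)$, so $\expect[p(\seq)(a)]=p(a)$, and since each $X_j$ lies in $[0,1]$, Hoeffding's inequality gives $\rvprob\big[\,|p(\seq)(a)-p(a)|>\epsilon\,\big]\le 2e^{-2N\epsilon^2}$. Taking a union bound over the at most $r$ elements of $\supp(p)$, the probability that (ii) fails for some $a$ is at most $2r\,e^{-2N\epsilon^2}$. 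A direct computation shows this is $\le\gamma$ as soon as $N\ge \frac{1}{2\epsilon^2}(\ln 2r-\ln\gamma)$, and the sample count $N \ge r\lceil \frac{1}{2\epsilon^2}(\ln 2r-\ln\gamma)\rceil$ assumed in the statement clearly exceeds this threshold (since $r\ge 1$ and $\lceil x\rceil\ge x$). Hence (ii) holds with probability at least $1-\gamma$.

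It remains to obtain (i) on the same good event. The inclusion $\supp(p(\seq))\subseteq\supp(p)$ is immediate, as an element with $p(a)=0$ is never drawn and thus has empirical frequency $0$. For the reverse inclusion, take any $a\in\supp(p)$; then $p(a)\ge\pi^p_{\min}$, and on the event where (ii) holds we get $p(\seq)(a)\ge p(a)-\epsilon\ge \pi^p_{\min}-\epsilon>0$, using precisely the hypothesis $\pi^p_{\min}>\epsilon$. Thus $a\in\supp(p(\seq))$, the two supports coincide, and together with (ii) this yields $p\sim^\epsilon p(\seq)$ with probability at least $1-\gamma$.

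The argument is essentially a textbook Hoeffding-plus-union-bound estimate, so I do not expect a genuine obstacle; the one point worth flagging is conceptual rather than computational, namely the observation that the support equality (i) need not be argued by a separate coupon-collector-style bound on whether every support element is ever sampled --- it is forced by the closeness bound (ii) and the strict gap $\pi^p_{\min}>\epsilon$. I would also note that the extra factor $r$ in the stated sample size is not required for correctness: it only makes the hypothesis more generous than the $\frac{1}{2\epsilon^2}(\ln 2r-\ln\gamma)$ samples actually needed.
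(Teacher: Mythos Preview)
Your proof is correct and follows the same Hoeffding-plus-union-bound approach as the paper. In fact you are more careful: you explicitly verify the support equality $\supp(p)=\supp(p(\seq))$ using the hypothesis $\pi^p_{\min}>\epsilon$ (a point the paper's proof leaves implicit), and your observation that the extra factor $r$ is unnecessary is accurate---the paper obtains the $r\cdot\lceil\cdot\rceil$ bound by arguing that $m=\lceil \frac{1}{2\epsilon^2}(\ln 2r-\ln\gamma)\rceil$ samples are needed per element and then multiplying by the $r$ elements, whereas (as you note) each sample informs all coordinates simultaneously.
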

\begin{proof}
For a distribution $p$, and an element $e$ in $\dom(p)$, let $X_1^{p_e}, \dots, X_m^{p_e}$ be independent and identically distributed Bernoulli random variables with $\rvexpect{X_j^{p_e}}=\mu$ for $j \in [m]$.
Recall that a Bernoulli random variable takes two values, $1$ and $0$.
In our case, the value $1$ denotes witnessing the element $e$ in the domain of the distribution $p$.
Thus we have $p(e)=\mu$.
Let $\overline{X}_m^{p_e} = \frac{1}{m}\displaystyle{\sum_{j \in [m]} X_j^{p_e}}$.
Here $m$ is the number of samples required to learn the probability of occurrence of the element $e$ of the support of the distribution.

By Hoeffding's two sided inequality, for the special case of Bernoulli random variables, we have,
\[
\rvprob(|\overline{X}_m^{p_e} - \mu| \ge \epsilon) \leq 2\exp(-2m\epsilon^2).
\]
Now we want that the probability of $|\overline{X}_m^{p_e} - \mu| \ge \epsilon$ for all $e \in \dom(p)$ is at most $\frac{\gamma}{r}$, so that the probability of $|\overline{X}_m^{p_e} - \mu| \ge \epsilon$ for some element $e$ in the domain of the distribution $p$ is at most $\gamma$.

Thus we have $2\exp(-2m \epsilon^2) \le \frac{\gamma}{r}$ leading to $m \ge \lceil \frac{1}{2\epsilon^2}(\ln 2r-\ln \gamma) \rceil$.
Since there are $r$ elements in the domain, we need a total of at least $\fn(r, \epsilon, \gamma)=m \cdot r$ samples, and hence the result.
\qed
\end{proof}

We say that we ``PAC learn'' a distribution $p$ if for all
$\epsilon, \gamma \in (0,1)$ such that $\pi^p_{\min} > \epsilon$, by
drawing a sequence $\seq$ of i.i.d. samples from~$p$, we have
$p \sim^\epsilon p(\seq)$ with probability at least $1 - \gamma$.
Given a task system $\Upsilon$, if we
can learn the distributions corresponding to all the tasks in
$\Upsilon$, and hence a model $\Upsilon^M$, such
that each learnt distribution in $\Upsilon^M$ is structurally
identical to its corresponding distribution in $\Upsilon$, the
corresponding MDP are structurally identical.


\paragraph{Efficient PAC learning}
\label{sec:mb-onlysoft}
Let $\Upsilon=((\tau_i)_{i \in I},F,\emptyset)$ be a task system with
soft tasks only, and let $\epsilon, \gamma \in (0,1)$.  We assume that
for all distributions $p$ occurring in the models of the tasks in
$\Upsilon$: $\pi^p_{\min} > \epsilon$.  To learn a model
$\Upsilon^M$ which is $\epsilon$-close to $\Upsilon$ with probability
at least $1 - \gamma$, we apply Lemma~\ref{lem:numsample} in the
following algorithm:
\begin{enumerate}
\item for all tasks $i=1,2,\dots \in F$, repeat the following learning
  phase:\\
  Always schedule task $\tau_i$ when a job of this task is
  active. Collect the samples $\seq(\mathcal{A}_i)$
  of ${\cal A}_i$ and $\seq(\mathcal{C}_i)$ of ${\cal C}_i$ as
  observed. Collect enough samples to apply Lemma~\ref{lem:numsample}
  and obtain the desired accuracy as fixed by $\epsilon$ and $\gamma$.  
\item the models of inter-arrival time distribution and computation
  time distribution for task $\tau_i$ are $p(\seq(\mathcal{A}_i))$
  and $p(\seq(\mathcal{C}_i))$ respectively.
\end{enumerate}

\noindent It follows that task systems with only soft tasks are \emph{efficiently} PAC learnable:
  
\begin{theorem} \label{thm:onlysoft}
There is a learning algorithm such that for all task systems $\Upsilon=((\tau_i)_{i \in I},F,H)$ with $H=\emptyset$, for all $\epsilon, \gamma \in (0,1)$, the algorithm learns a model $\Upsilon^M$ such that $\Upsilon^M \approx^{\epsilon} \Upsilon$ with probability at least $1-\gamma$ after executing $\Upsilon$ for $|F| \cdot \mathcal{A}_{\max} \cdot \mathbb{D} \cdot \lceil \frac{1}{2\epsilon^2}(\ln 4\mathbb{D}|F|-\ln \gamma)\rceil$ steps.
\end{theorem}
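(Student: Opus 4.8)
The plan is to analyse the two-phase algorithm just described, reducing everything to Lemma~\ref{lem:numsample}. For each of the $|F|$ soft tasks we must learn exactly two distributions, $\A_i$ and $\C_i$, so there are $2|F|$ distributions in total; the proof then has three ingredients: (a) arguing that exclusive scheduling yields genuine i.i.d.\ samples of each distribution, (b) a union bound turning per-distribution guarantees into the global bound $\gamma$, and (c) translating a number of samples into a number of execution steps.

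For (a), when the algorithm schedules $\tau_i$ exclusively, every job of $\tau_i$ receives the CPU continuously; since $\max(\dom(\C_i)) \le D_i$, the job completes before its deadline, so the time-to-completion is a valid sample of $\C_i$, while the (exogenous) delay to the next arrival is a sample of $\A_i$. By the independence assumptions on the task system these observations are i.i.d., so Lemma~\ref{lem:numsample} applies to each distribution separately. Its conclusion $p \sim^\epsilon p(\seq)$ already includes structural identity of the learnt and true distributions (this is where the standing hypothesis $\pi^p_{\min} > \epsilon$ is used), so that together with the known structure we obtain $\Upsilon^M \approx^\epsilon \Upsilon$ exactly when every one of the $2|F|$ distributions is learnt $\epsilon$-closely.

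For (b), I would learn each distribution with failure probability at most $\gamma/(2|F|)$; a union bound over the $2|F|$ distributions then bounds the total failure probability by $\gamma$. Applying Lemma~\ref{lem:numsample} with confidence $\gamma/(2|F|)$ to a distribution of domain size $r \le \mathbb{D}$ requires at most
\[
  r \cdot \Big\lceil \tfrac{1}{2\epsilon^2}\big(\ln 2r - \ln\tfrac{\gamma}{2|F|}\big)\Big\rceil
  \;\le\; \mathbb{D}\cdot\Big\lceil\tfrac{1}{2\epsilon^2}\big(\ln 4\mathbb{D}|F| - \ln\gamma\big)\Big\rceil
\]
samples, using $\ln 2r + \ln 2|F| = \ln 4r|F| \le \ln 4\mathbb{D}|F|$; this is precisely the per-distribution count occurring in the statement.

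For (c), running $\tau_i$ in isolation, a single inter-arrival period simultaneously produces one sample of $\A_i$ and one of $\C_i$ and lasts at most $\mathcal{A}_{\max}$ ticks. Hence gathering the bound above for both distributions of $\tau_i$ costs at most $\mathcal{A}_{\max}\cdot\mathbb{D}\cdot\lceil\tfrac{1}{2\epsilon^2}(\ln 4\mathbb{D}|F| - \ln\gamma)\rceil$ steps, and summing over the $|F|$ tasks yields the claimed total. The only genuinely delicate point is (a): one must check that exclusive scheduling really does deliver valid, independent computation-time samples --- a job given the CPU alone never misses its deadline, so no sample of $\C_i$ is ever lost --- whereas the remaining steps are routine propagation of constants through the union bound and the $\mathcal{A}_{\max}$-per-period time accounting.
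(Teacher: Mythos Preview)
Your proposal is correct and follows essentially the same approach as the paper: a union bound over the $2|F|$ distributions with per-distribution failure probability $\gamma/(2|F|)$, invocation of Lemma~\ref{lem:numsample} with $r\le\mathbb{D}$, and the time accounting via at most $\mathcal{A}_{\max}$ ticks per simultaneous $(\A_i,\C_i)$-sample summed over the $|F|$ tasks. Your part~(a) spells out why exclusive scheduling yields valid i.i.d.\ computation-time samples (no job is preempted or killed, and $\max(\dom(\C_i))\le D_i$ guarantees completion), which the paper leaves implicit in the algorithm description; otherwise the arguments coincide.
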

\begin{proof}
Using Lemma \ref{lem:numsample}, given $\epsilon, \gamma' \in (0,1)$, for every distribution $p$ of the task system, a sequence $\seq$ of $\mathbb{D} \cdot \lceil \frac{1}{2\epsilon^2}(\ln 2\mathbb{D}-\ln \gamma') \rceil$ i.i.d. samples suffices to have $p(\seq) \sim^{\epsilon} p$ with probability at least $1-\gamma'$.
Since in the task system $\Upsilon$, there are $2|F|$ distributions, with probability at least $1-2|F|\gamma'$, we have that the learnt model $\Upsilon^M \approx^{\epsilon} \Upsilon$.
Thus for $\gamma'= \frac{\gamma}{2|F|}$, and using $2\exp(-2m \epsilon^2) \le \frac{\gamma}{2|F|\mathbb{D}}$, we have that for each distribution, a sequence of $\mathbb{D} \cdot \lceil \frac{1}{2\epsilon^2}(\ln 4\mathbb{D}|F|-\ln \gamma) \rceil$ samples suffices so that $\Upsilon^M \approx^{\epsilon} \Upsilon$ with probability at least $1-\gamma$.

Since samples for computation time distribution and inter-arrival time distribution for each soft task can be collected simultaneously, and observing each sample takes a maximum of $\mathcal{A}_{\max}$ time steps, and we collect samples for each soft task by scheduling one soft task after another, the result follows.
\qed
\end{proof}

\paragraph{{\bf Safe learning with hard tasks}}
We turn to task
systems $\Upsilon=((\tau_i)_{i \in I},F,H)$ with both hard and soft
tasks. The learning
algorithm must ensure that all the jobs of hard tasks meet their
deadlines while learning the task distributions. The soft-task-only algorithm
is clearly
not valid for that more general case.
Recall we have assumed schedulability of the task system
for the hard tasks\footnote{Note that safety synthesis already identifies task systems that violate this condition.}. This is a
necessary condition for safe learning but it is not a
sufficient condition. Indeed, to apply Lemma~\ref{lem:numsample}, we
need enough samples for all tasks $i \in H \cup F$.

First, we note that when executing any safe schedule for the hard
tasks, we will observe enough samples for the hard tasks. Indeed,
under a safe schedule for the hard tasks, any job of a hard task that
enters the system will be executed to completion before its deadline. We then observe the value of the
inter-arrival and computation times for all the jobs of hard tasks that enter the
system. Unfortunately, this is not necessarily the case for soft
tasks when they execute in the presence of hard tasks.
Indeed, it is in general not possible to schedule all the jobs
of soft tasks up to completion. 
%
We thus need stronger conditions 
in order to be able to learn the distributions of the soft tasks while ensuring safety.

\paragraph{{\bf PAC guarantees for safe learning}}
Our condition to ensure safe PAC learnability relies on properties of the safe region $\Gamma^{\text{safe}}_\Upsilon$ in the MDP $\Gamma_\Upsilon$ associated to the task system $\Upsilon$. First, note that $\Gamma^{\text{safe}}_\Upsilon$ is guaranteed to be non-empty as the task system $\Upsilon$ is guaranteed to be schedulable for its hard tasks by hypothesis. Our condition will exploit the following property of its structure:

\begin{lemma}\label{lem:singleMEC}
  Let $\Upsilon=((\tau_i)_{i \in I},F,H)$ be a task system and let
  $\Gamma^{\text{safe}}_\Upsilon$
    be the safe region of its MDP. 
    Then $\Gamma^{\text{safe}}_\Upsilon$ is a single maximal end-component (MEC).
\end{lemma}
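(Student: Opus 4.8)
The plan is to exhibit the entire vertex set of $\Gamma^{\text{safe}}_\Upsilon$ as a single end-component and then observe that maximality comes for free. Concretely, let $T$ be the set of all vertices of $\Gamma^{\text{safe}}_\Upsilon$ and, for every Scheduler vertex $v \in T \cap V_\playerOne$, let $A'(v) = \safe(v)$ be the set of safe actions available at $v$. I would reduce the lemma to two claims: (i) the pair $(T,A')$ is a genuine sub-MDP, i.e. it is \emph{closed} in the sense that $\post(v,a) \subseteq T$ for every $v \in T \cap V_\playerOne$ and every $a \in A'(v)$; and (ii) the underlying graph of $(T,A')$ is \emph{strongly connected}. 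Once both hold, $(T,A')$ is an end-component covering every vertex of $\Gamma^{\text{safe}}_\Upsilon$, so no strictly larger end-component can exist and $(T,A')$ is the unique, hence maximal, end-component.

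Closedness is the easy half and follows directly from the semantics of safe synthesis. A TaskGen vertex $v \in V_\playerTwo$ is safe only if \emph{all} of its probabilistic successors are safe: since TaskGen is treated adversarially, a single unsafe successor reachable with positive probability would already prevent Scheduler from guaranteeing avoidance of $\badV$ from $v$. Hence every edge leaving a safe TaskGen vertex is itself safe and is retained in $\Gamma^{\text{safe}}_\Upsilon$, so its target lies in $T$. For a Scheduler vertex $v$, each action $a \in A'(v) = \safe(v)$ leads by definition to a safe TaskGen vertex, whose successors are in turn all safe. Therefore $\post(v,a) \subseteq T$ for all $v \in T \cap V_\playerOne$ and $a \in A'(v)$, which is exactly closedness.

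For strong connectivity I would use $\initV$ as a hub and show that every safe vertex is both reachable from $\initV$ and able to reach $\initV$ using only safe edges. The first direction is immediate from the construction of $\Gamma^{\text{safe}}_\Upsilon$: its second transformation deletes exactly the vertices not reachable from $\initV$, and every surviving edge is safe, so inside $(T,A')$ the vertex $\initV$ reaches every $v \in T$. The substance lies in the converse, namely steering the system from an arbitrary safe vertex back to $\initV$. Here I would exploit the structural assumptions $\max(\dom(\C_i)) \le D_i \le \min(\dom(\A_i))$, which guarantee at most one live job per task and that a new arrival erases the previous job's deadline. From any safe state one can follow a safe schedule that completes each pending hard job before its deadline (schedulability ensures such a schedule exists) and disposes of the soft jobs; since inter-arrival times are bounded by $\Amax$, after a bounded number of steps every task can be brought into the exact phase recorded by $\initV$. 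Because $\Gamma^{\text{safe}}_\Upsilon$ keeps \emph{all} safe edges, it keeps in particular the TaskGen branches — drawn from the supports of the $\A_i$ and $\C_i$ — that realise the arrival and completion outcomes rebuilding $\initV$, so a safe return path exists in the graph of $(T,A')$.

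The main obstacle is precisely this return argument. With non-Dirac inter-arrival and computation-time distributions the tasks need not re-synchronise on their own, so one must show that among TaskGen's positive-probability branches there is always a choice re-establishing the exact phase alignment of $\initV$, and that Scheduler can keep the whole flushing-and-resynchronising path inside the safe region at every step. I expect this to need a careful induction over the task phases, using that $\initV$ is itself safe (so its predecessors along such a path are safe) and that on-time completion of hard jobs never forces a visit to $\badV$.
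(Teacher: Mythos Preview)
Your overall framework matches the paper's: reduce the claim to strong connectivity of the underlying graph of $\Gamma^{\text{safe}}_\Upsilon$, use $\initV$ as a hub, and observe that reachability \emph{from} $\initV$ is built into the definition of the safe region. Your closedness argument is fine and the paper leaves it implicit. Where your proposal has a genuine gap is precisely where you flag the ``main obstacle'': the return path from an arbitrary safe vertex $v$ back to $\initV$. You only sketch an expectation (``a careful induction over the task phases''), and the backward-safety reasoning you suggest---that predecessors of the safe vertex $\initV$ along a candidate path are themselves safe---is not sound: a TaskGen vertex is safe only when \emph{all} of its stochastic successors are safe, so safety of one successor such as $\initV$ tells you nothing about the predecessor. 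Nor does boundedness of inter-arrival times by $\Amax$ by itself force all tasks to realign simultaneously.

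The paper fills this gap with a concrete construction you did not find. Fix a safe path $\pi$ from $\initV$ to $v$ and record, for every task $\tau_i$, the sequence $\sigma_i = t^i(1), \ldots, t^i(k_i)$ of inter-arrival times of $\tau_i$ actually observed along $\pi$; extend it by one further value $t^i(k_i+1) \in \supp(\A_i)$ so that the next arrival of $\tau_i$ falls strictly after $v$, and set $\Delta_i = \sum_{j \le k_i+1} t^i(j)$. Now let $\Delta = \mathrm{lcm}(\Delta_i)_i$ and extend $\pi$ to a path $\pi'$ of exactly $\Delta$ CPU ticks in which TaskGen, for each task $\tau_i$, simply \emph{repeats} the pattern $\sigma_i'$ with period $\Delta_i$ (every value lies in $\supp(\A_i)$, so these are legal TaskGen moves), while Scheduler plays any safe action at each step (schedulability guarantees one always exists, so every $\playerOne$-vertex in the safe region has an outgoing safe edge). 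At time $\Delta$ every task has just released a fresh job, so the terminal $\playerOne$-vertex is exactly $\initV$. Safety along $\pi'$ is argued \emph{forward}---Scheduler stays on safe edges, and safe TaskGen vertices have only safe successors---not backward from $\initV$. This lcm trick is the synchronisation mechanism your induction sketch was reaching for; without it the return argument is incomplete.
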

\begin{proof}
  We first assume that the task system $\Upsilon=((\tau_i)_{i \in
    I},F,H)$ is schedulable. Otherwise,
  $\Gamma^{\text{safe}}_\Upsilon$ is empty and the Lemma is trivially
  true.  Let $V$ and
  $E$ be the set of vertices and the set of edges of
  $\Gamma^{\text{safe}}_\Upsilon$ respectively.  First, observe that,
  since we want to prove that the whole MDP
  $\Gamma^{\text{safe}}_\Upsilon$ corresponds to an MEC, we only need
  to show that its underlying graph
  $(V,E)$ is strongly connected. Indeed, since
  $(V,E)$ contains all vertices and edges from
  $\Gamma^{\text{safe}}_\Upsilon$, it is necessarily maximal, and all
  choices of actions from any vertex will always lead to a vertex in
  $V$.

  In order to show the strongly connected property, we fix a vertex
  $v\in
  V$, and show that there exists a path in
  $\Gamma^{\text{safe}}_\Upsilon$ from $v$ to
  $\initV$. Since all vertices in
  $V$ are, by construction of
  $\Gamma^{\text{safe}}_\Upsilon$, reachable from the initial vertex
  $\initV$, this entails that all vertices
  $v'$ are also reachable from
  $v$, hence, the graph is strongly connected.

  Let us first assume that $v\in V_\playerOne$, i.e., $v$ is a vertex
  where Scheduler has to take a decision. Let
  $\initV=v_0,v_0',v_1,v_1',\cdots,v_{n-1}',v_n=v$ be the path $\pi$
  leading to $v$, where all vertices $v_j$ belong to
  Scheduler, and all $v_j'$ are are vertices that belong to
  TaskGen. 

  Then, from path $\pi$, we extract, for all tasks $\tau_i$ the
  sequence of \emph{actual inter-arrival times}
  $\sigma_i=t^i(1),t^i(2),\ldots,t^i(k_i)$ defined as follows: for all
  $1\leq j\leq k_i$, $t^i(j)\in \supp(\A_i)$ is the time elapsed (in
  CPU ticks) between the arrival of the $j-1$th job the $j$th job of
  task $i$ along $\pi$ (assuming the initial release occurring in the
  initial state $\initV$ is the $0$-th release). In other words,
  letting $T^i(j)=\sum_{k=1}^jt^i(k)$, the $j$th job of $\tau_i$ is
  released along $\pi$ on the transition between $v_{T^i(j-1)}'$ and
  $v_{T^i(j)}$. Observe thus that all tasks $i\in [n]$ are in the same
  state in vertex $\initV$ and in vertex $v_{T^i(j)}$, i.e. the time
  to the deadline, and the probability distributions on the next
  arrival and computation times are the same in $\initV$ and
  $v_{T^i(j)}$. However, the vertices $v_{T^i(j)}$ can be different
  for all the different tasks, since they depend on the sequence of
  job releases of $\tau_i$ along $\pi$. Nevertheless, we claim that
  $\pi$ can be extended, by repeating the sequence of arrivals of all
  the tasks along $\pi$, in order to reach a vertex where all tasks
  have just submitted a job (i.e. $\initV$). To this aim, we first
  extend, for all tasks $i\in[i]$, $\sigma_i$ into
  $\sigma_i'=\sigma_i, t^i(k_i+1)$, where $t^i(k_i+1) \in \supp(\A_i)$
  ensures that the $k_i+1$ arrival of a $\tau_i$ occurs \emph{after}
  $v$.

  For all $i\in [n]$, let $\Delta_i$ denote
  $\sum_{j=1}^{k_i+1}t^i(j)$, i.e. $\Delta_i$ is the total number of
  CPU ticks needed to reach the first state after $v$ where task $i$
  has just submitted a job (following the sequence of arrival
  $\sigma_i'$ defined above). Further, let
  $\Delta=\textrm{lcm}(\Delta_i)_{i \in [n]}$. Now, let $\pi'$ be a
  path in $\Gamma^{\text{safe}}_\Upsilon$ that respects the following
  properties:
  \begin{enumerate}
  \item $\pi$ is a prefix of $\pi'$;
  \item $\pi'$ has a length of $\Delta$ CPU ticks;
  \item $\pi'$ ends in a $\playerOne$ vertex $v'$; and
  \item for all tasks $i\in [n]$: $\tau_i$ submits a job at time $t$
    along $\pi'$ iff it submits a job at time $t\mod \Delta_i$ along
    $\pi$.
  \end{enumerate}
  Observe that, in the definition of $\pi'$, we do not constrain the
  decisions of Scheduler after the prefix $\pi$. First, let us
  explain why such a path exists. Observe that the sequence of task
  arrival times is legal, since it consists, for all tasks $i$, in
  repeating $\Delta/\Delta_i$ times the sequence $\sigma_i'$ of
  inter-arrival times which is legal since it is extracted from path
  $\pi$ (remember that nothing that Scheduler player does can restrict
  the times at which TaskGen introduces new jobs in the system). Then, since
  $\Upsilon$ is schedulable, we have the guarantee that all $\playerOne$
  vertices in $\Gamma^{\text{safe}}_\Upsilon$ have at least one
  outgoing edge. This is sufficient to ensure that $\pi'$ indeed
  exists. Finally, we observe $\pi'$ visits $v$ (since $\pi$ is a
  prefix of $\pi'$), and that the last vertex $v'$ of $\pi'$ is a
  $\playerOne$ vertex obtained just after \emph{all tasks} have submitted
  a job, by construction. Thus $v'=\initV$, and we conclude that, from
  all $v\in V_\playerOne$ which is reachable from $\initV$, one can
  find a path in $\Gamma^{\text{safe}}_\Upsilon$ that leads back to $\initV$.
 
  This reasoning can be extended to account for the nodes
  $v\in V_\playerTwo$: one can simply select any successor
  $\overline{v}\in V_\playerOne$ of $v$, and apply the above reasoning
  from $\overline{v}$ to find a path going back to $\initV$.\qed
\end{proof}

\paragraph{{\bf Good for sampling}}
The safe region
$\Gamma^{\text{safe}}_\Upsilon$ of the task system
$\Upsilon=((\tau_i)_{i \in
  I},F,H)$ is {\em good for sampling} if for all soft tasks $i \in
F$, there exists a vertex $v_i \in
\Gamma^{\text{safe}}_\Upsilon$ such that:
\begin{inparaenum}[(i)]
\item a new job of task $i$ enters the system in $v_i$; and
\item there exists a strategy
  $\sigma_i$ of Scheduler that is compatible with the set of safe
  schedules for the hard tasks so that from
  $v_i$, under schedule
  $\sigma_i$, the new job associated to task
  $\tau_i$ is guaranteed to reach completion before its deadline.
\end{inparaenum}

There is an algorithm that executes in polynomial time in the size of
$\Gamma^{\text{safe}}_\Upsilon$ and which decides if
$\Gamma^{\text{safe}}_\Upsilon$ is good for sampling. Also, remember
that only the knowledge of the structure of the task system is needed
to compute $\Gamma^{\text{safe}}_\Upsilon$.

Given a task system $\Gamma^{\text{safe}}_\Upsilon$ that is {\em good
  for sampling}, given any $\epsilon,\gamma \in (0,1)$, we safely
learn a model $\Upsilon^M$ which is $\epsilon$-close to $\Upsilon$
with probability at least $1-\gamma$ (PAC guarantees) by applying
the following algorithm:
\begin{enumerate}
\item Choose any safe strategy $\sigma_H$ for the hard tasks, and
  apply it until enough samples
  $(\seq(\mathcal{A}_i),\seq(\mathcal{C}_i))$ for each $i \in H$ have
  been collected according to Lemma~\ref{lem:numsample}. The models
  for tasks $i \in H$ are $p(\seq(\mathcal{A}_i))$ and
  $p(\seq(\mathcal{C}_i))$.
\item Then for each $i \in F$, apply the following phases:
  \begin{enumerate}
  \item from the current vertex $v$, schedule some task uniformly at random among
    the set of tasks that correspond to the safe edges in $\safe(v)$ up to reaching some $v_i$
    (while choosing tasks that do not violate safety
    uniformly at random, we reach some $v_i$ with probability $1$.\footnote{This follows from the fact that there is a single MEC in the MDP by Lemma \ref{lem:singleMEC}.}
    The existence of a $v_i$ is guaranteed by the hypothesis that
    $\Gamma^{\text{safe}}_\Upsilon$ is good for sampling).
  \item from $v_i$, apply the schedule $\sigma_i$ as defined by the second condition in the
    {\em good for sampling condition}. This way we are guaranteed to
    observe the computation time requested by the new job of task $i$
    that entered the system in vertex $v_i$, no matter how TaskGen
    behaves. At the completion of this job of task $i$, we have
    collected a valid sample of task $i$.
  \item go back to $(a)$ until enough samples
    $(\seq(\mathcal{A}_i),\seq(\mathcal{C}_i))$ have been collected
    for soft task $i$ according to Lemma~\ref{lem:numsample}.  
  \end{enumerate}
\end{enumerate}
The properties of the learning algorithm above are used to prove that:

\begin{theorem} \label{thm:good_sampling}
There is a learning algorithm such that for all task systems $\Upsilon=((\tau_i)_{i \in I},F,H)$ with a safe region $\Gamma^{\text{safe}}_\Upsilon$ that is good for sampling, for all $\epsilon, \gamma \in (0,1)$, the algorithm learns a model $\Upsilon^M$ such that $\Upsilon^M \approx^{\epsilon} \Upsilon$ with probability at least $1-\gamma$.
\end{theorem}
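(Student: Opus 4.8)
The plan is to show that the algorithm described above collects, for each of the $2|I|$ distributions $\mathcal{A}_i,\mathcal{C}_i$ (with $I = F \cup H$), a sequence of i.i.d.\ samples long enough to invoke Lemma~\ref{lem:numsample}, all while staying inside the safe region $\Gamma^{\text{safe}}_\Upsilon$. Once this is in place, the concluding step mirrors the proof of Theorem~\ref{thm:onlysoft}: fixing the per-distribution failure probability to $\gamma' = \gamma/(2|I|)$ and drawing at least $\mathbb{D} \cdot \lceil \frac{1}{2\epsilon^2}(\ln 2\mathbb{D} - \ln \gamma') \rceil$ samples from every $\mathcal{A}_i$ and every $\mathcal{C}_i$ yields $p(\seq)\sim^\epsilon p$ for a single distribution with probability at least $1-\gamma'$, and a union bound over the $2|I|$ distributions gives $\Upsilon^M \approx^\epsilon \Upsilon$ with probability at least $1-\gamma$. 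So the real content is to argue that every required sample can be obtained safely and in finite time.

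For the inter-arrival distributions $\mathcal{A}_i$ this is immediate: TaskGen releases jobs independently of Scheduler, so any sufficiently long execution inside $\Gamma^{\text{safe}}_\Upsilon$ reveals the inter-arrival times of all tasks. For the hard tasks I would invoke the observation already made in the text, namely that under any safe strategy $\sigma_H$ every hard-task job completes before its deadline and therefore yields a valid computation-time sample; thus Step~1 collects enough samples for all $i\in H$. The independence assumptions built into the task-system model guarantee that successive computation times of a fixed task are i.i.d.\ draws from $\mathcal{C}_i$, as Lemma~\ref{lem:numsample} requires.

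The crux is collecting computation-time samples for the soft tasks in Step~2. Here the good-for-sampling hypothesis supplies, for each $i\in F$, a vertex $v_i$ in which a fresh job of $\tau_i$ arrives together with a safe strategy $\sigma_i$ under which that job runs to completion; executing $\sigma_i$ from $v_i$ therefore deterministically produces one valid i.i.d.\ sample of $\mathcal{C}_i$. What remains is to show that phase $(a)$ reaches such a $v_i$, and does so in finite time almost surely. This is where Lemma~\ref{lem:singleMEC} enters: since $\Gamma^{\text{safe}}_\Upsilon$ is a single MEC, the memoryless randomized strategy that from each $\playerOne$-vertex picks every safe edge in $\safe(v)$ with positive probability induces a finite Markov chain all of whose reachable states communicate (every surviving stochastic transition of TaskGen also has positive probability), hence form a single recurrent class containing $v_i$. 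Consequently $v_i$ is visited in finite expected time with probability $1$, so each iteration of phases $(a)$--$(c)$ terminates almost surely and can be repeated until the sample budget for $\mathcal{C}_i$ is met.

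I expect the main obstacle to be precisely this almost-sure, finite-time reachability of $v_i$: one must translate the graph-theoretic single-MEC statement of Lemma~\ref{lem:singleMEC} into a statement about the Markov chain induced by the uniform-random safe strategy, and verify that restricting Scheduler to safe edges does not disconnect $v_i$ from the current vertex (which again follows from strong connectivity of $\Gamma^{\text{safe}}_\Upsilon$, as the safe edges are exactly those surviving in the safe region). A secondary point needing care is that the samples fed into Lemma~\ref{lem:numsample} are genuinely i.i.d.\ despite being harvested across many interleaved phases; this follows from the independence of inter-arrival and computation times across jobs, so that conditioning on Scheduler's decisions does not bias a completed job's observed computation time.
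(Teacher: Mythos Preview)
Your proposal is correct and follows essentially the same route as the paper: both arguments use Lemma~\ref{lem:singleMEC} to conclude that the uniform-random safe strategy reaches each $v_i$ with probability~$1$, invoke the good-for-sampling strategy $\sigma_i$ to harvest computation-time samples there, and finish with the sample-count computation and union bound from Theorem~\ref{thm:onlysoft}. If anything, your treatment is slightly more careful than the paper's in making explicit the Markov-chain recurrence argument and the i.i.d.\ justification, which the paper leaves implicit.
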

\begin{proof}
For the hard tasks, as mentioned above, we can learn the distributions by applying the safe strategy $\sigma_H$ to collect enough samples $(\seq(\mathcal{A}_i),\seq(\mathcal{C}_i))$ for each $i \in H$.

We assume an order on the set of soft tasks.
First for all $\tau_i$ for $i \in F$, since $\Gamma^{\text{safe}}_\Upsilon$ is good for sampling, we note that the set $V_i$ of vertices $v_i$ (as defined in the definition of good for sampling condition) is non-empty.
Recall from Lemma \ref{lem:singleMEC} that $\Gamma^{\text{safe}}_\Upsilon$ has a single MEC.
Thus from every vertex of $\Gamma^{\text{safe}}_\Upsilon$, Scheduler by playing uniformly at random reaches some $v_i \in V_i$ with probability $1$, and hence can visit the vertices of $V_i$ infinitely often with probability $1$.
Now given $\epsilon$ and $\gamma$, using Theorem \ref{thm:onlysoft}, we can compute an $m$, the number of samples corresponding to each distribution required for safe PAC learning of the task system.
Since by playing uniformly at random, Scheduler has a strategy to visit the vertices of $V_i$ infinitely often with probability $1$, it is thus possible to visit these vertices at least $m$ times with arbitrarily high probability.

Also after we safely PAC learn the distributions for task $\tau_i$,
since there is a single MEC in $\Gamma^{\text{safe}}_\Upsilon$, there exists a uniform memoryless strategy to visit a vertex
$v_{i+1}$ corresponding to task $\tau_{i+1}$ with probability $1$.
Hence the result.
\qed
\end{proof}

In the algorithm above, to obtain one sample of a soft task, we need
to reach a particular vertex $v_i$ from which we can safely schedule a
new job for the task $i$ up to completion. As the underlying MDP
$\Gamma^{\text{safe}}_\Upsilon$ can be large (exponential in the
description of the task system), we cannot bound by a polynomial the
time needed to get the next sample in the learning algorithm. So, this
algorithm does not guarantee efficient PAC learning. We develop in the
next paragraph a stronger condition to guarantee efficient PAC learning.

\paragraph{{\bf Good for efficient sampling}}
The safe region $\Gamma^{\text{safe}}_\Upsilon$ of the task system
$\Upsilon=((\tau_i)_{i \in I},F,H)$ is {\em good for efficient
  sampling} if there exists $K \in \mathbb{N}$ which is bounded
polynomially in the size of
$\Upsilon=((\tau_i)_{i \in I},F,H)$, and if, for all soft tasks
$i \in F$ the two following conditions hold:
\begin{enumerate}
\item let $V^{\text{safe}}_\Box$ be the set of Scheduler vertices in
  $\Gamma^{\text{safe}}_\Upsilon$.  There is a non-empty subset
  ${\sf Safe}_i \subseteq V^{\text{safe}}_\Box$ of vertices from which
  there is a strategy $\sigma_i$ for Scheduler to schedule
  safely the tasks $H \cup \{i\}$ (i.e. all hard tasks  \emph{and} the task $i$); and
\item for all $v \in V^{\text{safe}}_\Box, i \in F$, there is a uniform memoryless strategy
  $\sigma_{\diamond {\sf Safe}_i}$ s.t.:
  \begin{enumerate}
  \item $\sigma_{\diamond {\sf Safe}_i}$ is compatible with the safe strategies (for the hard tasks) of
    $\Gamma^{\text{safe}}_\Upsilon$;
  \item when $\sigma_{\diamond {\sf Safe}_i}$ is executed from any
    $v \in V^{\text{safe}}_\Box$, then the set ${\sf Safe}_i$ is
    reached within $K$ steps.
    By Lemma \ref{lem:singleMEC}, since $\Gamma^{\text{safe}}_\Upsilon$ has a single MEC, we have that ${\sf Safe}_i$ is reachable from every $v \in V^{\text{safe}}_\Box$.
  \end{enumerate} 
\end{enumerate}
Here again, the condition can be efficiently decided: there is a polynomial-time
algorithm in the size of
$\Gamma^{\text{safe}}_\Upsilon$ that decides if
$\Gamma^{\text{safe}}_\Upsilon$ is good for efficient sampling.
  
Given a task system $\Gamma^{\text{safe}}_\Upsilon$ that is {\em good
  for efficient sampling}, given $\epsilon,\gamma \in (0,1)$, we
safely and efficiently learn a model $\Upsilon^M$ which is
$\epsilon$-close of $\Upsilon$ with probability at least than
$1-\gamma$ (efficient PAC guarantees) by applying:
\begin{enumerate}
\item Choose any safe strategy $\sigma_H$ for the hard tasks, and
  apply this strategy until enough samples
  $(\seq(\mathcal{A}_i),\seq(\mathcal{C}_i))$ for each $i \in H$ have
  been collected according to Lemma~\ref{lem:numsample}. The models
  for tasks $i \in H$ are $p(\seq(\mathcal{A}_i))$ and
  $p(\seq(\mathcal{C}_i))$.
\item Then for each $i \in F$, apply the following phase:
  \begin{enumerate}
  \item from the current vertex $v$, play
    $\sigma_{\diamond {\sf Safe}_i}$ to reach the set ${\sf Safe}_i$.
  \item from the current vertex in ${\sf Safe}_i$, apply the schedule
    $\sigma_i$ as defined above. This way we are guaranteed to observe the computation
    time requested by all the jobs of task $i$ that enter the system.
    %
  \item go to $(b)$ until enough samples
    $(\seq(\mathcal{A}_i),\seq(\mathcal{C}_i))$ are collected
    for task $i$ as per Lem.~\ref{lem:numsample}. The models
    for task $i$ are given by $p(\seq(\mathcal{A}_i))$ and
    $p(\seq(\mathcal{C}_i))$.
  \end{enumerate}    
\end{enumerate}

For a task system $\Upsilon$, let $T$ =
$\mathcal{A}_{\max} \cdot \mathbb{D} \cdot \lceil
\frac{1}{2\epsilon^2}(\ln 4\mathbb{D}|\Upsilon|-\ln \gamma)\rceil$.
The properties of the learning algorithm above are used to prove the
following theorem:

\begin{theorem} \label{thm:good_efficient_sampling}
There exists a learning algorithm such that for all task systems $\Upsilon=((\tau_i)_{i \in I},F,H)$ with a safe region $\Gamma^{\text{safe}}_\Upsilon$ that is good for efficient sampling, for all $\epsilon, \gamma \in (0,1)$, the algorithm learns a model $\Upsilon^M$ such that $\Upsilon^M \approx^{\epsilon} \Upsilon$ with probability at least $1-\gamma$ after scheduling $\Upsilon$ for $T+ |F| \cdot (T+K)$ steps.
\end{theorem}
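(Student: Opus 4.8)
The plan is to combine the sample-complexity bookkeeping of Theorem~\ref{thm:onlysoft} with the structural guarantees of the \emph{good for efficient sampling} condition, verifying three things in turn: that the learnt model is $\epsilon$-close with probability at least $1-\gamma$ (correctness), that no hard deadline is ever violated (safety), and that the claimed step bound $T+|F|\cdot(T+K)$ holds (efficiency).

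For correctness I would reuse the union-bound argument of Theorem~\ref{thm:onlysoft}. The system $\Upsilon$ carries $2|\Upsilon|$ distributions (one $\mathcal{A}_i$ and one $\mathcal{C}_i$ per task), so taking $\gamma'=\nicefrac{\gamma}{2|\Upsilon|}$ in Lemma~\ref{lem:numsample} and applying a union bound, it suffices to gather $\mathbb{D}\cdot\lceil\frac{1}{2\epsilon^2}(\ln 4\mathbb{D}|\Upsilon|-\ln\gamma)\rceil$ i.i.d.\ samples per distribution to ensure $\Upsilon^M\approx^{\epsilon}\Upsilon$ with probability at least $1-\gamma$. The one point needing care is that the collected samples really are i.i.d.: by the task-system model the inter-arrival and computation times are independent draws from $\mathcal{A}_i$ and $\mathcal{C}_i$ that do not depend on the scheduling decisions, so observing a job of $\tau_i$ \emph{to completion} yields an exact independent sample of $\mathcal{C}_i$, and observing two consecutive arrivals of $\tau_i$ yields an independent sample of $\mathcal{A}_i$. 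I would then note that both the hard-task phase (under any safe $\sigma_H$ every hard job completes before its deadline, so all hard distributions are sampled in parallel) and each soft-task phase (under $\sigma_i$ the tasks $H\cup\{i\}$ are scheduled safely, so every job of $\tau_i$ completes) produce valid i.i.d.\ samples.

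Safety is the routine part: $\sigma_H$ is safe for the hard tasks by choice; by condition~2(a) the reaching strategy $\sigma_{\diamond{\sf Safe}_i}$ is compatible with the safe schedules; and by condition~1 the strategy $\sigma_i$ schedules $H\cup\{i\}$ safely, hence in particular $H$. Since every strategy the algorithm plays keeps the run inside $\Gamma^{\text{safe}}_\Upsilon$, all of whose vertices are safe, the algorithm may switch between phases at any Scheduler vertex without ever endangering a hard deadline.

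The efficiency bound is where the \emph{efficient} sampling condition does the work, and I expect the delicate point to be converting the phase-by-phase counts into the stated closed form rather than any single inequality. Each sample is observed within at most $\mathcal{A}_{\max}$ steps, so collecting the required $\mathbb{D}\cdot\lceil\frac{1}{2\epsilon^2}(\ln 4\mathbb{D}|\Upsilon|-\ln\gamma)\rceil$ samples for one task costs at most $T$ steps; the hard tasks are all sampled simultaneously under $\sigma_H$, contributing the leading $T$. For each soft task $i$, condition~2(b) guarantees that $\sigma_{\diamond{\sf Safe}_i}$ reaches ${\sf Safe}_i$ within $K$ steps against \emph{every} behaviour of TaskGen, after which $\sigma_i$ lets us harvest $T$ steps' worth of samples of $\tau_i$; so each of the $|F|$ soft tasks costs at most $T+K$, summing to $T+|F|\cdot(T+K)$. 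The main obstacle I anticipate is precisely the argument that this \emph{sure} $K$-step reachability, combined with $\sigma_i$ observing $\tau_i$'s jobs to completion, upgrades the merely almost-sure sampling of Theorem~\ref{thm:good_sampling} (where Lemma~\ref{lem:singleMEC} gives reachability with probability $1$ but no time bound) into a worst-case polynomial guarantee; once this is established the closed-form step count follows by summation.
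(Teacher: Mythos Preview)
Your proposal is correct and follows essentially the same approach as the paper: reduce correctness to the union-bound sample count of Theorem~\ref{thm:onlysoft}, observe that the hard-task phase under $\sigma_H$ takes $T$ steps, and that each soft-task phase contributes $K$ steps of reaching plus $T$ steps of sampling under $\sigma_i$. If anything, you are more careful than the paper, which does not explicitly justify the i.i.d.\ nature of the samples or the safety invariant across phase switches; the paper's proof simply invokes Theorem~\ref{thm:onlysoft} twice and cites condition~2(b) for the $K$-step bound.
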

%
\begin{proof}
Consider the algorithm described above.
Since $\sigma_H$ is a safe schedule for the hard tasks, we can observe the samples corresponding to the computation time distribution and the inter-arrival time distribution for all the hard tasks simultaneously while scheduling the system.
Following the proof of Theorem \ref{thm:onlysoft}, the samples required to learn the distributions of the hard tasks can be observed in time $T$.

Now consider an order on the set of tasks.
Under the good for efficient sampling condition, again from the proof of Theorem \ref{thm:onlysoft}, 
we need to execute the system for $|F|T$ time steps for collecting samples to PAC learn the computation time distributions and the inter-arrival time distributions for all soft tasks in $F$.
Further, for every soft task $\tau_i$ with $i \in F$, from a vertex in $V^{\text{safe}}_\Box$, by using the strategy $\sigma_{\diamond {\sf Safe}_i}$, we reach ${\sf Safe}_i$ in at most $K$ steps.
Hence the result.
\qed
\end{proof}


We note that there indeed exist task systems that satisfy the good for sampling condition, but not the stronger good for efficient sampling condition.
\begin{example} \label{example:good_sampling_new}
Consider the following task system with one hard and one soft task that we want to learn.
More specifically, we want to learn the distributions associated to the tasks in the system.
For the hard task, the computation time distribution is Dirac with support $\{2\}$, the relative deadline is $2$, and the inter-arrival time distribution is also Dirac with support $\{4\}$.
For the soft task, the computation time distribution has the support $\{1,2\}$, the relative deadline is $2$, and the inter-arrival time distribution is also Dirac and has the support $\{3\}$.
We assume that the domain of each distribution is the same as its support.

We can see that during the execution of the task system, for every time $t$, Scheduler does not have a safe schedule from $t$ that also ensures that the soft task will never miss a deadline.
This implies that considering the good for efficient sampling condition, we have ${\sf Safe_i}=\emptyset$ for $i \in F$, and hence the good for efficient sampling condition is not satisfied by this task system.
Thus we cannot ensure safe and efficient PAC learning for this task system.

On the other hand, there exists a schedule such that for all the jobs of the soft task that arrive at time $\text{lcm}(4,3)\cdot n + 6=12n+6$ (assuming that the system starts executing at time $0$) for $n \ge 0$ can be scheduled to completion, and thus by Theorem \ref{thm:good_sampling}, there exists an algorithm to safely PAC learn the task system.
\qed
\end{example}

\paragraph{{\bf Using the learnt model}}
Given a system $\Upsilon$ of tasks, and parameters
$\epsilon, \gamma \in (0,1)$, once we have learnt a model $\Upsilon^M$
such that $\Upsilon^M \approx^\epsilon \Upsilon$, we construct the MDP
$\Gamma^{\text{safe}}_{\Upsilon^M}$. From
$\Gamma^{\text{safe}}_{\Upsilon^M}$, we can compute an optimal
scheduling strategy that minimises the expected mean-cost of missing
deadlines of soft tasks. Such an algorithm is given in~\cite{ggr18}.
Then, we execute the actual task system $\Upsilon$ under schedule
$\sigma$. However, since $\sigma$ has been computed using the model
${\Upsilon^M}$, it might not be optimal in the original, unknown taks
system $\Upsilon$.  Nevertheless, we can bound the difference between
the optimal values obtained in
$\Gamma^{\text{safe}}_{\Upsilon^M}$ and $\Gamma^{\text{safe}}_{\Upsilon}$.

The following lemma relates the model that is learnt with the
approximate distribution that we have in the MDP corresponding to the
learnt model.
Given $\epsilon \in (0,1)$, let $s=\min\{1,\pi_{\max}^\Upsilon+\epsilon\}$ and $\eta = s^{2n}-(s-\epsilon)^{2n}$, where $n=|\Upsilon|$.
\begin{lemma} \label{lem:learn-dist} Let $\Upsilon$ be a task system,
  let $\epsilon, \gamma \in (0,1)$, let $\Upsilon^M$ be the learnt
  model such that $\Upsilon^M \approx^\epsilon \Upsilon$ with
  probability at least $1-\gamma$.  Then we have that
  $\Gamma_{\Upsilon^M} \approx^\eta \Gamma_{\Upsilon}$ with probability
  at least $1-\gamma$.
\end{lemma}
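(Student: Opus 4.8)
The statement is essentially deterministic: the implication ``$\Upsilon^M\approx^\epsilon\Upsilon \Rightarrow \Gamma_{\Upsilon^M}\approx^\eta\Gamma_\Upsilon$'' is a property of the MDP construction alone, so once it is established the probability $1-\gamma$ transfers verbatim from the hypothesis (whenever the learnt model is $\epsilon$-close, which happens with probability at least $1-\gamma$, the two MDPs are $\eta$-close). The plan is thus to prove this implication. First I would invoke the observation already made in this section: since the learnt distributions are structurally identical to the true ones, $\Gamma_{\Upsilon^M}$ and $\Gamma_\Upsilon$ are structurally identical, sharing vertices, edges, owners and supports. By the definition of $\approx^\eta$ for MDPs it then suffices to compare, for every TaskGen vertex $v\in V_\ocircle$ and every outgoing edge $e\in E(v)$, the two transition probabilities $\delta(v)(e)$ and $\delta'(v)(e)$, and to show $|\delta(v)(e)-\delta'(v)(e)|\le\eta$.

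Next I would exploit the product form of the transition probabilities: for a fixed edge $e$ out of a TaskGen vertex, $\delta(v)(e)=\prod_{i\in[n]}p_i$, where $p_i$ records the computation-time event and the inter-arrival event of task $i$ and is therefore itself a product of a computation factor (in $\{c_i(0),1-c_i(0)\}$) and an arrival factor (in $\{a_i(0),1-a_i(0)\}$). Hence $\delta(v)(e)$ is a product of at most $2n$ factors $x_1,\dots,x_{2n}$, and $\delta'(v)(e)$ is the corresponding product of factors $y_1,\dots,y_{2n}$ computed in $\Gamma_{\Upsilon^M}$. The engine of the proof is then the elementary inequality: if $x_j,y_j\in[0,s]$ and $|x_j-y_j|\le\epsilon$ for all $j$ (with $s\ge\epsilon$), then
\[
\Bigl|\prod_{j=1}^{2n}x_j-\prod_{j=1}^{2n}y_j\Bigr|\le s^{2n}-(s-\epsilon)^{2n}=\eta .
\]
I would prove this by a monotonicity argument: assuming $\prod_j x_j\ge\prod_j y_j$ and using $y_j\ge x_j-\epsilon$, bound $\prod_j y_j\ge\prod_j\max(0,x_j-\epsilon)$; the map $(x_1,\dots,x_{2n})\mapsto\prod_j x_j-\prod_j\max(0,x_j-\epsilon)$ has nonnegative partial derivatives on $[0,s]^{2n}$, so it is maximised at $x_j\equiv s$, which yields exactly $s^{2n}-(s-\epsilon)^{2n}$.

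The main obstacle is to verify the two hypotheses of this inequality factor by factor, which is where the bookkeeping of the construction becomes delicate. One must track how the distributions $c_i,a_i$ stored in a reachable vertex are derived from $\C_i,\A_i$ through decrementing and the renormalisation performed by $\upd$, and argue that, up to taking complements, each factor occurring in a single edge probability is a probability value of these distributions; $\epsilon$-closeness of the factors then follows from $\Upsilon^M\approx^\epsilon\Upsilon$, and the uniform bound $s=\min\{1,\pi_{\max}^\Upsilon+\epsilon\}$ follows from $\pi_{\max}^\Upsilon$ being the largest probability occurring in $\Upsilon$ (the true factors being $\le\pi_{\max}^\Upsilon$ and the learnt ones at most $\pi_{\max}^\Upsilon+\epsilon$, all capped by $1$). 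Care is also needed with the count $2n$: at most two factors per task are genuinely nontrivial, and one must ensure that the worst case is indeed the one in which all $2n$ factors are active, since it is the renormalised conditional values, rather than raw point masses, that actually multiply together. Once every factor is shown to lie in $[0,s]$ and to be $\epsilon$-close to its counterpart, the displayed inequality closes the argument uniformly over all $v\in V_\ocircle$ and all $e$, giving $\Gamma_{\Upsilon^M}\approx^\eta\Gamma_\Upsilon$ and hence the lemma.
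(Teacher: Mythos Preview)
Your proposal is correct and follows essentially the same approach as the paper: decompose each TaskGen edge probability as a product of $2n$ factors, argue factorwise $\epsilon$-closeness from $\Upsilon^M\approx^\epsilon\Upsilon$, and bound the difference of the products by $s^{2n}-(s-\epsilon)^{2n}=\eta$. Your monotonicity argument for the product inequality is in fact more explicit than the paper's rather terse justification, and the bookkeeping concerns you flag (renormalised $c_i,a_i$ and the complement factors) are precisely the details the paper's proof glosses over.
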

\begin{proof}
  Since we have that $\Upsilon^M \approx^\epsilon \Upsilon$ with
  probability at least $1-\gamma$, by definition, we have that the
  probability that all the distributions of $\Upsilon^M$ are
  $\epsilon$-close to their corresponding distributions in $\Upsilon$
  is at least $1-\gamma$.  Let $|\Upsilon|=n$, and there are a total
  of $2n$ distributions.  Let $\delta$ and $\delta^M$ be the
  distribution assignment functions of $\Gamma$ and $\Gamma^M$
  respectively.  Thus corresponding to $\delta$ in $\calM$, if an edge
  has probability $p=p_1p_2\cdots p_{2n}$, and for $\delta^M$ we have
  the corresponding probability as $p^M$, then
  $\lvert p^M-p \rvert \le 
  \displaystyle{\prod_{i=1}^{2n}(p'_i) - \prod_{i=1}^{2n}(p_i)}$, where $p'_i$ is the estimation of $p_i$ in $\delta^M$, and is such that $p_i'\le \min\{1, p_i+\epsilon\}$, since each estimated probability in the distribution $\delta^M$ is also bounded above by $1$.
  Now $p_i' \le s$ for all $i \in [2n]$, and we have that $\displaystyle{\prod_{i=1}^{2n}p'_i - \prod_{i=1}^{2n}(p_i)} \le s^{2n} - (s-\epsilon)^{2n}$,
  and thus $\delta^M \sim^\eta \delta$ with probability
  at least $1-\gamma$.  \qed
\end{proof}

A strategy $\sigma$ is said to be \emph{(uniformly)
  expectation-optimal} if for all $v \in V_\playerOne$, we have
$\expect_{v}^{\Gamma[\sigma]}(\mpay) = \inf_\tau
\expect_{v}^{\Gamma[\tau]}(\mpay)$. 
The following Lemma captures the
idea that some expectation-optimal strategies for MDPs whose
transition functions have the same support as that of $\Gamma$ are
`robust'.
\begin{lemma}[Adapted from~{\cite[Theorem~5]{Chatterjee12}}] \label{lem:robust-opt-lem} Consider
  $\beta \in (0,1)$, and MDPs $\Gamma$ and $\Gamma'$ such that
  $\Gamma \approx^{\eta_\beta} \Gamma'$ with
  \( \eta_\beta \leq \frac{\beta \cdot \pmin}{8|V_\playerOne|},
  \) where $\pmin$ is the minimum probability appearing in $\Gamma$.
  For all memoryless deterministic expectation-optimal strategies
  $\sigma$ in $\Gamma'$, for all $v \in V_\playerOne$, it holds that
  \( \left| \expect_{v}^{\Gamma[\sigma]}(\mpay) - \inf_{\tau}
    \expect_{v}^{\Gamma[\tau]}(\mpay) \right| \leq \beta.  \)
\end{lemma}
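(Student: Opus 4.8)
The plan is to split the statement into two ingredients: a per-strategy stability estimate that is the genuine content of \cite[Theorem~5]{Chatterjee12}, and an elementary sandwiching argument that combines that estimate with optimality. First I would record the stability estimate. Because $\Gamma \approx^{\eta_\beta} \Gamma'$ presupposes that $\Gamma$ and $\Gamma'$ are structurally identical, any memoryless deterministic strategy $\rho$ is simultaneously a strategy of $\Gamma$ and of $\Gamma'$, and it induces two finite Markov chains $\Gamma[\rho]$ and $\Gamma'[\rho]$ that share the same underlying graph and differ only in their transition probabilities, by at most $\eta_\beta$ at each vertex of $V_\playerTwo$. Applying the (adapted) perturbation bound of \cite[Theorem~5]{Chatterjee12} to this pair of chains, their long-run average costs differ by at most a quantity of the form $4\,|V_\playerOne|\,\eta_\beta / \pmin$; under the hypothesis $\eta_\beta \leq \frac{\beta\cdot\pmin}{8|V_\playerOne|}$ this is at most $\beta/2$. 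Thus I would obtain, for every memoryless deterministic $\rho$ and every $v \in V_\playerOne$, the key inequality $\bigl| \expect_{v}^{\Gamma[\rho]}(\mpay) - \expect_{v}^{\Gamma'[\rho]}(\mpay) \bigr| \leq \beta/2$.

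With this primitive in hand, the core of the argument is a short chain of inequalities (recall that $\mpay$ is a cost to be minimised). Let $\sigma^*$ be a (uniformly) expectation-optimal memoryless deterministic strategy in $\Gamma$, which exists by the facts recalled for mean-cost MDPs, so that $\expect_{v}^{\Gamma[\sigma^*]}(\mpay) = \inf_{\tau}\expect_{v}^{\Gamma[\tau]}(\mpay)$. The lower bound $\expect_{v}^{\Gamma[\sigma]}(\mpay) \geq \inf_{\tau}\expect_{v}^{\Gamma[\tau]}(\mpay)$ is trivial since the right-hand side is an infimum over all strategies, so only the upper bound requires work:
\begin{align*}
\expect_{v}^{\Gamma[\sigma]}(\mpay)
&\leq \expect_{v}^{\Gamma'[\sigma]}(\mpay) + \tfrac{\beta}{2} \\
&= \inf_{\tau}\expect_{v}^{\Gamma'[\tau]}(\mpay) + \tfrac{\beta}{2} \\
&\leq \expect_{v}^{\Gamma'[\sigma^*]}(\mpay) + \tfrac{\beta}{2} \\
&\leq \expect_{v}^{\Gamma[\sigma^*]}(\mpay) + \beta .
\end{align*}
Here the first and last inequalities are the stability estimate applied to $\sigma$ and to $\sigma^*$ respectively, the equality uses that $\sigma$ is expectation-optimal in $\Gamma'$, and the middle inequality uses that $\sigma^*$ is merely one competitor for the infimum in $\Gamma'$. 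Since the final right-hand side equals $\inf_{\tau}\expect_{v}^{\Gamma[\tau]}(\mpay) + \beta$, combining with the trivial lower bound gives $\bigl| \expect_{v}^{\Gamma[\sigma]}(\mpay) - \inf_{\tau}\expect_{v}^{\Gamma[\tau]}(\mpay) \bigr| \leq \beta$ for every $v \in V_\playerOne$, as required.

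I expect the whole difficulty to lie in the first ingredient rather than in the sandwich, which is routine bookkeeping. Two points need care when invoking \cite[Theorem~5]{Chatterjee12}. First, I must verify that the closeness relation $\approx^{\eta_\beta}$ really supplies the per-transition perturbation hypothesis the theorem requires, and that the constant it produces, together with the dependence on $\pmin$ and $|V_\playerOne|$, yields exactly the slack $\beta/2$ for which the factor $8 = 4 \cdot 2$ in $\eta_\beta \leq \frac{\beta\cdot\pmin}{8|V_\playerOne|}$ is calibrated (a bound of the form $4|V_\playerOne|\eta_\beta/\pmin$ applied twice). This matching of constants is precisely what the word \emph{adapted} signals, and getting it right is the delicate step. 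Second, since $\pmin$ is defined as the minimum probability of $\Gamma$ whereas the perturbation is symmetric between $\Gamma$ and $\Gamma'$, I would note that structural identity together with $\eta_\beta < \pmin$ keeps the two chains' minimum probabilities comparable, so the estimate can be stated uniformly in terms of $\pmin$. Were one to reprove the stability estimate from scratch instead of citing it, the real obstacle would be bounding the variation of the stationary distribution (hence of the long-run average cost) of a finite Markov chain under an $\eta_\beta$-perturbation of its transition matrix, which is exactly where the dependence on $\pmin$ and $|V_\playerOne|$ comes from.
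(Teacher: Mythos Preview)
Your proposal is correct, and the sandwich argument you give is sound. The paper's own proof organises the two ingredients a little differently, and the difference is worth noting. Instead of starting from a \emph{per-strategy} stability bound and then sandwiching, the paper first applies Solan's inequality (as recast by Chatterjee, Proposition~1) directly to the \emph{optimal values} of the two MDPs, obtaining
\[
\left|\inf_{\tau}\expect_{v}^{\Gamma[\tau]}(\mpay)-\inf_{\tau}\expect_{v}^{\Gamma'[\tau]}(\mpay)\right|\;\le\;\frac{4|V_\playerOne|(\eta_\beta/\pmin)}{1-2|V_\playerOne|(\eta_\beta/\pmin)},
\]
and then cites, as a black box from the proof of \cite[Theorem~5]{Chatterjee12}, the statement that if the optimal values of two structurally identical MDPs differ by at most $\lambda$ then any expectation-optimal strategy for one is $2\lambda$-optimal for the other. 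Setting the right-hand side above to $\beta/2$ yields the calibration of $\eta_\beta$. Your sandwich is precisely an explicit reproof of that $2\lambda$ transfer step, so the two routes meet in the middle; your decomposition is arguably more self-contained, while the paper's is shorter because it delegates both halves to citations.

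One point to tighten: the stability bound you quote as $4|V_\playerOne|\eta_\beta/\pmin$ is in fact Solan's bound and carries the denominator $1-2|V_\playerOne|(\eta_\beta/\pmin)$, which you have dropped. Under the hypothesis $\eta_\beta\le \beta\pmin/(8|V_\playerOne|)$ this denominator is bounded away from zero and the calibration still goes through, but the exact matching of constants (the ``$8$'') comes from solving with the denominator present, not from the simplified numerator alone. Also, the per-strategy estimate you need is most cleanly obtained by applying Solan's inequality to the pair of Markov chains $\Gamma[\rho]$, $\Gamma'[\rho]$ viewed as degenerate MDPs, rather than from \cite[Theorem~5]{Chatterjee12} itself, which concerns parity objectives; the paper makes this passage via Solan explicit.
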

\begin{proof}
    Recall that in our case, we have that the cost of missing the deadlines of the soft tasks are known, and thus we have the same cost function $\cst$ in both $\Gamma$ and $\Gamma'$.
    The bounds for $\eta_\beta$
    is obtained directly from Solan's
    inequality~\cite[Theorem 6]{Solan03} as adapted by Chatterjee~\cite[Proposition
    1]{Chatterjee12}:
      \begin{equation}\label{eqn:solan}
        \left|
        \inf_{\tau_1}
        \expect_{\initV}^{\Gamma[\tau_1]}(\mpay)
        -
        \inf_{\tau_2}
        \expect_{\initV'}^{\Gamma'[\tau_2]}(\mpay)
        \right|
        \leq
        \frac{4|V_\playerOne|(\eta_\beta/\pmin)}{1 - 2|V_\playerOne|(\eta_\beta/\pmin)}
    \end{equation}
    In the proof
    of~\cite[Theorem 5]{Chatterjee12}), it has been shown that if the optimal expected values of
    two structurally identical MDPs differ by at most $\lambda$, then a
    memoryless expectation-optimal strategy for one MDP is
    $2\lambda$-expectation-optimal for the other one.
    
    Thus $\frac{4|V_\playerOne|(\eta_\beta/\pmin)}{1 - 2|V_\playerOne|(\eta_\beta/\pmin)} \le \frac{\beta}{2}$ that gives us
    $\eta_\beta \le \frac{\beta \cdot \pi_{\min}}{8|V_\playerOne|+2|V_\playerOne|\beta} \le \frac{\beta \cdot \pi_{\min}}{8|V_\playerOne|}$.
    The result thus follows.
    \qed
\end{proof}

One of the results we cite, i.e.~\cite[Theorem 5]{Chatterjee12},
focuses on stochastic parity games with the same support, i.e., for
structurally identical MDPs.
There, they derive robustness bounds for MDPs with the discounted-sum function and use them
to obtain robustness bounds for MDPs with a parity objective.
We are, however, extending those results to MDPs with the mean-cost function
(cf.~\cite{DHKP17}) making use of an observation by Solan~\cite{Solan03}:
robustness bounds for discounted-sum MDPs extend directly to mean-cost MDPs if
they do not depend on the discount factor.

%
Finally, using both Lemma~\ref{lem:learn-dist} and
Lemma~\ref{lem:robust-opt-lem}, we obtain the following  guarantees on the quality of the scheduler that our
model-based learning algorithm outputs:
\begin{theorem}\label{thm:robust-opt}
  Suppose we are given a task system $\Upsilon$ (with min
  probability $\pmin$) and a robustness precision 
  $\beta\in
  (0,1)$. Let $\gamma,\epsilon\in (0,1)$ be s.t.
  $\epsilon\leq \frac{\beta \pmin}{8|V_\playerOne|+\beta\pmin}$.
  Let $\Upsilon^M$ be the model that is learnt using the above algorithms
  s.t. $\Upsilon^M \approx^\varepsilon \Upsilon$ with probability at least $1-\gamma$,
  and let $\sigma$
  be a memoryless deterministic expectation-optimal strategy of
  $\Gamma_{\Upsilon^M}$. Then, with probability at least $1-\gamma$, the expected
  mean-cost of playing $\sigma$ in $\Gamma_\Upsilon$ (i.e. in the task
  system $\Upsilon$) 
  is s.t. for all
  $v \in V_\playerOne$:
  \( \left| \expect_{v}^{\Gamma_\Upsilon[\sigma]}(\mpay) - \inf_{\tau}
    \expect_{v}^{\Gamma_\Upsilon[\tau]}(\mpay) \right| \leq \beta.  \)
\end{theorem}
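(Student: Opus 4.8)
The plan is to chain the two robustness results, Lemma~\ref{lem:learn-dist} and Lemma~\ref{lem:robust-opt-lem}, and then account for the single failure probability coming from the learning phase. First I would record that, since every learnt distribution in $\Upsilon^M$ is structurally identical to its counterpart in $\Upsilon$, the two MDPs $\Gamma_\Upsilon$ and $\Gamma_{\Upsilon^M}$ are structurally identical. This is what makes the approximation relation $\approx^{\eta}$ between them well defined, and it lets me identify the quantity $\pmin$ of the theorem with the minimum probability appearing in $\Gamma_\Upsilon$, which is exactly the $\pmin$ that governs Lemma~\ref{lem:robust-opt-lem}.

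The first step is to turn the $\epsilon$-closeness of the task systems into an $\eta$-approximation of the MDPs. Applying Lemma~\ref{lem:learn-dist} to the hypothesis $\Upsilon^M \approx^{\epsilon} \Upsilon$ (which holds with probability at least $1-\gamma$) yields $\Gamma_{\Upsilon^M} \approx^{\eta} \Gamma_{\Upsilon}$ with probability at least $1-\gamma$, where $\eta = s^{2n}-(s-\epsilon)^{2n}$ with $s=\min\{1,\pi_{\max}^\Upsilon+\epsilon\}$ and $n=|\Upsilon|$.

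The crux of the argument --- and the step I expect to be most delicate --- is to check that the hypothesis $\epsilon \le \frac{\beta\pmin}{8|V_\playerOne|+\beta\pmin}$ is strong enough to guarantee the precondition of Lemma~\ref{lem:robust-opt-lem}, namely $\eta \le \frac{\beta\cdot\pmin}{8|V_\playerOne|}$. Here I would factor the difference of powers,
\[
\eta \;=\; s^{2n}-(s-\epsilon)^{2n} \;=\; \epsilon \sum_{k=0}^{2n-1} s^{\,2n-1-k}(s-\epsilon)^{k},
\]
and bound the summands using $0 \le s-\epsilon \le s \le 1$ to get a linear-in-$\epsilon$ upper bound on $\eta$; rewriting the hypothesis in the equivalent form $\frac{\epsilon}{1-\epsilon} \le \frac{\beta\pmin}{8|V_\playerOne|}$ should then push $\eta$ below the required threshold. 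Making the constants line up, and keeping straight that $\pmin$ refers to the true MDP $\Gamma_\Upsilon$ and not to the learnt one, is the only genuinely delicate bookkeeping.

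Finally I would instantiate Lemma~\ref{lem:robust-opt-lem} with $\Gamma = \Gamma_\Upsilon$, $\Gamma' = \Gamma_{\Upsilon^M}$, $\eta_\beta = \eta$, and the given memoryless deterministic expectation-optimal strategy $\sigma$ of $\Gamma_{\Upsilon^M}$. Since Lemma~\ref{lem:robust-opt-lem} is a deterministic statement about any pair of $\eta_\beta$-close structurally identical MDPs, it applies on the entire event on which $\Gamma_{\Upsilon^M} \approx^{\eta} \Gamma_\Upsilon$ holds. As that event has probability at least $1-\gamma$ and the robustness step introduces no further randomness, I conclude that with probability at least $1-\gamma$, for every $v \in V_\playerOne$ we have $\left| \expect_{v}^{\Gamma_\Upsilon[\sigma]}(\mpay) - \inf_{\tau} \expect_{v}^{\Gamma_\Upsilon[\tau]}(\mpay) \right| \le \beta$, which is exactly the claim.
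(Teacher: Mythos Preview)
Your overall plan is exactly what the paper does: it states that Theorem~\ref{thm:robust-opt} follows ``using both Lemma~\ref{lem:learn-dist} and Lemma~\ref{lem:robust-opt-lem}'' and gives no further argument, so the chaining of the two lemmas and the observation that the only randomness comes from the learning phase are precisely the intended proof.

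There is, however, a concrete gap in the arithmetic you sketch for the ``delicate bookkeeping'' step. From the factorisation
\[
\eta \;=\; s^{2n}-(s-\epsilon)^{2n} \;=\; \epsilon \sum_{k=0}^{2n-1} s^{\,2n-1-k}(s-\epsilon)^{k}
\]
with $0\le s-\epsilon\le s\le 1$, the best uniform bound on the sum is $2n$, giving $\eta\le 2n\,\epsilon$; it does \emph{not} give $\eta\le \frac{\epsilon}{1-\epsilon}$. Indeed already for $n=1$ and $s=1$ one has $\eta=2\epsilon-\epsilon^2$, which exceeds $\frac{\epsilon}{1-\epsilon}$ for every small $\epsilon$. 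So the equivalence $\frac{\epsilon}{1-\epsilon}\le \frac{\beta\pmin}{8|V_\playerOne|}$ of the hypothesis does not by itself push $\eta$ below $\frac{\beta\pmin}{8|V_\playerOne|}$ via your route. A related point: you identify the $\pmin$ of the theorem (stated for the task system) with the $\pmin$ in Lemma~\ref{lem:robust-opt-lem} (the minimum probability of the MDP $\Gamma_\Upsilon$); since MDP transition probabilities are products of up to $2n$ task probabilities, these two quantities need not coincide, and the paper is silent on this identification as well. In short, your high-level strategy matches the paper's, but the constant-matching step you flag as delicate is genuinely unresolved---both in your sketch and in the paper, which does not spell it out either.
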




\section{Monte Carlo Tree Search with Advice}\label{sec:mcts}
When the model of the task system is known, or once it has been learned using techniques developed in Section~\ref{sec:model_based}, our goal is to compute a (near) optimal strategy while ensuring safe scheduling of hard-tasks with certainty. 

The challenge is the sizes of the MDPs that are too large for exact model-checking techniques (see Sect.~\ref{sec:experimental-results}). To overcome this problem, we resort to
a \emph{receding horizon} framework \cite{DBLP:journals/ml/KearnsMN02}, that bases its decisions on a finite-depth unfolding of the MDP from the current state. 
In particular, we advocate the use of \emph{Monte Carlo Tree Search} (MCTS) algorithms~\cite{DBLP:journals/tciaig/BrownePWLCRTPSC12}, that are a popular method for sampling the finite-depth unfolding while avoiding an exponential dependency on the horizon. MCTS algorithms aim at discovering and exploring the ``most relevant" parts of the unfolding, and they approximate the value of actions in intermediary nodes using a fixed number of trajectories obtained by simulations. The MCTS algorithm builds an exploration tree incrementally. At every step of the algorithm, the {\em selection phase} selects a path in the current tree, possibly extending it by adding a new node. It is followed by a {\em simulation phase}, that extends this trajectory further, until the fixed horizon is reached. Finally, a {\em back-propagation} phase updates the exploration tree based on this new trajectory. A reader looking for a more detailed introduction to MCTS is referred to~\cite{DBLP:conf/concur/Busatto-Gaston020}.

MCTS has been successfully applied to large state-spaces. For example, it is an important building block of the {\sc AlphaGo} algorithm~\cite{DBLP:journals/nature/SilverHMGSDSAPL16} that has obtained super-human performances in the game of {\sc Go}. Such level of performances cannot be obtained with the plain MCTS algorithm. In {\sc Go}, the simulation and selection phases are guided by a board scoring function that has been learned using neural-networks techniques and self-play. 
For our scheduling problem, we also need a solution to this guidance problem and, equally importantly, we must augment the MCTS algorithm in a way that \emph{ensures} safe scheduling of hard tasks. 

\paragraph{\bf Symbolic advice}
In a recent previous work~\cite{DBLP:conf/concur/Busatto-Gaston020}, we have introduced the notion of (symbolic) advice that provides a generic and formal solution to systematically incorporate domain knowledge in the MCTS algorithm. For our scheduling problem, we use selection advice that prunes parts of the MDP on-the-fly in order to ensure that only safe schedulers are explored. We have considered two possibilities. First, we consider the most general safe scheduler (MGS scheduler) as defined in page~\pageref{sec:mgs-def} to restrict the selection phase to safe scheduling decisions only. Second, we consider the earliest deadline first (EDF) scheduling strategy for hard tasks defined in page~\pageref{def:EDF}, that only allows soft tasks when there are no available hard tasks,
and restricts to the hard tasks with the earliest deadline otherwise.
EDF is guaranteed safe as the set of hard tasks is assumed schedulable. 
The MGS advice allows for maximal exploration as it leaves open all possible safe scheduling solutions, while the EDF advice can be applied on larger task systems as it does not require any precomputations.
These advice are also applicable during the simulation phases. 

\section{Experimental Results}\label{sec:experimental-results}



\begin{figure}[t]
\vspace*{-0.2cm}
\begin{minipage}[b]{0.45\linewidth}
\vspace*{-5mm}
\includegraphics[width=1\textwidth]{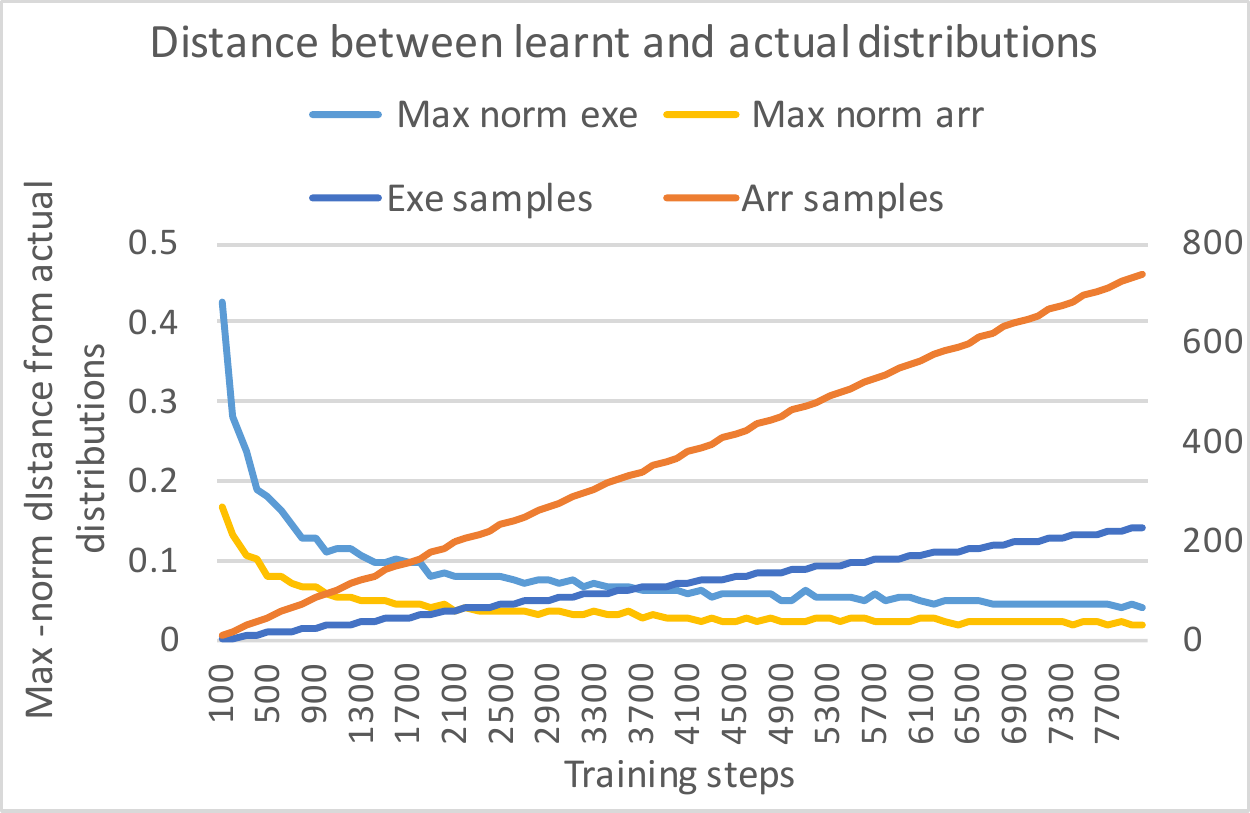}
\caption{\label{fig:soft6_dist}%
Learning distributions for a system with $6$ soft tasks.}
\end{minipage}
\quad
\begin{minipage}[b]{0.48\linewidth}
\centering
\includegraphics[width=1\textwidth]{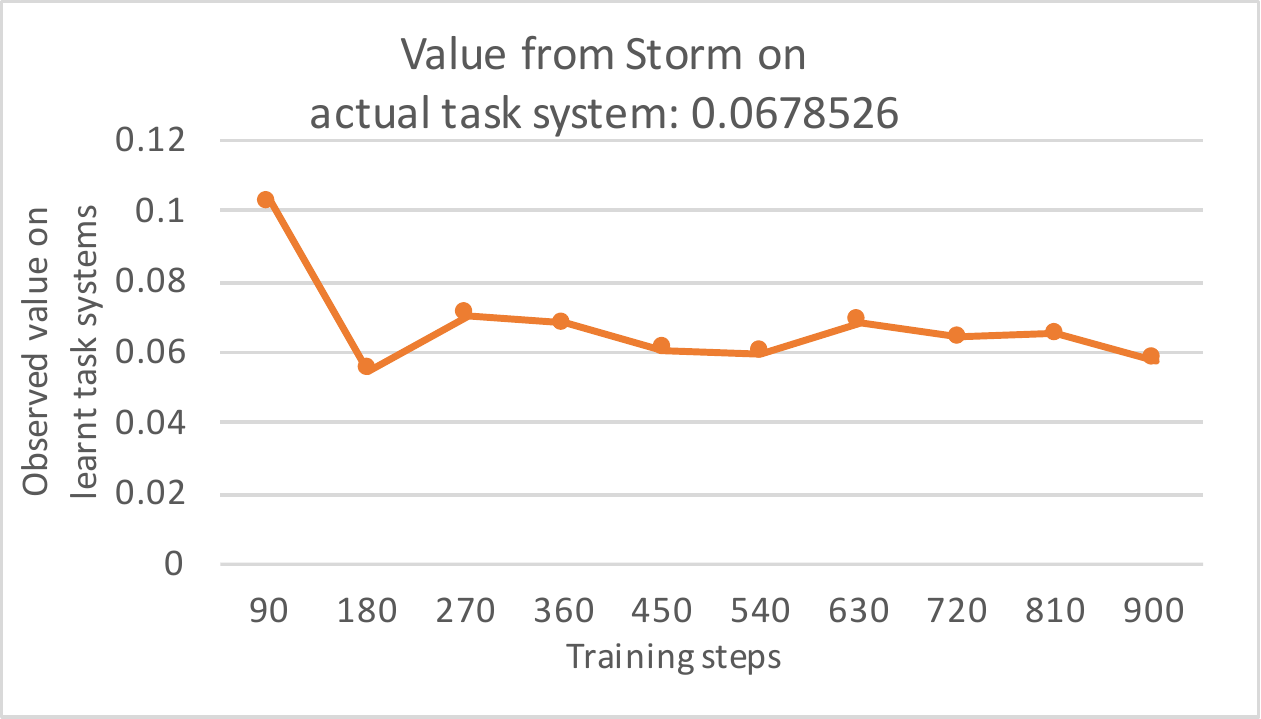}
\caption{\label{fig:soft2MB} Model-based learning for 1 hard, 2 soft tasks}
\end{minipage}
\end{figure}

In this section, we first report experimental results on model-based learning and observe that the models are learnt efficiently with only a small number of samples.
Our MCTS based algorithms can then be applied on the learnt models that are very close to the original ones.\footnote{Here we do not learn to the point where our PAC guarantees hold. Rather, we are interested in how fast the learnt model converges to the real model in practice.}
We compare the performance of our MCTS-based algorithms with a state-of-the-art deep $Q$-learning implementation from {\sc OpenAI} \cite{baselines} on a set of benchmarks of task systems of various sizes.
The experimental results show that our MCTS-based algorithms perform better in practice than safe reinforcement learning (RL)\cite{ABCHKP19}.


\paragraph{\bf Models with only soft tasks}
%
In Figure~\ref{fig:soft6_dist}, we show that the distributions of a task system with soft tasks can be learnt efficiently with a small number of samples, corroborating our theory in Section~\ref{sec:model_based}.
This is not the case in general for arbitrary MDPs where in order to collect samples, one may need to reach some specific states of the MDP, and it may take a considerable amount of time to reach such states.
However, in this case of systems with only soft tasks, the number of samples increases linearly with time.
As a representative task system, we display the learning curve for a system with six soft tasks in Figure~\ref{fig:soft6_dist}.
Here ``exe" and ``arr" refer to the distributions of the computation times and the inter-arrival times respectively.
The left $y$-axis is the max-norm distance between the probabilities in the actual distributions and the learnt distributions across all soft tasks.
The $x$-axis is the number of time steps over which the system is executed.
For learning the computation time distribution, the soft tasks are scheduled in a round robin manner.
Once a job of a soft task is scheduled, it is executed until completion without being preempted.
A sample for learning the computation time distribution of a soft task thus corresponds to a job of the task that is scheduled to execute until completion.
Since the system has only soft tasks, a job can always be executed to finish its execution without safety being violated.
On the other hand, the samples for learning the inter-arrival time distribution for each task correspond to all the jobs of the task that arrive in the system.
Thus over a time duration, for each task, the number of samples collected for learning the inter-arrival time distribution is larger than the number of samples collected for learning the computation time distribution.
The number of samples of both kinds increases linearly with time.
The $y$-axis on the right corresponds to the number of samples collected over a duration of time when the system executes.
The plot ``Exe samples" corresponds to the number of samples collected per task for learning the computation time distributions.
Since the tasks are executed in a round robin manner, the tasks have an equal number of samples for learning their computation time distributions.
On the other hand, for learning inter-arrival time distributions, a task with larger inter-arrival time produces fewer samples than a task with smaller inter-arrival time.
The plot ``Arr samples" corresponds to the minimum of the number of jobs, over all the tasks, that arrived in the system.
Each point in the graphs
is obtained as a result of averaging over $50$ simulations. 

\paragraph{\bf Safe model-based learning}
For safe model-based learning of systems with both hard and soft tasks, first, we verify that the task system satisfies the {\em good for efficient sampling} condition, and hence admits safe efficient PAC learning. We consider a small representative task system, and report the value of the optimal expected mean-cost strategy as computed by {\sc Storm} on the learnt model as a function of the number of steps for which the system is executed (training steps).
This 
converges quickly to 
the 
optimal 
expected value of the actual task system, roughly equal to $0.06$ (see Fig~\ref{fig:soft2MB}). 
We also note that the expected value computed by {\sc Storm} is not necessarily monotonic as it is computed on the learnt model and this model changes over time with the samples that it receives, and the expected value may also sometimes be smaller than the value on the actual model.
The results show that this approach is effective in terms of the quality of learning and the number of samples required.

\paragraph{\bf MCTS}
In the above approach, the main bottleneck towards scalability is the extraction of an optimal strategy from the learnt model using probabilistic model-checkers like {\sc Storm}.
This is because the underlying MDP grows exponentially with the number of tasks.
Therefore we advocate the use of receding horizon techniques instead,
that optimize the cost based on the next $h$ steps for some horizon $h$.
In our examples, the unfoldings have approximately $2^h$ states, so we use MCTS to explore them in a scalable way.

\paragraph{\bf Deep Q-learning}
One of the most successful model-free learning algorithm is the
\emph{Q-learning} algorithm, due to Watkins and Dayan~\cite{wd92}.
It aims at learning
(near) optimal strategies in a (partially unknown) MDP for the {\em discounted sum} objective.
In our scheduling problem, we search for (near) optimal strategies for
the mean-cost and {\em not} for the discounted sum, as we want to
minimise the limit average of the cost of missing deadlines of soft
tasks. 
However, if the discount factor is close to $1$, both values coincide~\cite{Solan03,mn81}.
In our experiments, we use an implementation of deep $Q$-learning available in the \textsc{OpenAI} repository \cite{baselines}.
We make use of 
shielding~\cite{CNPRZ17,shields,ABCHKP19}, a technique that restricts actions in the learning process so that only those actions that are safe for the hard tasks can be used.

\paragraph{\bf Experimental setup for MCTS and deep Q-learning}
We compare some variants of model-based learning augmented with MCTS and some variants of deep Q-learning in the context of scheduling.
The first option is to set a very high penalty on missing the deadline of a hard task, and then to apply either MCTS or deep $Q$-learning. However, safety is not guaranteed in this case, and we report on whether a violation was observed or not. We call this variant unsafe MCTS and unsafe deep $Q$-learning respectively as a consequence.
The second option is to enforce safety in MCTS and deep $Q$-learning by computing the most general safe  scheduler for hard tasks, and then using the MGS advice for MCTS or the MGS shield for deep $Q$-learning.
The third option is to use the earliest-deadline-first (EDF) scheme on hard tasks instead of MGS as an advice or a shield.
Note that the second and the third options are required to ensure safety, and thus are applicable to systems that have at least one hard task, and hence are not applicable (NA) to systems with only soft tasks. 

\begin{table}[t]
\vspace*{-0.2cm}
\centering
\scalebox{0.95}{
{\def\arraystretch{1.1}
\begin{tabular}{|l|l|l|l|l|l|l|l|l|}
\hline
Task                                                    & size      & \begin{tabular}[c]{@{}l@{}}Storm\\ output\end{tabular} & \begin{tabular}[c]{@{}l@{}}MCTS \\ unsafe\end{tabular} & \begin{tabular}[c]{@{}l@{}}MCTS\\ MGS\end{tabular} & \begin{tabular}[c]{@{}l@{}}MCTS\\ EDF\end{tabular} & \begin{tabular}[c]{@{}l@{}}Deep-Q\\ unsafe\end{tabular} & \begin{tabular}[c]{@{}l@{}}Deep-Q\\ MGS\end{tabular} & \begin{tabular}[c]{@{}l@{}}Deep-Q\\ EDF\end{tabular} \\ \hline
4S                                                      & $10^5$    & 0.38                                                   & 0.52                                                   & NA                                                 & NA                                                 & 0.56                                                    & NA                                                   & NA                                                   \\ \hline
5S                                                      & $10^6$    & T.O.                                                   & 0                                                      & NA                                                 & NA                                                 & 0.13                                                    & NA                                                   & NA                                                   \\ \hline
10S                                                     & $10^{18}$ & T.O.                                                   & 0                                                      & NA                                                 & NA                                                 & 0.96                                                    & NA                                                   & NA                                                   \\ \hline
simple
& $10^2$    & 0                                                      & 0.72                                                    & 0                                                  & 0                                                  & 1.08                                                    & 0.1                                                  & 0                                                    \\ \hline
1H, 2S                                                  & $10^4$    & 0.07                                                   & 0.67                                                   & 0.14                                               & 0.28                                              & 0.24                                                    & 0.11                                                 & 0.22                                                 \\ \hline
1H, 3S                                                  & $10^5$    & 0.28                                                   & 1.13                                                   & 0.45                                              & 0.49                                              & $\infty$ 
& 0.47                                                 & 0.47                                                 \\ \hline
2H, 1S                                                  & $10^4$    & 0                                                      & 0.92                                                   & 0                                                  & 0.2                                               & $\infty$ 
& 0.02                                                 & 0.3                                                  \\ \hline
2H, 5S                                                  & $10^{10}$ & T.O.                                                   & 3.44                                                   & 1.93                                             & 2.14                                               & $\infty$ 
& 2.39                                                 & 2.48                                                 \\ \hline
3H, 6S                                                  & $10^{14}$ & T.O.                                                   & 4.17
& 2.88                                               & 2.97                                               & $\infty$ 
& 3.42                                                 & 3.47                                                 \\ \hline
2H, 10S                                                 & $10^{22}$ & T.O.                                                   & 0.3                                                      & 0.03                                               & 0.03                                               & $\infty$ 
& 1.42                                                 & 1.6                                                  \\ \hline
4H, 12S                                                 & $10^{30}$ & T.O.                                                   & 2.1
& 1.2                                               & 1.3                                              & $\infty$ 
& 2.68                                                 & 2.87                                                 \\ \hline
\end{tabular}
}
}
\caption{\label{tab:results} Comparison of MCTS and reinforcement learning.}
\end{table}

\paragraph{\bf Experimental Results} In the first column of Table \ref{tab:results}, we describe the task systems that we consider.
A description \textsf{2H, 5S} refers to a task system with two hard tasks and five soft tasks, while \textsf{4S} refers to a task system with four soft tasks and no hard tasks.
The \textsf{simple} system refers to a \textsf{1H, 2S} task system where all the arrival time distributions are Dirac.
The output of {\sc Storm} for the smaller task systems is given in the third column.
We report sizes of the MDPs, computed with {\sc Storm} whenever possible.
Otherwise we report an approximation of the size of the state space obtained by
taking the product of $(c_i+1)(a_i+1)$ over the set of tasks, where $c_i$ and $a_i$ are the greatest elements in the support of the distributions $\mathcal C_i$ and $\mathcal A_i$.
Recall that the size of the state space is exponential in the number of tasks in the system.
In the columns where safety is not guaranteed, $\infty$ denotes an observed violation (a missed deadline for a hard task).

For MCTS, at every step we explore 500 nodes of the unfolding of horizon 30, and the value of each node is initialized using 100 uniform simulations.
This computation takes 1-4 minutes in our Python implementation for different benchmarks, running on a standard laptop. It is reasonable to believe that a substantial speedup could be obtained with well-optimised code and parallelism.
For deep $Q$-learning, we train each task system for $10000$ steps.
The implementation of deep-Q learning in the \textsc{OepnAI} respository uses the Adam optimizer \cite{KB15}. The size of the replay buffer is set to $2000$.
The learning rate used is $10^{-3}$.
The probability $\epsilon$ of taking a random action is initially set to $1$. This parameter reduces over the training steps, and becomes equal to $0.02$ at the end of the training.
The network used is a multi-layer perceptron which, by default, uses two fully connected hidden layers, each with 64 nodes.
Since we are interested in mean-cost objective, the discount factor $\gamma$ is set to $1$.
We observed that reducing the value of $\gamma$ leads to poorer results.
The values reported for both MCTS and deep $Q$-learning are obtained as an average cost over 600 steps.


\paragraph{\bf Conclusions}
While deep Q-learning provides good results for small task systems with 3-4 tasks with several thousands of states, this method does not perform well for the benchmarks with large number of tasks.
We trained the task system with 10 soft tasks with deep Q-learning for several million steps, but the state space was found to be too large to learn a good strategy, 
and the resulting output produced a cost that is much higher than that observed with MCTS.

Overall, our experimental results show that MCTS consistently provides better results, in particular when the task systems are large, with huge state spaces. This can be explained by the fact that MCTS optimizes locally using information about multiple possible ``futures'' while deep Q-learning rather optimizes globally using information about the uniquely observed trace. 
%
We observe that the performance of MCTS with EDF advice is only slightly worse than MCTS with MGS advice.
EDF guarantees safety and does not require computing the most general safe strategy, therefore it forms a good heuristic for systems with many hard tasks, where MGS computation becomes too expensive.

In future work, we consider using Deep-$Q$ learning in either a selection advice for MCTS
or as a complement to simulations when evaluating new states.

\stam{
\paragraph{\bf Input/output of the neural network}
In the model-free approach, we train a neural network (\textit{NN}). 
At every step, the \textit{NN} receives inputs from the task system and suggests scheduling actions. 
The learning algorithm only observes the following events:
\begin{inparaenum}[(i)]
    \item the arrival of a new job for a given task;
    \item the end of a job that has finished its computation before its deadline; and
    \item the violation of a deadline by a job, which triggers a cost.
\end{inparaenum}
Additionally, the
learning algorithm can provide the \textit{NN}s with
the following computed values, for {\em each} job: \begin{enumerate}
    \item \label{NN} (NN) the maximum among the remaining computation time, the time before deadline, and the minimum among the remaining time before arrival of the next job of the same task;
    \item \label{NN-ED} (NN-ED) the maximum among the  remaining computation time and the time before deadline;
    \item (NN-DL) remaining time before deadline; or
    \item (NN-lax) difference between the remaining time before deadline and the maximum remaining computation time.
\end{enumerate}
\noindent
Additionally, after each decision, the \textit{NN} gets a feedback in term of cost.

\paragraph{Comparison between the different sets of inputs to the \textit{NN}}
To compare the ability of the deep $Q$-learning algorithm to learn with the different variants of inputs described above, we consider a task system with 4 soft tasks. 
Our experiments results in Fig.~\ref{fig:onlysoft4} show that the first two variants (NN and NN-ED) that give richer information to the NN lead to more efficient learning procedure.  We see that the performances of the model-free learning for these two variants are similar to the performances of the model-based learning, but model-free learning needs more training steps.


\begin{figure}[t]
\vspace*{-0.2cm}
\begin{minipage}[b]{0.48\linewidth}
\centering
\includegraphics[width=1\textwidth]{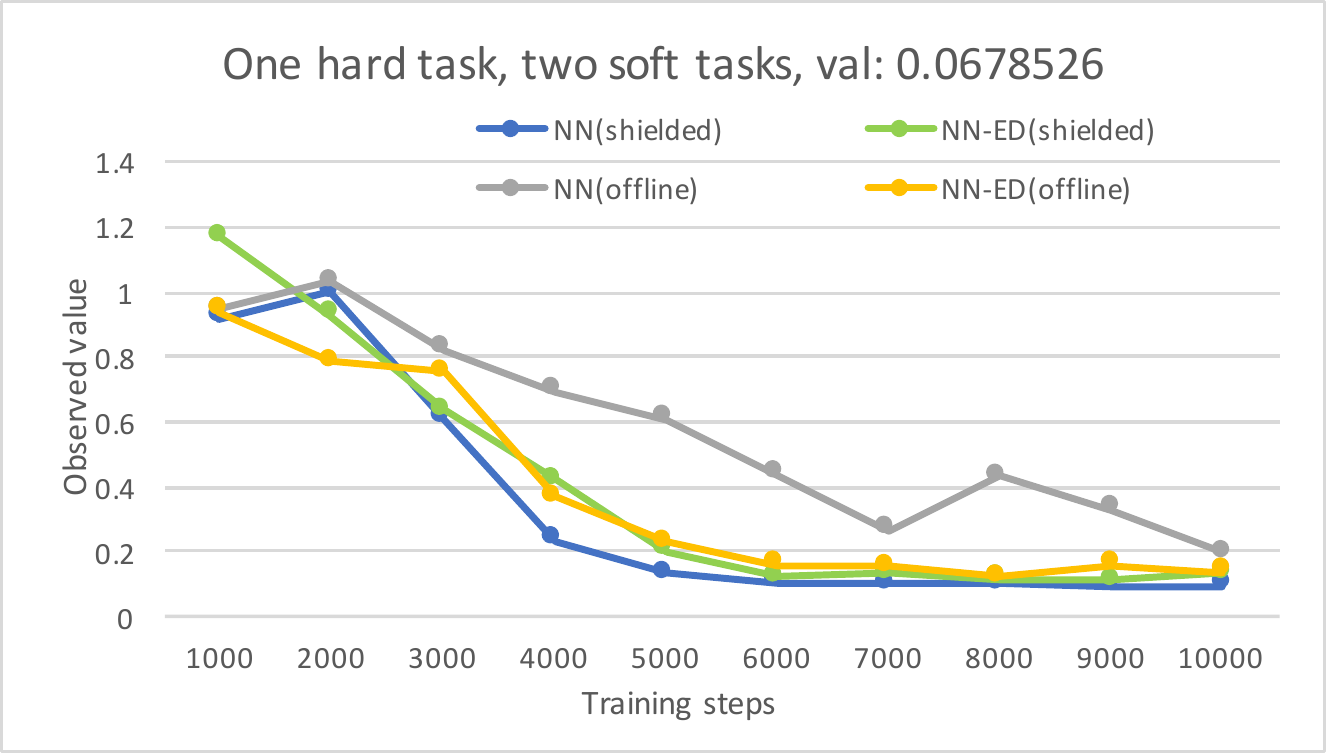}
\caption{\label{fig:soft2} Comparison of shielded and offline model-free learning for 1 hard task and 2 soft tasks for input variants NN and NN-ED.}
\end{minipage}
\quad
\begin{minipage}[b]{0.5\linewidth}
\vspace*{-5mm}
\includegraphics[width=1\textwidth]{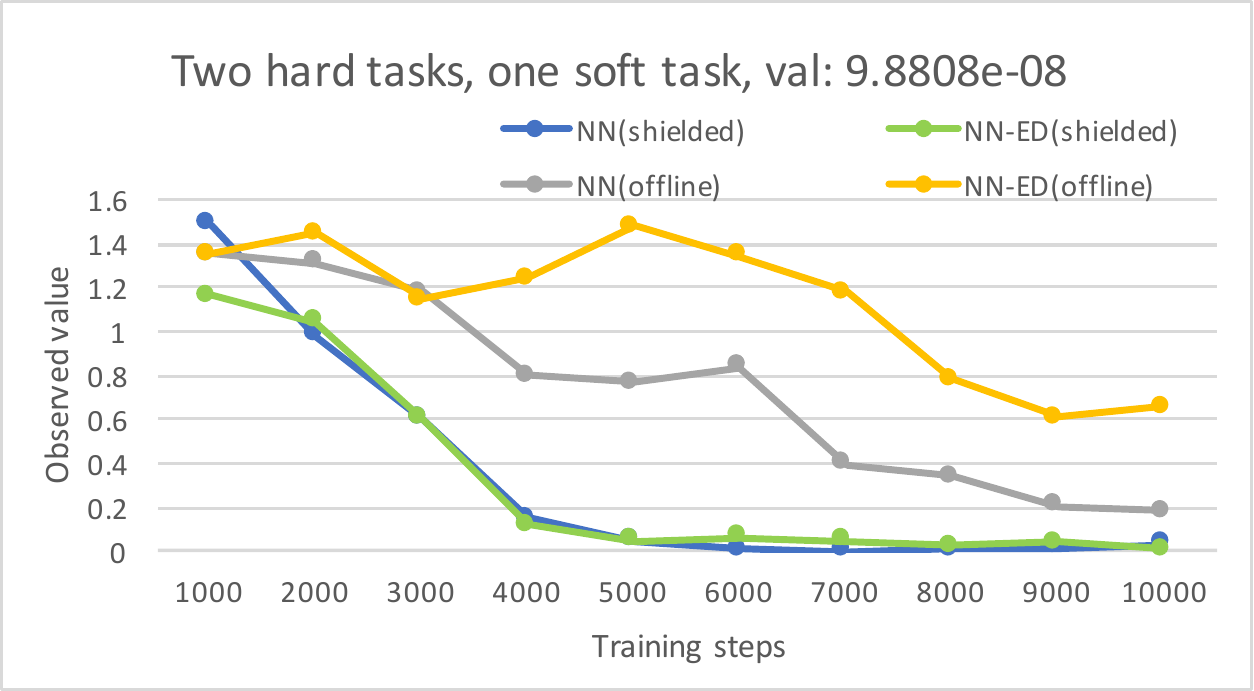}
\caption{\label{fig:hard2} Comparison of shielded and offline model-free learning for 2 hard tasks and 1 soft task for input variants NN and NN-ED.}
\end{minipage}
\end{figure}
\paragraph{\bf Variants of shielded deep Q-learning}
Fig.~\ref{fig:soft2}~and~\ref{fig:hard2} report on results obtained for two task systems and compare the performances obtained by two shielding strategies:
  \begin{itemize}
      \item (NN-shielded and NN-ED-shielded): the \textit{NN} is trained online {\em with} the safety shield as proposed in this paper.
      \item (NN-offline and NN-ED-offline): the \textit{NN} is trained offline {\em without} the safety shield. During the offline learning phase, missing a deadline for a hard task triggers a large cost ($\approx 1,000$ times that of the cost of missing a deadline of a soft task). Then, when the \textit{NN} is executed online, if the action proposed by the schedule that is learnt is unsafe,
      the shields replaces it by a safe action taken uniformly at random among the set of safe actions (cf.~\cite{ABCHKP19}).
  \end{itemize}
In the first benchmark, the two approaches have comparable performances and converge to the optimal value (as computed with \textsc{Storm}). On the second benchmark, in which there are two hard tasks instead of one, the first approach works substantially better than the second one. Only the first proposed approach  converges to the optimal value.

\paragraph{\bf Scalability of shielded deep Q-learning}
We now consider benchmarks that show that the shielded deep $Q$-learning algorithm scales well.
Fig.~\ref{fig:soft5_3} reports on results obtained for a task system with two hard tasks and five soft tasks, and Fig.~\ref{fig:soft6_3} reports on results for a task systems with  three hard tasks and six soft tasks. These two task systems have state spaces of approximately $10^{10}$ and $10^{13}$ states respectively. 
In both the benchmarks we observe that the learnt safe scheduler performs substantially better than a uniform random safe scheduler. 
Further, the operation needed to update the \textit{NN}s at each training step is fast enough and requires only a few milliseconds.
\begin{figure}[t]
\vspace*{-0.2cm}
\begin{minipage}[b]{0.48\linewidth}
\centering
\includegraphics[width=1\textwidth]{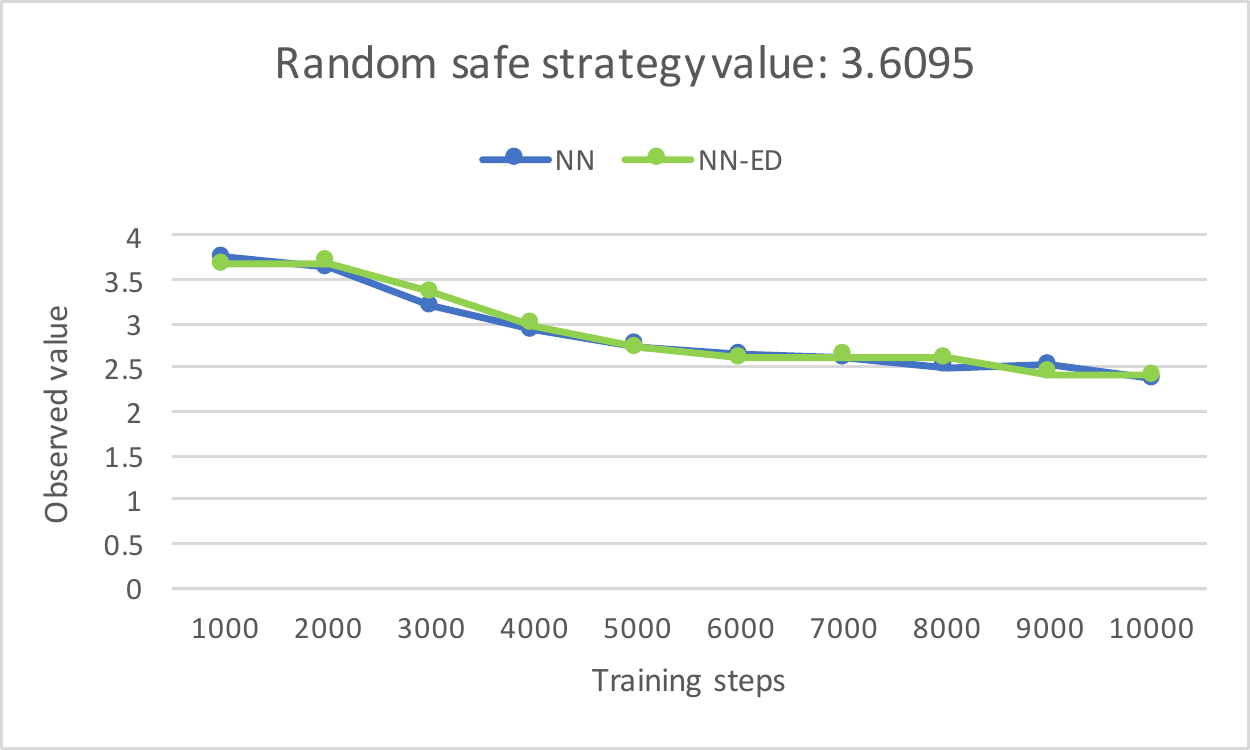}
\caption{\label{fig:soft5_3} Shielded deep Q-learning for 2 hard tasks and 5 soft tasks.}
\end{minipage}
\quad
\begin{minipage}[b]{0.45\linewidth}
\vspace*{-5mm}
\includegraphics[width=1\textwidth]{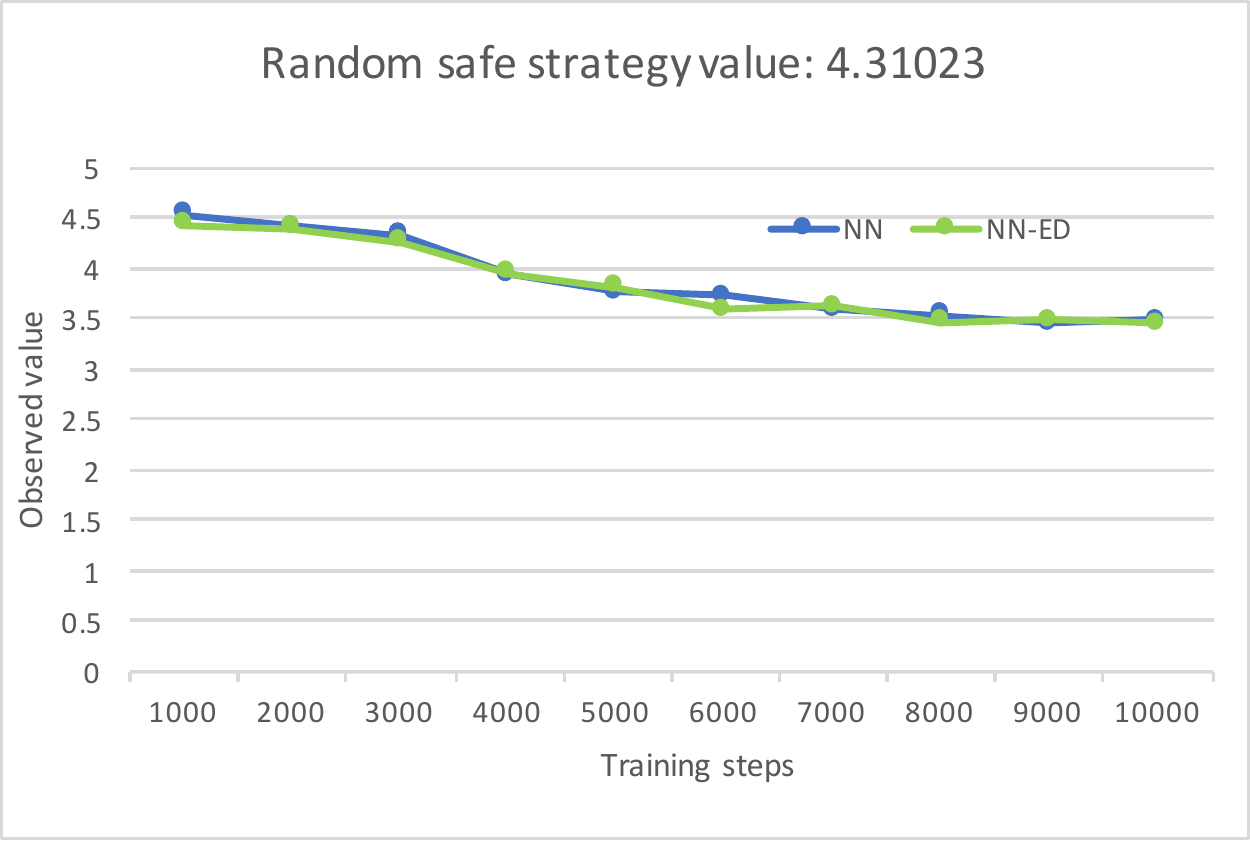}
\caption{\label{fig:soft6_3} Shielded deep Q-learning for 3 hard tasks and 6 soft tasks.}
\end{minipage}
\end{figure}

}


\stam{
We perform experiments where we compare the results of model-based learning, with that of model-free learning.
We implemented the model-based learning for systems with only soft tasks.
We refer to this learning algorithm as \textbf{MB} in this section.
The experimental results show that when trained for the same amount of time, model-based learning outperforms model-free learning in terms of minimising the expected mean-cost.

For model-based learning, we use the Deep Q NN based learning using the OpenAI framework \cite{baselines} that uses the Adam learning algorithm \cite{KB15}.
We obtain our results for the model-free learning by first training the network for different number of training steps, and observing the mean-cost by simulating the execution of the system under the schedule  that is learnt, for a fixed number of steps, which is 10000.
The NN that we train in our experiments is a multi-layer perceptron, and we use $0.001$ as the initial value of $\alpha$ that is adapted by the Adam algorithm as the learning proceeds.
\stam{
For each state, the NN can observe the maximum of the support in the remaining execution time distribution, the remaining time before deadline, and the minimum of the support of the remaining time before the arrival of the next job, and it learns the model based on this observation.
    We consider a system with two hard tasks, and one soft task where the cost of missing the deadline or a soft task is $12$, while assigning a high cost of $10000$ for missing hard deadlines and even with $10000$ steps of training, our model-checking program finds instances of safety violation, that is, a hard deadline can be missed.
    }

In Figures \ref{fig:simplesoft2} and \ref{fig:onlysoft4}, we compare the model-free learning \textbf{NN$_1$} with the model-based learning 
\textbf{MB}, 
and present the results for two different task systems, one in which there are $2$ soft tasks, and the other with $4$ soft tasks.
We use observations of various shapes, and identify the ones that lead to learning a strategy with strong guarantees.
The description of the legend in the figures is as follows.
``MB" stands for model-based learning, while the remaining ones are for model-free learning.
``NN" has the observation shape that consists of three parameters per task: Maximum of the support of the distribution over remaining execution time, remaining time before deadline, and the minimum of the support of the distribution over time before arrival of the next job.
``NN-ED" consists of the first two parameters of ``NN" for each task.
All the remaining observations have one parameter per task.
``NN-DL" has the remaining time before deadline.
``NN-lax" consists of the difference between the remaining time before deadline and the maximum of the support of the distribution over remaining computation time.
It emulates the notion of laxity in real-time scheduling.
``NN-AS" consists of the task ids of active tasks that are sorted by ascending order of remaining time before respective deadlines.
For the tasks that are not active, we use -$1$ to denote this.
For example, considering four tasks, when the times before the respective deadlines are say [5, 3, 0, 6], the corresponding observation is [2, 1, 4, -1]; the task indices start from 1.
``NN-ASLax" consists of the task ids of active tasks that are sorted by ascending order of difference between remaining time before respective deadlines and the maximum of the support of the distribution over remaining execution time.
Here too, we use -$1$ for the inactive tasks.
We see that the observations with shapes ``NN"and ``NN-ED" only yield good training while the other observations do not result into learning the strategy well.

From Figure \ref{fig:simplesoft2} to Figure \ref{fig:soft6_3}, for a given number of training steps, for each of the plots, the data is obtained by averaging over $10$ experiments.
\begin{figure}[t]
\vspace*{-0.2cm}
\begin{minipage}[b]{0.48\linewidth}
\centering
\includegraphics[width=1\textwidth]{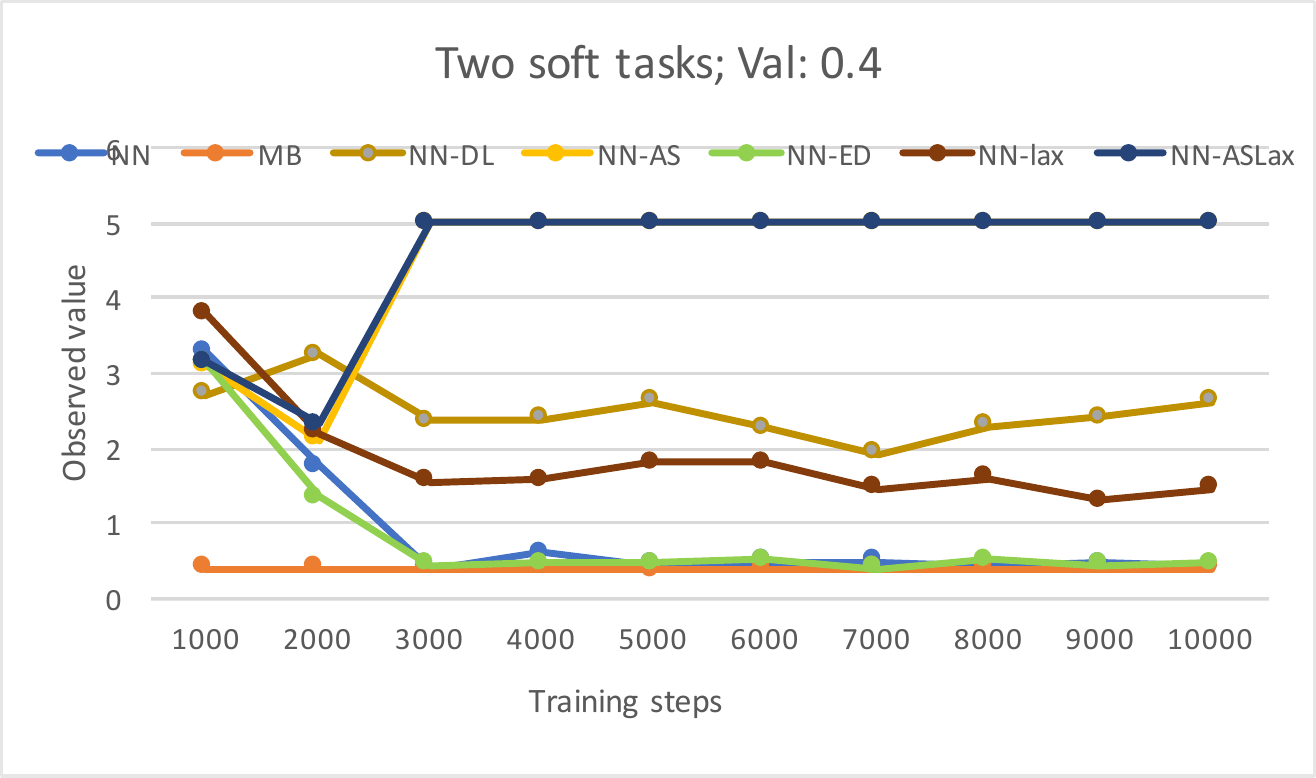}
\caption{\label{fig:simplesoft2} Comparison of \textbf{NN$_1$} and \textbf{MB} for a system with two soft tasks.}
\end{minipage}
\quad
\begin{minipage}[b]{0.45\linewidth}
\vspace*{-5mm}
\includegraphics[width=1\textwidth]{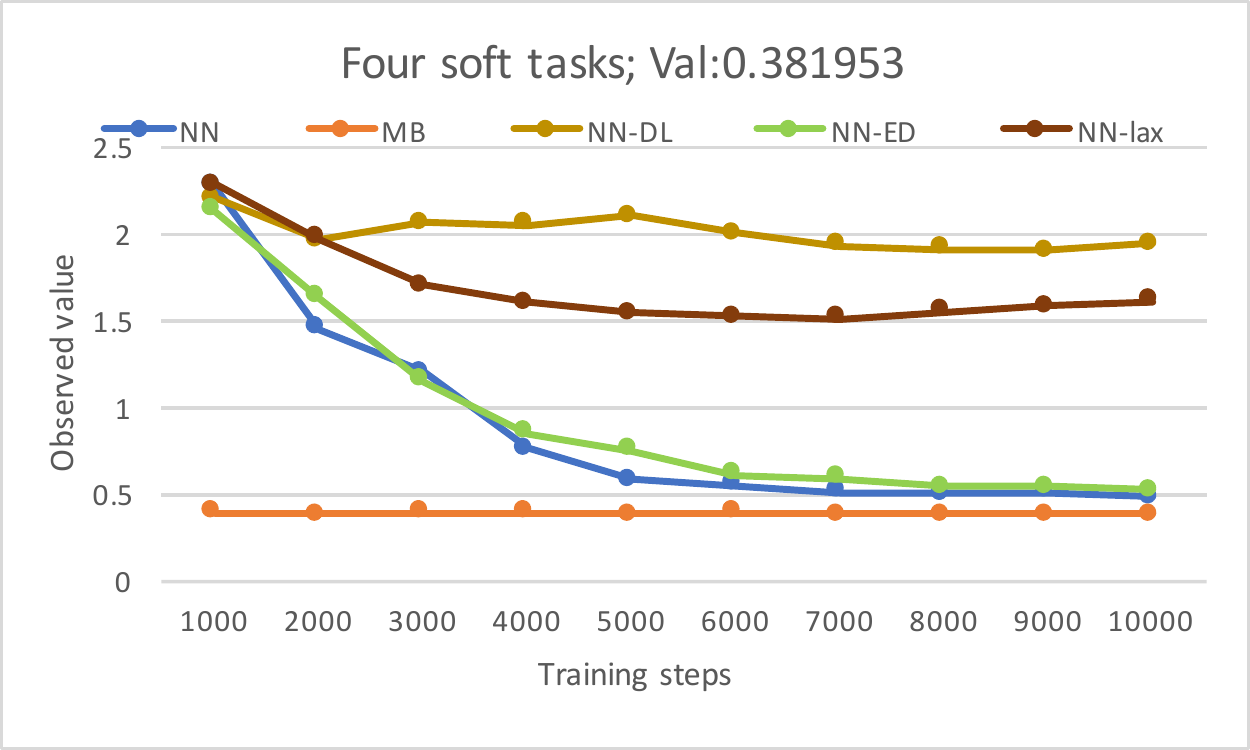}
\caption{\label{fig:onlysoft4} Comparison of \textbf{NN$_1$} and \textbf{MB} for a system with four soft tasks.}
\end{minipage}
\end{figure}

\begin{figure}[h]
\vspace*{-0.2cm}
\begin{minipage}[b]{0.48\linewidth}
\centering
\includegraphics[width=1\textwidth]{figs/fig_soft2_MB.pdf}
\caption{\label{fig:soft2MB} Model-based learning for a system with one hard task and two soft tasks.}
\end{minipage}
\quad
\begin{minipage}[b]{0.5\linewidth}
\vspace*{-5mm}
\includegraphics[width=1\textwidth]{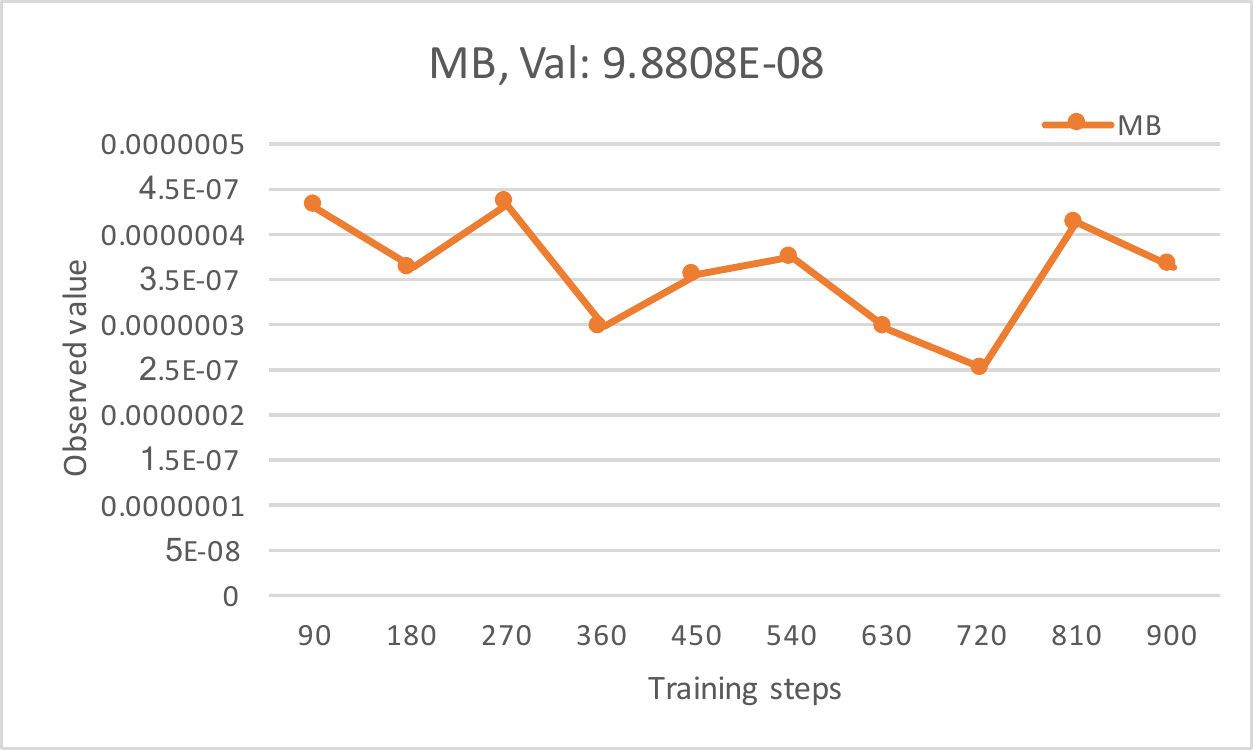}
\caption{\label{fig:hard2MB} Model-based learning for a system with two hard tasks and one soft task.}
\end{minipage}
\end{figure}

In figures \ref{fig:soft2} and \ref{fig:hard2}, we show the comparison between the offline model-free learning \textbf{NN$_2$} and the 
online model-free learning \textbf{NN$_3$}
for two different task systems.
The task system corresponding to Figure \ref{fig:soft2} has one hard task, and two soft tasks, while the task system corresponding to Figure \ref{fig:hard2} has two hard tasks, and one soft task.
We obtain our results for both the observation shapes: {\sf NN} and {\sf NN-ED}.
\begin{figure}[h]
\vspace*{-0.2cm}
\begin{minipage}[b]{0.48\linewidth}
\centering
\includegraphics[width=1\textwidth]{figs/fig_soft2_2_3.pdf}
\caption{\label{fig:soft2} Comparison of \textbf{NN$_2$} and \textbf{NN$_3$} for a system with one hard task and two soft tasks for observation shaped NN and NN-ED.}
\end{minipage}
\quad
\begin{minipage}[b]{0.5\linewidth}
\vspace*{-5mm}
\includegraphics[width=1\textwidth]{figs/fig_hard2_2_3.pdf}
\caption{\label{fig:hard2} Comparison of \textbf{NN$_2$} and \textbf{NN$_3$} for a system with two hard tasks and one soft task for observation shapes NN and NN-ED.}
\end{minipage}
\end{figure}

\stam{
\begin{figure}[h]
\vspace*{-0.2cm}
\begin{minipage}[b]{0.48\linewidth}
\centering
\includegraphics[width=1\textwidth]{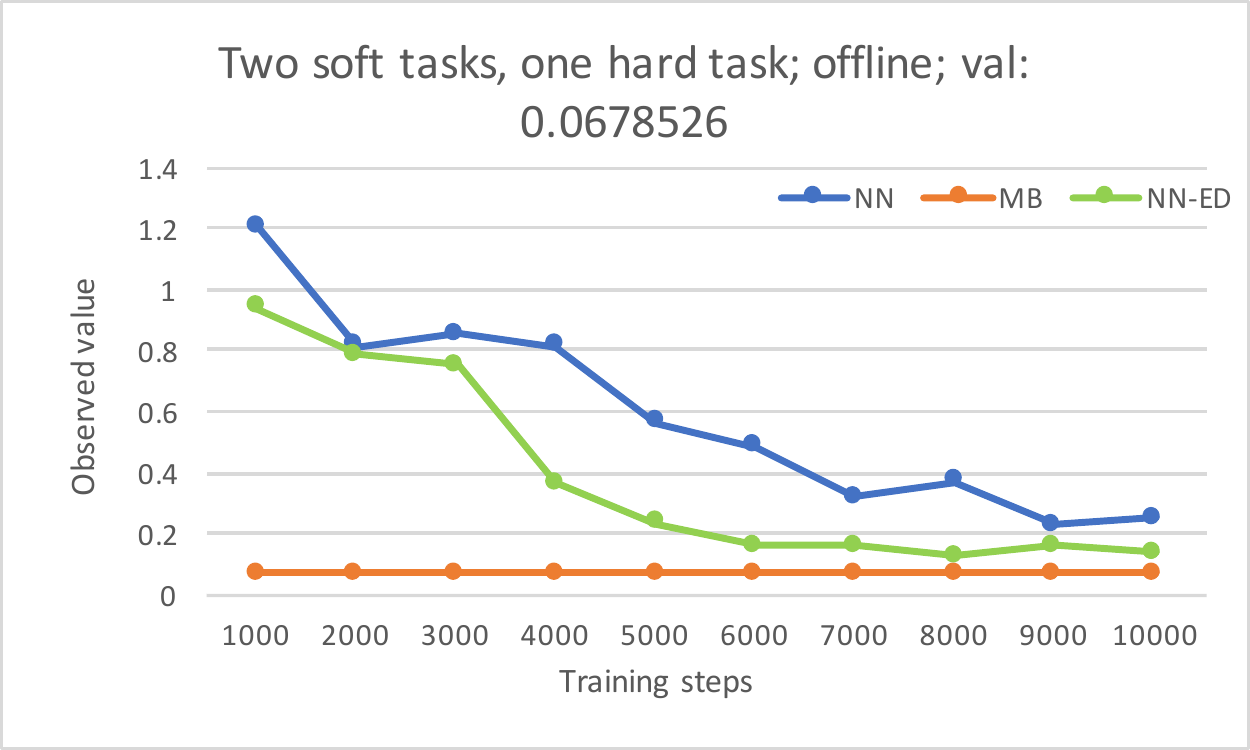}
\caption{\label{fig:soft2} Comparison of \textbf{NN$_2$} and \textbf{MB} for a system with one hard task and two soft tasks.}
\end{minipage}
\quad
\begin{minipage}[b]{0.5\linewidth}
\vspace*{-5mm}
\includegraphics[width=1\textwidth]{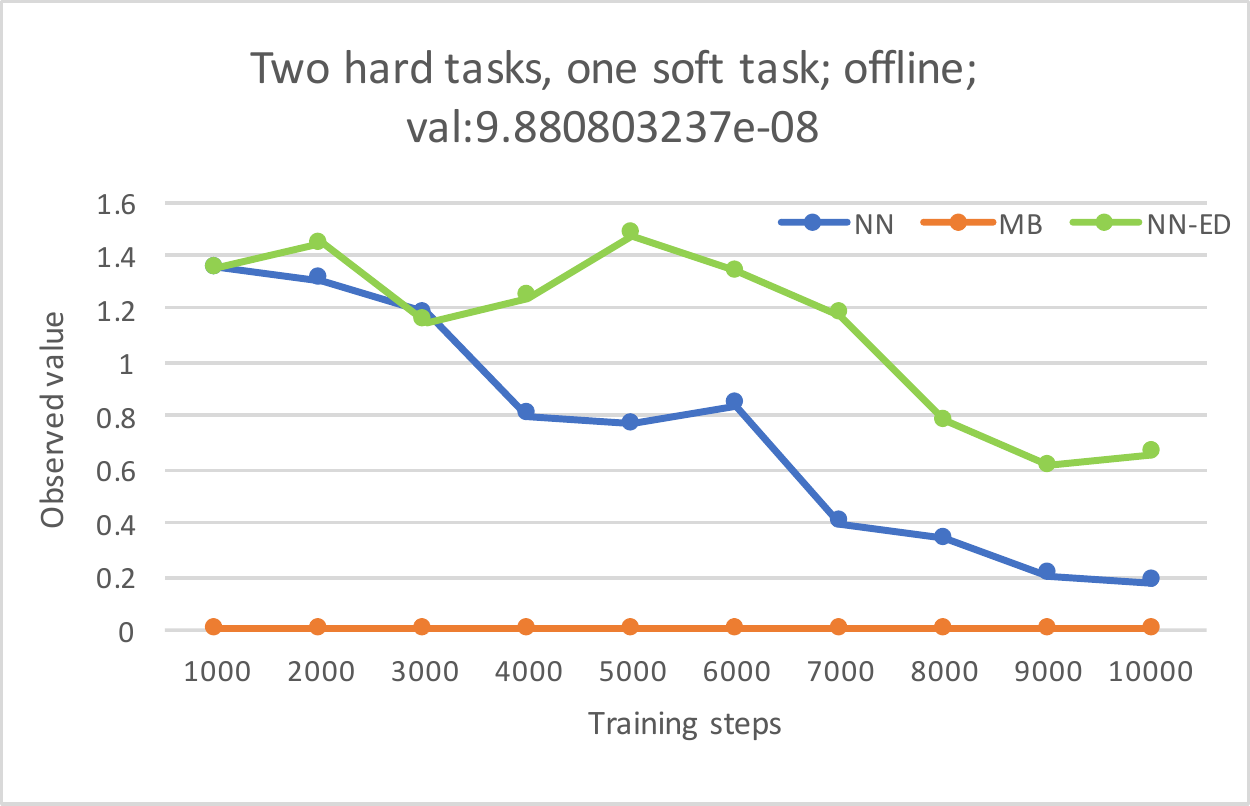}
\caption{\label{fig:hard2} Comparison of \textbf{NN$_2$} and \textbf{MB} for a system with two hard tasks and one soft task.}
\end{minipage}
\end{figure}

\begin{figure}[h]
\vspace*{-0.2cm}
\begin{minipage}[b]{0.48\linewidth}
\centering
\includegraphics[width=1\textwidth]{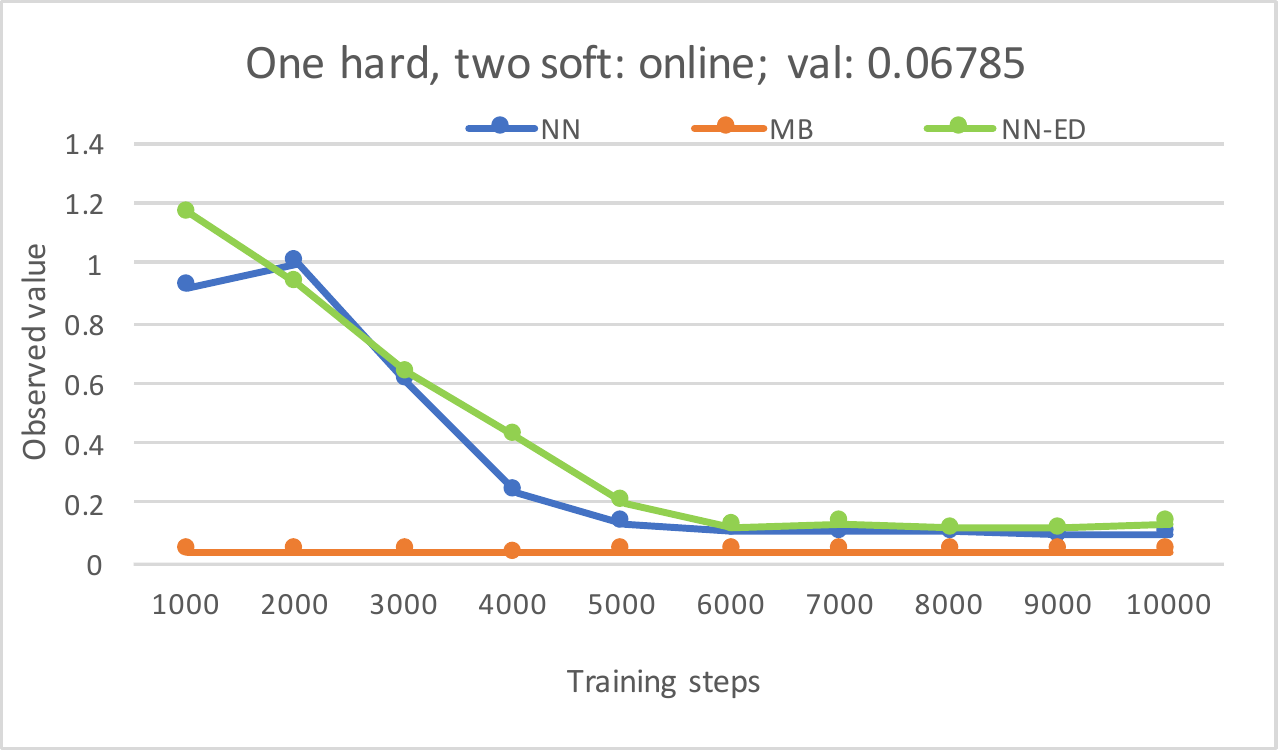}
\caption{\label{fig:soft2_3_5} Comparison of \textbf{NN$_3$} and \textbf{MB$_2$} for a system with one hard task and two soft tasks.}
\end{minipage}
\quad
\begin{minipage}[b]{0.45\linewidth}
\vspace*{-5mm}
\includegraphics[width=1\textwidth]{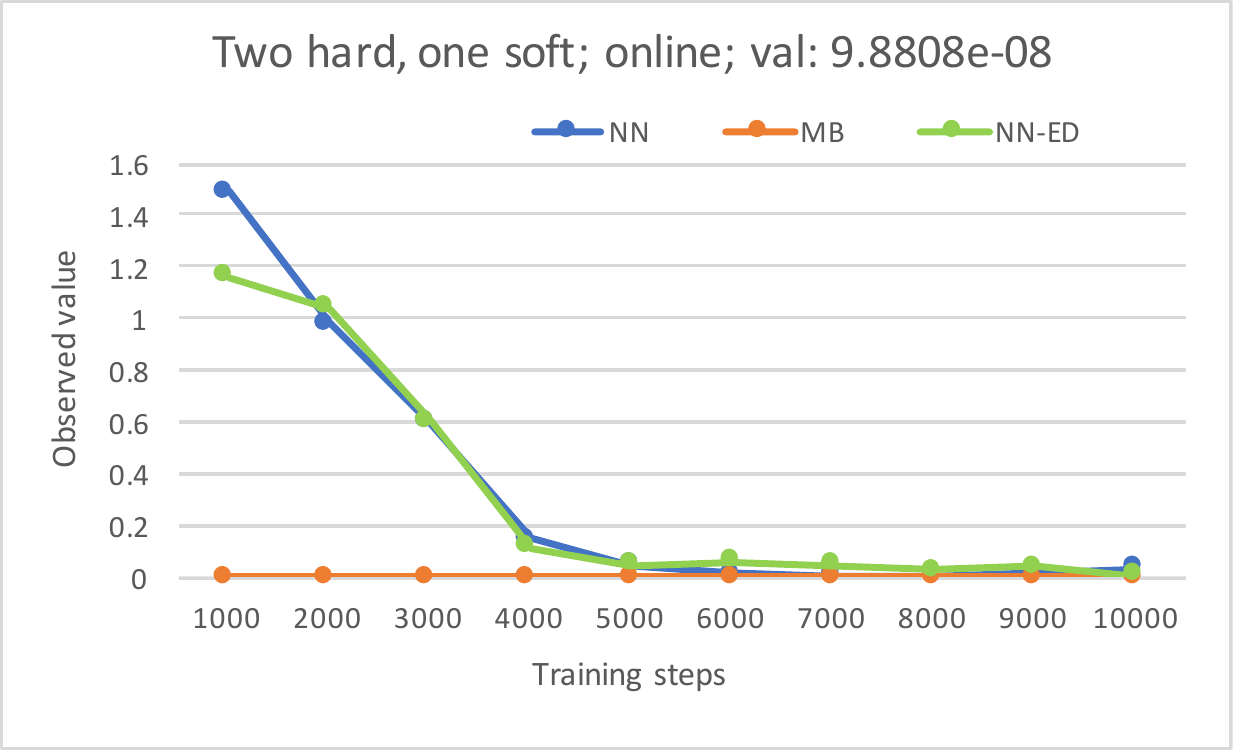}
\caption{\label{fig:hard2_3_5} Comparison of \textbf{NN$_3$} and \textbf{MB$_2$} for a system with two hard tasks and one soft task.}
\end{minipage}
\end{figure}
}

\begin{figure}[h]
\vspace*{-0.2cm}
\begin{minipage}[b]{0.48\linewidth}
\centering
\includegraphics[width=1\textwidth]{figs/fig_soft5_3.pdf}
\caption{\label{fig:soft5_3} \textbf{NN$_3$} for a system with two hard tasks and five soft tasks.}
\end{minipage}
\quad
\begin{minipage}[b]{0.45\linewidth}
\vspace*{-5mm}
\includegraphics[width=1\textwidth]{figs/fig_soft6_3.pdf}
\caption{\label{fig:soft6_3} \textbf{NN$_3$} for a system with three hard tasks and six soft tasks.}
\end{minipage}
\end{figure}

We note that the two plots corresponding to ``NN" and ``NN-ED" in each figure, for a sufficient number of training steps, learns a strategy such that the output is close to the optimal one as given by the probabilistic model-checker STORM using the algorithm suggested in \cite{ggr18} assuming that all the parameters for every task are known.
At the top of each figure, ``val" specifies the value given by STORM for the corresponding task system.
We also note that the online model-free learning \textbf{NN}$_3$ works better than the offline learning \textbf{NN}$_2$.

In figures \ref{fig:soft5_3} and \ref{fig:soft6_3}, we consider large task systems; the one corresponding to Figure \ref{fig:soft5_3} has two hard tasks and five soft tasks, while the one in Figure \ref{fig:soft6_3} has three hard tasks and six soft tasks.
We observed that we cannot do model-based learning with state-of-the-art probabilistic model-checker tools like STORM \cite{DJKV17} due to the very large state space of the MDP corresponding to the task systems.
The figures contain the results of \textbf{NN}$_3$ training on these two task systems.
At the top of each figure, we also write the value observed over $10000$ steps corresponding to a random safe strategy.

\stam{
In Figure \ref{fig:MB_steps}, we show the results of collecting statistics for number of training steps varying from 100 to 900, at an interval of 100, and from 1000 to 10000 at an interval of 1000.
We observe that even for collecting statistics for number of steps as small as $100$, the model-based learning gives good results.
The legend in the figure is described below.
$2S$-$MB1$ stands for the system of two soft tasks used in Figure \ref{fig:simplesoft2};
$4S$-$MB1$ stands for the system of four soft tasks used in Figure \ref{fig:onlysoft4};
$2SH$-$MB1$ stands for the system of two soft tasks, and one hard task used in Figure \ref{fig:soft2} and Figure \ref{fig:soft2_3_5};
$2HS$-$MB1$ stands for the system of two hard tasks, and one soft task, used in Figure \ref{fig:hard2} and Figure \ref{fig:hard2_3_5};
All the above plots correspond to the learning method $MB_1$.
The plots corresponding to learning method $MB_2$ are the following.
$2SH$-$MB2$ stands for the system of two soft tasks, and one hard task used in Figure \ref{fig:soft2} and Figure \ref{fig:soft2_3_5};
$2HS$-$MB2$ stands for the system of two hard tasks, and one soft task, used in Figure \ref{fig:hard2} and Figure \ref{fig:hard2_3_5}.
}

\begin{figure}[h]
\vspace*{-0.2cm}
\begin{minipage}[b]{0.45\linewidth}
\vspace*{-5mm}
\includegraphics[width=1\textwidth]{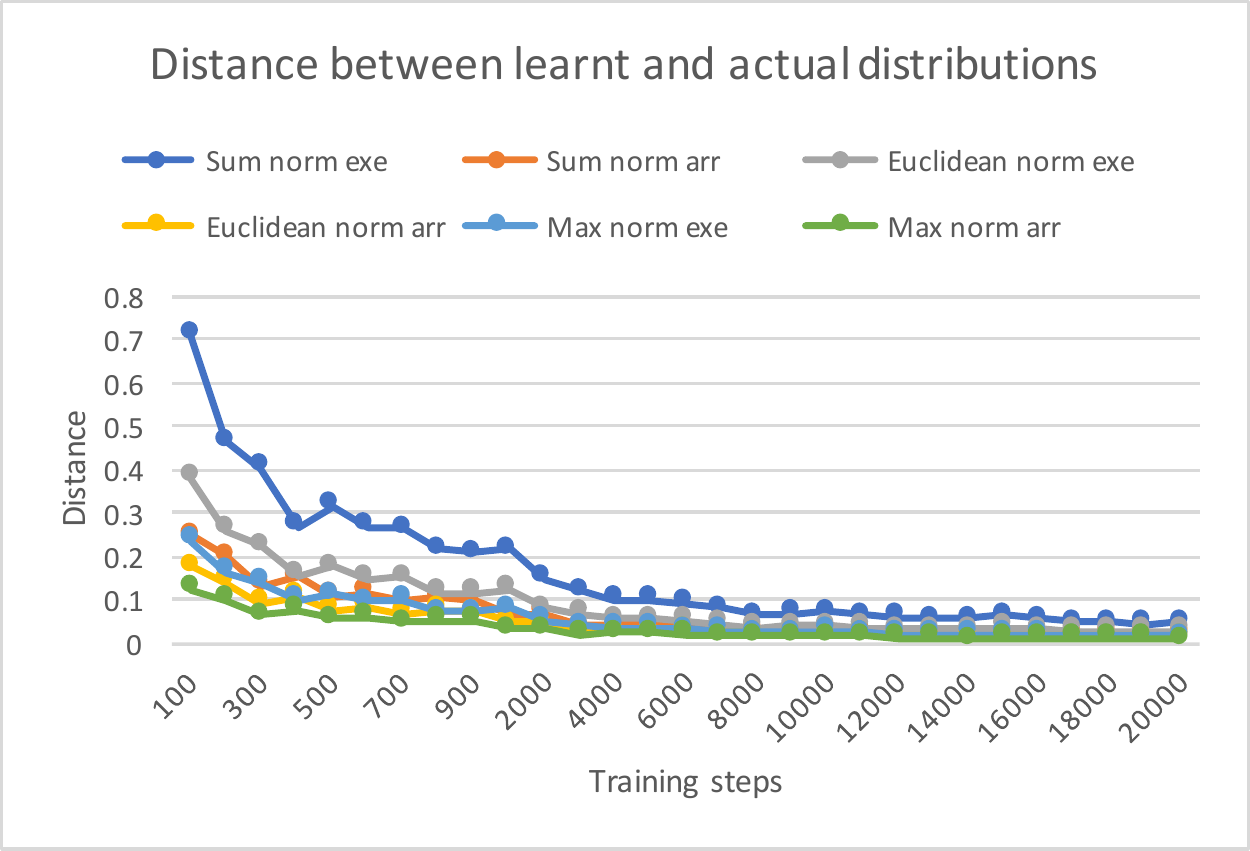}
\caption{\label{fig:onlysoft4_dist} L1 (sum), L2 (Euclidean), L-infinity (max) norm distances between the learnt and the actual distributions as a function of number of training steps for a system with $4$ soft tasks.}
\end{minipage}
\quad
\begin{minipage}[b]{0.48\linewidth}
\centering
\includegraphics[width=1\textwidth]{figs/fig_soft6_dist.pdf}
\caption{\label{fig:soft6_dist}L1 (sum), L2 (Euclidean), L-infinity (max) norms distances between the learnt and the actual distributions as a function of number training steps for a system with 
$6$ soft tasks.}
\end{minipage}
\end{figure}
Corresponding to model-based learning, in Figure \ref{fig:onlysoft4_dist}, we report various kinds of norms to measure the distance between the learnt distribution and the actual distribution against varying number of training steps for the task system used in Figure \ref{fig:onlysoft4} that has $4$ soft tasks.
For a given number of training steps, we do an averaging over $50$ experiments to report the distance values corresponding to each of the computation time distribution and the inter-arrival time distribution.
For each kind of norm, for each experiment, and for each of computation time distribution and inter-arrival time distribution, we obtain a single value considering the corresponding distribution over all soft tasks in the system.
In Figure \ref{fig:soft6_dist}
we compute the distance measures between the learnt and the actual distributions for a task system with
six soft tasks where the soft tasks are the ones appearing in the task system used in Figure \ref{fig:soft6_3}.
Note that in the presence of hard tasks, theorems \ref{thm:good_sampling} and \ref{thm:good_efficient_sampling} imply that with sufficient number of samples, the plots should be similar for the 
learning method as proposed in Section 
\ref{sec:model_based}.
\stam{
\begin{figure}[h]
\vspace*{-0.2cm}
\begin{minipage}[b]{0.48\linewidth}
\centering
\includegraphics[width=1\textwidth]{figs/fig_soft6_dist.pdf}
\caption{\label{fig:soft6_dist}L1 (sum), L2 (Euclidean), L-infinity (max) norms distances between the learnt and actual distribution as a function of number training steps for MB$_1$.}
\end{minipage}
\quad
\begin{minipage}[b]{0.45\linewidth}
\vspace*{-5mm}
\includegraphics[width=1\textwidth]{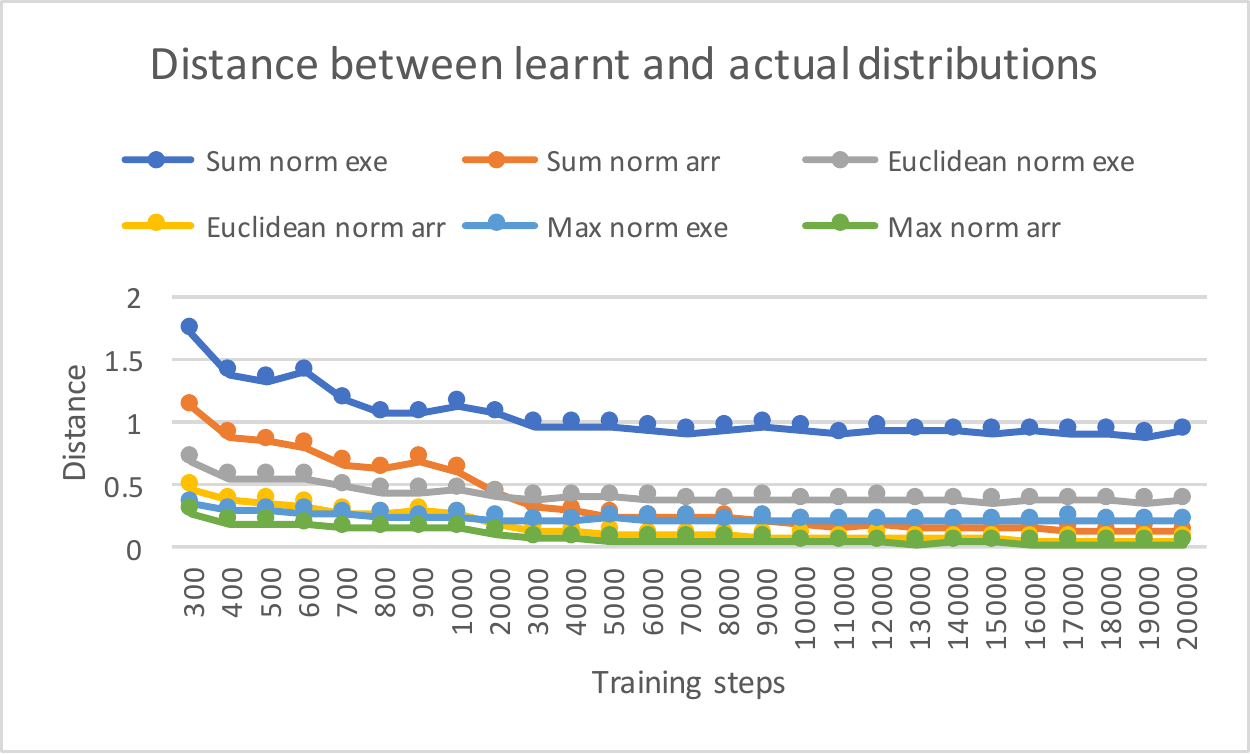}
\caption{\label{fig:soft6_dist2} L1 (sum), L2 (Euclidean), L-infinity (max) norms distances between the learnt and actual distribution as a function of number training steps for MB$_2$.}
\end{minipage}
\end{figure}
}

We also compare in Table \ref{tab:time_comp} the time for model-free learning between the case where we use NN as the observation shape, and the case where we use NN-ED as the observation shape.
The training is done for $10000$ time steps for the different task systems that we consider here.
\begin{table}[t]
    \centering
 \begin{tabular}{ |c|c|c| } 
 \hline
 & \textbf{NN} training & \textbf{NN}-ED training\\ 
 \hline
Task system & time in seconds & time in seconds\\ [0.5ex] 
 \hline\hline
 Two soft (\textbf{NN}$_1$) & 58.2849 & 54.6377\\ 
 \hline
 Four soft (\textbf{NN}$_1$) & 62.6402 & 55.8623\\
 \hline
 One hard, two soft (\textbf{NN}$_2$) & 56.0182 & 56.3504\\
 \hline
 Two hard, one soft (\textbf{NN}$_2$) & 62.6296 & 52.3584\\
 \hline
 One hard, two soft (\textbf{NN}$_3$) & 65.5375 & 55.0723\\
 \hline
 Two hard, one soft (\textbf{NN}$_3$) & 68.2996 & 63.247\\
 \hline
 Two hard, five soft (\textbf{NN}$_3$) & 82.149 & 69.0357\\
 \hline
 Three hard, six soft (\textbf{NN}$_3$) & 462.9791 & 454.0671\\
 \hline
\end{tabular}
    \caption{Comparison between time to train with three parameters per task (NN), and the first two per task for various task systems (NN-ED)}
    \label{tab:time_comp}
\end{table}
We observe that training with the observation shape NN-ED takes slightly less time.
}

\bibliographystyle{splncs04}
\bibliography{refs}


\end{document}